\documentclass[journal]{IEEEtran}
\IEEEoverridecommandlockouts
\usepackage{cite}
\usepackage{algorithm}
\usepackage{algpseudocode}
\usepackage{graphicx}
\usepackage{textcomp}
\usepackage{xcolor}
\usepackage{bbm}
\def\BibTeX{{\rm B\kern-.05em{\sc i\kern-.025em b}\kern-.08em
T\kern-.1667em\lower.7ex\hbox{E}\kern-.125emX}}
\usepackage{afterpage}
\usepackage{mathtools}
\usepackage{cuted}
\usepackage{subfigure}
\usepackage{array}
\usepackage{epsf}
\usepackage{times}
\usepackage{epsfig}
\usepackage{epstopdf}
\usepackage{hyperref}
\usepackage{mathtools}
\usepackage{amsthm}
\usepackage{amssymb}
\usepackage{comment}
\usepackage{siunitx}
\usepackage{lipsum}
\usepackage{kantlipsum}
\usepackage{dblfloatfix}
\usepackage{adjustbox}
\usepackage{makecell}
\usepackage[font=small]{caption}
\usepackage{subcaption}
\usepackage{amsmath}
\usepackage{amsfonts}
\usepackage{mathtools}
\usepackage{amscd}
\usepackage{bm}
\usepackage{subfigure}
\usepackage{float}

\usepackage{tikz}
\usepackage{pgfplotstable}
\usepackage{rotate}

\definecolor{rubblue}{cmyk}{1,0.5,0,0.6}
\definecolor{rubgreen}{cmyk}{0.5,0,1,0}
\definecolor{rubgray}{cmyk}{0.03,0.03,0.03,0.1}

\usepgfplotslibrary{units}

\usetikzlibrary{%
patterns,%
calc,%
fit,%
arrows,%
plotmarks,%
shadows,%
chains,%
shapes%
}

\tikzset{>=latex'} 
\tikzstyle{every picture}+=[remember picture] 
\pgfdeclarelayer{background}
\pgfdeclarelayer{foreground}
\pgfsetlayers{background,main,foreground}

\tikzstyle{blueblock}=[draw=rubblue, rectangle, thick, drop shadow, minimum width=20mm, minimum height=8mm,fill=rubblue!20, text width=20mm, text centered]
\tikzstyle{bluebox}=[draw=rubblue, rectangle, thick, drop shadow, minimum width=8mm, minimum height=8mm,fill=rubblue!20, text width=8mm, text centered]
\tikzstyle{greenblock}=[draw=rubgreen, rectangle, thick, drop shadow, minimum width=20mm, minimum height=8mm,fill=rubgreen!20, text width=20mm, text centered]
\tikzstyle{dot} = [draw, circle, minimum size=0.2pt,scale=0.3,fill=black,black]
\tikzstyle{smalldot} = [draw, circle, minimum size=0.1pt,scale=0.2,fill=black,black]
\tikzstyle{reddot}  =[draw,circle,minimum size=0.2pt,scale=0.8,fill=red,thin]
\tikzstyle{greendot}  =[draw,circle,minimum size=0.2pt,scale=0.8,fill=Green,thin]
\tikzstyle{bluedot}  =[draw,circle,minimum size=0.2pt,scale=0.8,fill=blue,thin]
\tikzstyle{whitedot}=[draw,circle,minimum size=0.2pt,scale=0.8,fill=white,thin]
\tikzstyle{blackdot} = [draw, circle, minimum size=0.2pt,scale=0.7,fill=black,black]
\tikzstyle{sum} = [drop shadow, draw=rubblue, thick, fill=rubblue!20, circle]
\tikzstyle{relay} = [blueblock, minimum width=5mm, minimum height=20mm, text width=5mm, rounded corners=2pt]
\tikzstyle{relay2} = [blueblock, minimum width=5mm, minimum height=15mm, text width=5mm, rounded corners=2pt]
\tikzstyle{relay3} = [blueblock, minimum width=5mm, minimum height=25mm, text width=5mm, rounded corners=2pt]
\tikzstyle{relay4} = [blueblock, minimum width=5mm, minimum height=10mm, text width=5mm, rounded corners=2pt]
\tikzstyle{relay5} = [blueblock, minimum width=5mm, minimum height=50mm, text width=5mm, rounded corners=2pt]
\tikzstyle{relay6} = [blueblock, minimum width=5mm, minimum height=5mm, text width=5mm, rounded corners=2pt]
\tikzstyle{circgreen} = [draw, circle, inner sep=2pt, fill=rubgreen, drop shadow, thick]
\tikzstyle{circwhite} = [draw, circle, inner sep=2pt, fill=white, drop shadow, thick]
\tikzstyle{circdashed} = [draw, dashed, circle, inner sep=2pt, fill=rubgray, drop shadow, thick]
\tikzstyle{vertbox} = [rectangle, draw=rubblue, thick, rotate=90, text centered, minimum width=16.5mm, minimum height=8mm, text width=16.5mm, inner sep=0pt, fill=rubblue!20, drop shadow]
\tikzstyle{vertboxb} = [rectangle, draw=rubblue, thick, rotate=90, text centered, minimum width=16.5mm, minimum height=8mm, text width=16.5mm, fill=rubblue!20, drop shadow]
\tikzstyle{vertboxshort} = [rectangle, draw=rubblue, thick, rotate=90, text centered, minimum width=10mm, minimum height=8mm, text width=10mm, inner sep=0pt, fill=rubblue!20, drop shadow]
\tikzstyle{smalldotgreen} = [draw=rubgreen, circle, minimum size=0.2pt,scale=0.8,fill=rubgreen!20]
\tikzstyle{antenna} = [regular polygon, regular polygon sides=3, draw, shape border rotate=180, minimum size=0.2pt, scale=0.3]

\tikzstyle{poly} = [regular polygon, regular polygon sides=6, shape aspect=0.5, minimum width=1.5cm, minimum height=0.35cm, draw, dashed]

\definecolor{cff9e00}{RGB}{255,158,0}
\definecolor{c4fff00}{RGB}{79,255,0}
\definecolor{cff0012}{RGB}{255,0,18}
\definecolor{c00c5ff}{RGB}{0,197,255}
\definecolor{c046f00}{RGB}{4,111,0}
\definecolor{c004b9d}{RGB}{0,75,157}

\newlength{\mylen}
\usetikzlibrary{petri}
\usetikzlibrary{shapes}
\usetikzlibrary{positioning,arrows,patterns}
\usepackage{scalefnt}
\usetikzlibrary{calc,decorations.markings}
\pgfplotsset{compat=1.10}
\usetikzlibrary{arrows.meta}
\usepackage[columnwise,switch,mathlines]{lineno} 

\usepackage{pgfplots}
\pgfplotsset{compat=newest}
\usepgfplotslibrary{groupplots}

\IEEEoverridecommandlockouts
\newtheorem{theorem}{Theorem}
\newtheorem{corollary}{Corollary}

\newtheorem{lemma}{\textbf{Lemma}}

\newtheorem{assmon}{\textbf{Assumption}}

\begin{document}
\title{A Unified Framework for Fair and Personalized Decentralized Learning under Communication Constraints}

\author{\IEEEauthorblockN{Krishnendu S. Tharakan, \IEEEmembership{Member, IEEE}, Carlo Fischione, \IEEEmembership{Fellow, IEEE}}
 \thanks{
 K. S. Tharakan and C. Fischione are with the School of Electrical Engineering and Computer Science, KTH Royal Institute of Technology, Stockholm, Sweden. Email: \{tharakan, carlofi\}@kth.se.} 
 }

\setlength{\belowdisplayskip}{2pt}
\maketitle
\begin{abstract}
Decentralized learning systems aim to collaboratively train models across multiple clients without relying on a central coordinator. While decentralization improves scalability, privacy, and robustness, it also exacerbates three fundamental challenges: statistical heterogeneity across clients, fairness in client-level performance, and stringent communication constraints. This raises a natural question: \emph{how fair can decentralized learning be under limited communication?}
We address this question by presenting a unified framework for decentralized learning under communication constraints, bringing together graph-based personalization, agnostic fairness, and compressed event-triggered communication. Specifically, we propose a new algorithm DMFL-SQ, a decentralized multi-task learning algorithm that couples personalized model training over a communication graph with an agnostic mixture fairness objective, while reducing communication through sparsification, quantization, and event-triggered synchronization. We establish convergence guarantees for general non-convex objectives
and show that DMFL-SQ achieves an
$\mathcal{O}(T^{-1/2})$ rate in expected squared Moreau-envelope
stationarity despite sparse, quantized, and event-triggered
communication. We further derive PAC-Bayes generalization guarantees for the fairness-aware mixture objective. Experiments on CIFAR-10 and the real heterogeneous MUSMET EEG dataset demonstrate that DMFL-SQ substantially reduces communication while maintaining predictive performance and improving fairness across clients. Together, our theoretical and empirical results show that personalization, fairness, and communication efficiency can be jointly achieved in decentralized learning while preserving the dominant convergence rate. 
\end{abstract}
\begin{IEEEkeywords}
		decentralized learning, fairness, multi-task federated learning, sparse communication.
	\end{IEEEkeywords}
\IEEEpeerreviewmaketitle

\section{Introduction}
\label{sec:intro}
Federated learning (FL) enables multiple clients to collaboratively train machine learning models without directly sharing their local data~\cite{mcmahan2017communication, amiri20}. In the classical FL setting, a central server coordinates the training process by collecting local model updates from participating clients and aggregating them into a single global model. This paradigm has enabled large-scale learning across distributed data sources, including mobile devices, edge sensors, and networked intelligent systems~\cite{tharakan25, krishnendu22}. However, practical deployments of FL face several fundamental challenges. Client data are often statistically heterogeneous, making a single global model suboptimal for many users. Optimizing only the average performance may also lead to unfair outcomes, where clients with scarce, noisy, or non-representative data experience disproportionately high loss~\cite{litian20}. In addition, frequent communication of high-dimensional model updates creates a severe bottleneck in bandwidth-limited, energy-constrained, and unreliable networks.

A natural way to address statistical heterogeneity is through personalized or multi-task learning. Instead of enforcing a single shared model across all clients, each client learns a personalized model while still benefiting from collaboration with related clients. In graph-based personalized learning, clients are represented as nodes of a communication or similarity graph, and neighboring models are encouraged to remain close through a graph-based regularization mechanism. This allows the learning process to exploit statistical similarity among clients while preserving local adaptability. Such formulations have been studied in multi-task and personalized federated learning~\cite{smith2017federated,NEURIPS2020_tdinh,NEURIPS2020_alireza, tharakan25}. Nevertheless, most existing personalized FL methods either rely on centralized coordination or do not explicitly address fairness across clients.

Fairness-aware federated learning has emerged as an important direction for mitigating disparities in client-level performance. Rather than optimizing only the average loss, fairness-aware approaches seek to improve the performance of underrepresented or high-loss clients. Methods based on distributionally robust optimization, agnostic federated learning, and conditional value-at-risk (CVaR) aim to control the worst-case or upper-tail client risk, thereby improving robustness to heterogeneous client distributions~\cite{li2019fair,lahoti2020fairness,hashimoto2018fairness}. However, these methods typically assume centralized aggregation and frequent global synchronization. Their integration into fully decentralized networks with limited communication remains largely unexplored.

Communication efficiency is another central challenge in decentralized learning~\cite{tang18,pmlr-v97-koloskova19a}. Unlike server-based FL, fully decentralized systems require clients to exchange information directly with their neighbors over a communication graph. This removes the need for a central coordinator and improves robustness to server failures, but it also makes communication more complex. Each client must repeatedly transmit model updates or gradients over bandwidth-limited links, and these updates are often high-dimensional. Compression, sparsification, quantization, and event-triggered communication can substantially reduce this cost~\cite{singh23,stich18}. However, aggressive communication reduction may introduce additional bias, network disagreement, and instability, especially in non-convex learning problems. Establishing rigorous convergence guarantees under decentralized
graph coupling, fairness regularization, and compressed
event-triggered communication is therefore technically challenging.

Despite substantial progress on personalization, fairness, and communication efficiency, these components are usually studied in isolation. To the best of our knowledge, existing work does not provide a unified theoretical and algorithmic framework that simultaneously captures graph-based personalization, agnostic fairness, and compressed decentralized communication under general non-convex objectives, and DMFL-SQ introduces a new integrated decentralized algorithmic framework.
This paper addresses the following question: {\textbf{\emph{how fair can decentralized learning be under limited communication?}} We answer this question by developing a unified framework for decentralized learning under communication constraints, bringing together three practical components: graph-based personalization to handle statistical heterogeneity, agnostic fairness to control client-level performance disparities, and sparsified, quantized, and event-triggered communication to reduce bandwidth consumption.

DMFL-SQ introduces a new integrated decentralized algorithmic framework. Existing compressed decentralized optimization methods, such as SPARQ-SGD~\cite{singh23}, CHOCO-SGD~\cite{pmlr-v97-koloskova19a}, and related variants, primarily target consensus optimization or a shared global model. In contrast, \textsc{DMFL-SQ} optimizes a graph-regularized personalized objective in which client models need not agree. Similarly, agnostic federated learning methods address worst-client or mixture-risk robustness, but are typically studied in centralized or server-assisted settings and do not account for sparse-quantized event-triggered neighbor communication over a decentralized graph~\cite{chenzhixiong24, chen24, mohri2019agnostic}.

In contrast, \textsc{DMFL-SQ} tackles decentralized multi-task learning with graph-based personalization and agnostic client-level fairness. This coupling creates new analytical challenges because the fairness envelope is generally nonsmooth, the optimization variable is the full personalized model collection, and communication compression introduces stale neighbor-copy errors into the graph-coupling direction. The proposed algorithm addresses these challenges by using scalar fairness-consensus to identify the active agnostic mixture component and by communicating only sparse-quantized model-copy innovations when the event trigger is activated. The accompanying analysis establishes stationarity of the fairness-aware objective while preserving the standard $\mathcal{O}(T^{-1/2})$ non-convex stationarity rate, and explicitly accounting for stochastic-gradient noise, stale neighbor-copy error, compression, and event-triggering. We also derive PAC-Bayes generalization guarantees for the agnostic mixture objective, thereby characterizing both optimization and statistical aspects of fairness-aware decentralized learning.
\vspace{-0.3cm}
\subsection{Related Work}
A straightforward implementation of FL with stochastic gradient descent requires frequent exchange of high-dimensional gradient or model-update vectors, which may contain millions of parameters. This creates substantial communication overhead, particularly in wireless edge networks with limited radio resources. One common approach to mitigate this bottleneck is to compress the gradient information before transmission~\cite{pmlr-v97-koloskova19a}. A simple example is sign-based compression, commonly known as SignSGD, where only the sign of each gradient coordinate is communicated~\cite{bernstein18}. These methods improve scalability and avoid a single point of failure, but most existing decentralized optimization algorithms focus on learning a single global model or minimizing an average objective. In contrast, this work studies a fully decentralized setting in which clients learn personalized models while jointly accounting for fairness and communication constraints.

In decentralized settings, Assran et al.~\cite{pmlr-v97-assran19a} and Tatarenko and Touri~\cite{tatarenko17} analyze stochastic gradient-push methods for non-convex objectives, where the main focus is on approximating distributed averaging over directed networks rather than reducing communication through gradient compression. In contrast, Tang et al.~\cite{tang18} study unbiased stochastic compression mechanisms for exchanging gradient or model information across decentralized nodes. Personalized FL addresses statistical heterogeneity by allowing clients to learn task-specific models adapted to their local data~\cite{smith2017federated,NEURIPS2020_tdinh, youchaoqun23}. These methods improve local adaptation under heterogeneous data, but they generally do not incorporate agnostic client-level fairness objectives or compressed event-triggered communication in fully decentralized networks.

Communication reduction has been widely studied through sparsification~\cite{aji2017sparse}, quantization~\cite{alistarh2017qsgd}, and error-feedback correction~\cite{karimireddy2019error}. Event-triggered communication further reduces bandwidth consumption by allowing clients to communicate only when local updates are sufficiently informative. SPARQ-SGD~\cite{singh23}, for example, combines sparsification, quantization, and event-triggered updates for decentralized optimization. Recent works have also established non-convex convergence guarantees for decentralized SGD under compression, network constraints, and related communication limitations~\cite{pmlr-v202-yan23a,xinleiyi23,koloskova2020unified}. However, these methods typically optimize an average objective or a single global model, and do not analyze the joint effect of graph-based personalization, agnostic fairness, and compressed decentralized communication.

The above directions address important aspects of distributed learning, but largely in isolation: personalization handles statistical heterogeneity, fairness-aware FL controls client-level disparities, and compressed decentralized optimization reduces communication, summarized in Table~\ref{tab:lit_review}. In contrast, DMFL-SQ provides a unified framework that jointly incorporates graph-based personalization, agnostic fairness, and sparsified, quantized, event-triggered decentralized communication, while providing both nonconvex convergence and PAC-Bayes generalization bounds for the resulting fairness-aware mixture objective.

\begin{table}[h]
\centering
\caption{Literature Review}
\label{tab:lit_review}
\renewcommand{\arraystretch}{1.15}
\setlength{\tabcolsep}{3.5pt}
\begin{tabular}{|p{2.9cm}|c|c|c|c|c|c|c|}
\hline
\textbf{Feature} & \cite{lian2017can}& \cite{pmlr-v97-koloskova19a}& \cite{smith2017federated}& \cite{mohri2019agnostic} & \cite{singh23} & \cite{li24}\cite{chen24}&\textbf{Proposed} \\
\hline
{Fully decentralized} 
& \checkmark & \checkmark &  & &  \checkmark & &  \checkmark \\
\hline
\makecell[l]{Personalized} 
&  &  & \checkmark &  &  && \checkmark \\
\hline
{Fairness-aware objective} 
&  &  &  & \checkmark &  && \checkmark \\
\hline
\makecell[l]{Compressed\\communication} 
& \checkmark & \checkmark &  &  & \checkmark & \checkmark &\checkmark \\
\hline
\makecell[l]{Event-triggered\\communication} 
&  &  &  &  & \checkmark & &\checkmark \\
\hline
\makecell[l]{Non-convex\\convergence guarantee} 
& \checkmark &  &  &  & \checkmark && \checkmark \\
\hline
\makecell[l]{Generalization\\guarantee} 
&  &  &  & \checkmark &  && \checkmark \\
\hline
\end{tabular}
\end{table}
\vspace{-0.2cm}
\subsection{Contributions and Organization}
The main contributions of this paper are summarized as follows.
\begin{itemize}
    \item \textbf{Unified decentralized fair multi-task learning framework:}
    We formulate decentralized learning under communication constraints as a unified problem involving graph-based personalization, agnostic fairness, and compressed event-triggered communication. This provides a single framework for jointly studying personalization, client-level fairness, and bandwidth efficiency in fully decentralized networks.

    \item \textbf{Communication-efficient decentralized algorithm:}
    We propose \textsc{DMFL-SQ}, a decentralized multi-task learning algorithm that combines local stochastic updates, graph-based model coupling, sparsified and quantized communication, and event-triggered synchronization. The algorithm reduces communication overhead while preserving fairness-aware collaboration among heterogeneous clients.

    \item \textbf{Non-convex convergence guarantees:}
    We establish finite-time convergence guarantees for general non-convex objectives. The analysis explicitly separates the optimization error, communication-induced neighbor-copy residual, compression residual error, and event-triggering error, and shows that \textsc{DMFL-SQ} achieves an $\mathcal O(T^{-1/2})$ rate in expected squared
Moreau-envelope stationarity, matching the standard stochastic nonconvex optimization rate while accounting for
decentralized graph coupling, fairness selection, and compressed,
event-triggered communication.

    \item \textbf{PAC-Bayes generalization analysis:}
    We derive PAC-Bayes generalization bounds for the fairness-aware
mixture objective. The analysis accounts for the complexity of the
mixture class and quantifies the statistical cost of controlling both
worst-client and general client-mixture risks.
\item \textbf{Empirical validation on controlled and real heterogeneous data:}
    We evaluate \textsc{DMFL-SQ} on CIFAR-10 under controlled non-iid partitions and on the real MUSMET EEG dataset, which provides naturally heterogeneous musician-specific data. The results demonstrate the trade-off among accuracy, fairness, personalization, and communication efficiency, showing that fairness-aware decentralized learning can improve performance equity across clients while substantially reducing communication.
\end{itemize}


Overall, this paper shows that fairness-aware decentralized learning does not necessarily require sacrificing asymptotic optimization efficiency. When graph-based personalization, agnostic fairness, and compressed decentralized communication are jointly designed, fair and personalized models can be learned under stringent communication constraints while preserving the standard non-convex convergence behavior of stochastic gradient methods.

An outline of the remainder of the paper is as follows. The system model and problem formulation are described in Section~\ref{sec:sys_model}. Section~\ref{sec:prop_algo} discusses the DMFL-SQ algorithm. 
Section~\ref{sec:pacbayes} presents the PAC-Bayes generalization analysis, while Section~\ref{sec:convergence} establishes the convergence guarantees. The simulation results and conclusions are described in Section~\ref{sec:sim_res} and Section~\ref{sec:concl}, respectively.
\vspace{-0.2cm}
\section{System Model and Problem Formulation} \label{sec:sys_model}
We consider a decentralized FL system comprising $n$ clients
connected through a connected undirected communication graph
$\mathcal G=(\mathcal V,\mathcal E)$. Each client
$i\in\mathcal V$ maintains a personalized local model
$w_i\in\mathbb R^d$ and exchanges information only with its neighbors
$j\in\mathcal N_i$, as illustrated in
Fig.~\ref{fig:system_model}. Let $A=[a_{ij}]$ denote the symmetric
nonnegative graph-weight matrix, where $a_{ij}>0$ if
$(i,j)\in\mathcal E$ and $a_{ij}=0$ otherwise. These weights define
the graph-coupling regularizer that encourages neighboring personalized
models to remain similar while preserving client-specific solutions.

\begin{figure}[t]
\centering
\begin{tikzpicture}[
    node distance=1.5cm,
    client/.style={
        circle,
        draw=blue!70!black,
        fill=blue!10,
        thick,
        minimum size=1.25cm,
        align=center,
        font=\small
    },
    data/.style={
        rectangle,
        rounded corners,
        draw=gray!70!black,
        fill=gray!10,
        thick,
        minimum width=1.2cm,
        minimum height=0.55cm,
        align=center,
        font=\scriptsize
    },
    model/.style={
        rectangle,
        rounded corners,
        draw=green!50!black,
        fill=green!10,
        thick,
        minimum width=1.25cm,
        minimum height=0.55cm,
        align=center,
        font=\scriptsize
    },
    fairbox/.style={
        rectangle,
        rounded corners,
        draw=red!70!black,
        fill=red!8,
        thick,
        minimum width=4.7cm,
        minimum height=1.25cm,
        align=center,
        font=\scriptsize
    },
    commbox/.style={
        rectangle,
        rounded corners,
        draw=purple!70!black,
        fill=purple!8,
        thick,
        minimum width=5.0cm,
        minimum height=1.15cm,
        align=center,
        font=\scriptsize
    },
    edge/.style={->, thick, dashed, purple!75!black},
    gedge/.style={<->, thick, dashed, blue!70!black}
]

\node[client] (c1) at (-3.0,0) {Node\\$1$};
\node[client] (c2) at (0,2.4) {Node\\$2$};
\node[client] (c3) at (3.0,0) {Node\\$3$};

\node[model] (m1) at (-4.0,-1.15) {$w_1,\;F_1(w_1)$};
\node[data]  (d1) at (-1.9,-1.15) {$\mathcal D_1\sim P_1$};

\node[data]  (d2) at (-1.25,3.35) {$\mathcal D_2\sim P_2$};
\node[model] (m2) at (1.25,3.35) {$w_2,\;F_2(w_2)$};

\node[data]  (d3) at (2.1,-1.15) {$\mathcal D_3\sim P_3$};
\node[model] (m3) at (3.7,-1.35) {$w_3,\;F_3(w_3)$};

\draw[->, gray!70!black, thick] (d1) -- (c1);
\draw[->, gray!70!black, thick] (m1) -- (c1);

\draw[->, gray!70!black, thick] (d2) -- (c2);
\draw[->, gray!70!black, thick] (m2) -- (c2);

\draw[->, gray!70!black, thick] (d3) -- (c3);
\draw[->, gray!70!black, thick] (m3) -- (c3);

\draw[gedge] (c1) -- (c2);
\draw[gedge] (c2) -- (c3);
\draw[gedge] (c1) -- (c3);

\node[font=\scriptsize, blue!70!black, align=center] at (0,-0.50)
{Decentralized graph coupling via \(a_{ij}\)\\[-0.2mm]
\(\mathcal R_{\mathcal G}(\mathbf w)
=\frac{\alpha}{2}\sum_{(i,j)\in\mathcal E}a_{ij}\|w_i-w_j\|^2\)};

\node[fairbox] (fair) at (0,5.05)
{
Agnostic fairness envelope\\[-0.8mm]
$\displaystyle \Psi(\mathbf w)=
\sup_{\lambda\in\Lambda}\sum_{i=1}^{n}\lambda_iF_i(w_i)$\\[-0.5mm]
Scalar max-consensus on \((\widehat f_i^t,i)\)
};

\draw[->, dashed, red!70!black, thick]
(m1.north) to[out=115, in=-165]
node[left, font=\scriptsize] {$\widehat f_1^t$} (fair.west);

\draw[->, dashed, red!70!black, thick]
(m2.north) --
node[right, font=\scriptsize] {$\widehat f_2^t$} (fair.south);

\draw[->, dashed, red!70!black, thick]
(m3.north) to[out=80, in=-40]
node[right, font=\scriptsize] {$\widehat f_3^t$} (fair.east);

\node[commbox] (comm) at (0,-2.25)
{
Sparse-quantized event-triggered model-copy update\\[0.5mm]
$\displaystyle s_i^{t+1}=w_i^{t+1}-\widehat w_i^t,\quad
c_i^t=\mathcal C(s_i^{t+1})$\\[-0.2mm]
Transmit if \(\|s_i^{t+1}\|\ge\vartheta_t\);
neighbors update their reconstructed copies using
\(c_i^t\)
};

\draw[edge] (c1.south) -- (comm.north west);
\draw[edge] (c2.south) -- (comm.north);
\draw[edge] (c3.south) -- (comm.north east);

\node[font=\scriptsize, align=center, green!40!black] at (0,1.15)
{Local personalized update at every round:\\[-0.2mm]
\(w_i^{t+1}=w_i^t-\gamma_t h_i^t\)};

\end{tikzpicture}
\caption{Illustration of DMFL-SQ. Each client maintains a personalized model
\(w_i\), performs local learning at every round, and communicates only
sparse-quantized event-triggered model-copy innovations. Neighboring models
are softly coupled through \(\mathcal R_{\mathcal G}(\mathbf w)\), while
the active fairness component is selected through scalar max-consensus over
local loss estimates.}
\label{fig:system_model}
\end{figure}

Each client $i$ has access to a local dataset $\mathcal{D}_i = \{(x_{i,k}, y_{i,k})\}_{k=1}^{m_i}$ generated from an underlying distribution $\mathbb{P}_i$, where the client distributions $\{\mathbb{P}_i\}_{i=1}^n$ may differ across clients, capturing statistical heterogeneity (non-iid data). The expected local loss is defined as
\begin{equation}
F_i(w_i) = \mathbb{E}_{(x,y)\sim \mathbb{P}_i}[\ell_i(w_i; x,y)],
\end{equation}
where $\ell_i(\cdot)$ is a potentially non-convex loss function that is assumed to be $L$-smooth with respect to the model parameter.

Unlike standard FL, which enforces a single shared global model, we adopt a \emph{multi-task} formulation in which each client learns a personalized model $w_i$ while collaborating with neighboring nodes. Let $\mathbf{w} = [w_1, \dots, w_n]$ denote the collection of client-specific models. Collaboration among neighboring clients is modeled through the graph-Laplacian regularizer
\begin{equation}
\mathcal{R}_{\mathcal{G}}(\mathbf{w}) = 
\frac{\alpha}{2}\sum_{(i,j)\in\mathcal{E}} a_{ij} \|w_i - w_j\|^2,
\end{equation}
where $\alpha>0$ controls the strength of graph-based inter-client coupling and $a_{ij}\ge 0$ are graph-regularization weights.

To incorporate \emph{fairness}, we adopt an \emph{agnostic mixture-based formulation}~\cite{mohri2019agnostic}. 
We consider a mixture distribution $P_{\lambda} = \sum_{i=1}^n \lambda_i \mathbb{P}_i$, where $\lambda \in \Lambda \subseteq \Delta_n$ and $\Delta_n$ denotes the probability simplex. We define the agnostic fairness envelope as
\vspace{-0.5cm}
\begin{align}
\Psi(\mathbf w) :=\sup_{\lambda\in\Lambda}
\sum_{i=1}^{n}\lambda_i F_i(w_i).
\end{align}
The proposed decentralized fair multi-task objective is then
\begin{equation}
\min_{\{w_i\}_{i=1}^n}
\;
\mathcal L(\mathbf w)
\;:=\;
\sum_{i=1}^{n} F_i(w_i)
\;+\;
\mathcal{R}_{\mathcal{G}}(\mathbf{w})
\;+\;
\rho\,\Psi(\mathbf w),
\label{eq:main-objective}
\end{equation}
where $\rho \ge 0$ controls the trade-off between aggregate performance and worst-case (fairness) risk across clients. The envelope $\Psi(\mathbf w)$ corresponds to the worst-case mixture loss over clients, as studied in agnostic federated learning ~\cite{mohri2019agnostic}. The formulation in \eqref{eq:main-objective} can be viewed as a penalized version of the pure min-max problem, allowing a continuous trade-off between average accuracy and fairness. Setting $\rho=0$ recovers standard decentralized multi-task learning, while larger values of $\rho$ increasingly emphasize worst-case client performance. Unlike classical FL objectives that learn a single global model, the formulation in~\eqref{eq:main-objective} yields personalized models $w_i$ that adapt to heterogeneous data while maintaining fairness across the network.

We study this problem under general \emph{non-convex} local objectives $F_i(\cdot)$ and develop a decentralized stochastic algorithm that achieves communication efficiency through sparsified, quantized, and event-triggered updates. The proposed algorithm, \textsc{DMFL-SQ}, admits convergence guarantees to first-order stationary points under general non-convex objectives.
\vspace{-0.5cm}
\subsection{Assumptions}
\label{sec:assum}
We make the following standard assumptions commonly used in decentralized stochastic
optimization and federated learning analyses.
\begin{assmon}[Smoothness]
\label{ass:smoothness}
Each local objective $F_i:\mathbb{R}^d\to\mathbb{R}$ is $L$-smooth, i.e., for all
$u,v\in\mathbb{R}^d$,
\begin{equation}
\|\nabla F_i(u)-\nabla F_i(v)\|\le L\|u-v\|.
\end{equation}
\end{assmon}

\begin{assmon}[Independent Loss Selection and Unbiased
Stochastic Gradients]
\label{ass:variance}
Let $\mathcal F_t$ denote the algorithmic history available before
the mini-batches at round $t$ are sampled. At every round, the
collection of gradient mini-batches
$\{B_{i,\mathrm{grad}}^t\}_{i=1}^n$ is conditionally independent
of the collection of selection mini-batches
$\{B_{i,\mathrm{sel}}^t\}_{i=1}^n$ given $\mathcal F_t$.

The stochastic gradient computed from
$B_{i,\mathrm{grad}}^t$ satisfies
\begin{equation}
\mathbb E
\left[
g_i^t
\,\middle|\,
\mathcal F_t,\widehat{\lambda}^{\,t}
\right]
=
\nabla F_i(w_i^t),
\end{equation}
and
\begin{equation}
\mathbb E
\left[
\left\|
g_i^t-\nabla F_i(w_i^t)
\right\|^2
\,\middle|\,
\mathcal F_t,\widehat{\lambda}^{\,t}
\right]
\leq
\sigma_i^2.
\end{equation}
Let $ \bar{\sigma}^2
:= \frac{1}{n}\sum_{i=1}^n\sigma_i^2 $ denote the average stochastic-gradient variance.
\end{assmon}

\begin{assmon}[Communication Graph and Coupling Weights] 
\label{ass:graph} 
The communication graph $\mathcal G=(\mathcal V,\mathcal E)$ is undirected and connected. The graph-coupling weights satisfy $a_{ij}=a_{ji}\ge 0$, with $a_{ij}>0$ only if $(i,j)\in\mathcal E$, and $a_{ij}=0$ otherwise. Moreover, the weighted degree is uniformly bounded, i.e., \[ d_{\mathcal G}:=\max_{i\in\mathcal V}\sum_{j\in\mathcal N_i}a_{ij}<\infty . \] Let \(L_{\mathcal G}\) denote the weighted graph Laplacian associated with \(\{a_{ij}\}\). Then the graph regularizer \[ \mathcal R_{\mathcal G}(\mathbf w) = \frac{\alpha}{2} \sum_{(i,j)\in\mathcal E} a_{ij}\|w_i-w_j\|^2 \] is smooth with constant $ L_R=\alpha\lambda_{\max}(L_{\mathcal G})$.
\end{assmon}

\begin{assmon}[Contractive Compressor]
\label{ass:compression}
The possibly randomized compression operator $\mathcal{C}:\mathbb{R}^d\to\mathbb{R}^d$ satisfies
the contraction property: there exists $\omega\in(0,1]$ such that 
\begin{equation}
\mathbb{E}\|\mathcal{C}(v)-v\|^2 \le (1-\omega)\|v\|^2, \quad \forall v \in \mathbb{R}^d,
\end{equation}
where the expectation is taken over the randomness of the compressor.
\end{assmon}
Assumptions~\ref{ass:smoothness}-\ref{ass:compression}
control the regularity and stochastic moments of the local learning
directions, the graph-based personalized coupling, and compressed
communication. Assumption~\ref{ass:graph} ensures that the
graph-coupling direction is well defined and that errors arising from
stale reconstructed neighbor models can be controlled through the
communication residual. Unlike consensus-based decentralized SGD,
DMFL-SQ does not impose model consensus through a mixing step;
personalization is instead maintained through the graph regularizer
$\mathcal R_{\mathcal G}$.
\vspace{-0.2cm}
\section{DMFL-SQ Algorithm}
\label{sec:prop_algo}
We now present the proposed decentralized multi-task fair FL with sparsified and quantized communication, referred to as \textsc{DMFL-SQ}, also summarized in Algorithm~\ref{alg:dmflsq}. The method enables each client to perform local stochastic updates, exchange compressed information with its neighbors only when necessary, and promote fairness across heterogeneous clients via the agnostic mixture-based objective in~\eqref{eq:main-objective}.

At iteration $t$, each client $i$ draws two conditionally
independent mini-batches, $B_{i,\mathrm{sel}}^t$ and
$B_{i,\mathrm{grad}}^t$. The first mini-batch is used to estimate
the local loss for fairness-component selection:
\begin{equation}
\widehat f_i^t
=
\frac{1}{|B_{i,\mathrm{sel}}^t|}
\sum_{z\in B_{i,\mathrm{sel}}^t}
\ell_i(w_i^t;z).
\label{eq:loss_estimate}
\end{equation}
The second mini-batch is used to compute the stochastic gradient
\begin{equation}
g_i^t
=
\frac{1}{|B_{i,\mathrm{grad}}^t|}
\sum_{z\in B_{i,\mathrm{grad}}^t}
\nabla \ell_i(w_i^t;z).
\label{eq:stochastic_gradient}
\end{equation}
Let $\mathcal F_t$ denote the algorithmic history available before
the selection and gradient mini-batches at round $t$ are sampled.
Since $B_{i,\mathrm{grad}}^t$ is conditionally independent of
$\{B_{k,\mathrm{sel}}^t\}_{k=1}^n$ given $\mathcal F_t$, we have
\begin{equation}
\mathbb{E}\!\left[
g_i^t
\,\middle|\,
\mathcal{F}_t,\widehat{\mathbf f}^{\,t},
\widehat{\lambda}^{t}
\right]
=
\nabla F_i(w_i^t).
\label{eq:conditional-unbiasedness}
\end{equation}
For any mixture vector $\lambda\in\Lambda$, define the population
and empirical mixture losses, respectively, as
\begin{equation}
\Phi_{\lambda}(\mathbf w)
:=
\sum_{i=1}^n \lambda_i F_i(w_i),
\qquad
\widehat{\Phi}_{\lambda}^{\,t}
:=
\sum_{i=1}^n \lambda_i \widehat f_i^t.
\label{eq:population_empirical_mixture}
\end{equation}
For a general mixture class $\Lambda$, the empirical fairness
mixture is selected according to
$\widehat{\lambda}^{\,t}
\in
\arg\max_{\lambda\in\Lambda}
\widehat{\Phi}_{\lambda}^{\,t}$. For the full-simplex case $\Lambda=\Delta_n$, this reduces to
$i_t^\star
\in
\arg\max_{i\in[n]}
\widehat f_i^t,
\widehat{\lambda}_i^t
=
\mathbbm{1}\{i=i_t^\star\}$. The stochastic fairness direction implemented by client $i$ is
defined as
$\widehat{\xi}_i^t
=
\widehat{\lambda}_i^t g_i^t$. We use the notation $\widehat{\xi}_i^t$ to emphasize that this
direction is constructed from mini-batch loss estimates and is not
necessarily a population subgradient of the fairness envelope
$\Psi$.

To quantify the discrepancy between empirical fairness selection
and the population fairness envelope, define
the population fairness-selection gap
\begin{equation}
\varepsilon_{\Psi,t}
:=
\Psi(\mathbf{w}^t)
-
\Phi_{\widehat{\lambda}^{\,t}}(\mathbf{w}^t)
\geq 0.
\label{eq:fairness_selection_gap}
\end{equation}
For $\Lambda=\Delta_n$, this becomes
$
\varepsilon_{\Psi,t}
=
\max_{i\in[n]} F_i(w_i^t)
-
F_{i_t^\star}(w_{i_t^\star}^t)$. Thus, $\varepsilon_{\Psi,t}=0$ when the empirically selected client
is also a population-active client. In general, exact scalar
max-consensus guarantees agreement on the empirical maximizer but
does not imply that the selected client maximizes the population
loss.

Let $
\delta_t
:=
\sup_{\lambda\in\Lambda}
\left|
\widehat{\Phi}_{\lambda}^t
-
\Phi_{\lambda}(\mathbf w^t)
\right|$. Because $\Lambda\subseteq\Delta_n$, $\delta_t
\leq
\max_{i\in[n]}
\left|
\widehat f_i^t-F_i(w_i^t)
\right|$. Since $\widehat{\lambda}^{\,t}$ maximizes
$\widehat{\Phi}_{\lambda}^t$ over $\Lambda$, the population
fairness-selection gap satisfies
\begin{equation}
\varepsilon_{\Psi,t}
\leq
2\delta_t.
\label{eq:selection_gap_bound}
\end{equation}
Thus, the empirically selected fairness mixture becomes increasingly
accurate in terms of the population fairness objective as the uniform
loss-estimation error decreases.

\begin{algorithm}[]
\caption{DMFL-SQ: Decentralized Multi-Task Fair Learning with
Sparse-Quantized Event-Triggered Communication}
\label{alg:dmflsq}
\begin{algorithmic}[1]
\State \textbf{Input:} stepsizes $\{\gamma_t\}$, trigger thresholds
$\{\vartheta_t\}$, graph weights $\{a_{ij}\}$, compression operator
$\mathcal C$, fairness weight $\rho$, graph regularization weight
$\alpha$, fairness-consensus rounds $K_\Psi$.
\State Initialize $w_i^0$ for all clients $i\in\{1,\ldots,n\}$.
\State Initialize the locally stored transmitted copies
$\widehat{w}_i^0=w_i^0$ $\forall$ $i$, the neighbor copies
$\widehat{w}_{i\to j}^0=w_i^0$ $\forall$ $(i,j)\in\mathcal{E}$.
\For{$t=0,\ldots,T-1$}

    \For{each client $i=1,\ldots,n$ in parallel}
\State Sample conditionally independent mini-batches
$B_{i,\mathrm{sel}}^t$ and $B_{i,\mathrm{grad}}^t$ and compute
\[
\widehat f_i^t
=
\frac{1}{|B_{i,\mathrm{sel}}^t|}
\sum_{z\in B_{i,\mathrm{sel}}^t}
\ell_i(w_i^t;z),
\]
\[
g_i^t
=
\frac{1}{|B_{i,\mathrm{grad}}^t|}
\sum_{z\in B_{i,\mathrm{grad}}^t}
\nabla\ell_i(w_i^t;z).
\]
 \EndFor
\State Run $K_\Psi$ scalar max-consensus rounds on
$(\widehat f_i^t,i)$ to obtain $
i_t^\star
\in
\arg\max_{i\in[n]}
\widehat f_i^t.
$

\State Set $
\widehat{\lambda}_i^t
=
\mathbbm{1}\{i=i_t^\star\}$,
$
\widehat{\xi}_i^t
=
\widehat{\lambda}_i^t g_i^t$.
    \For{each client $i=1,\ldots,n$ in parallel}
        \State Compute the graph-coupling direction
        \[
        r_i^t
        =
        \alpha
        \sum_{j\in\mathcal N_i}
        a_{ij}
        \left(
        w_i^t-\widehat w_{j\to i}^t
        \right).
        \]

\State Form the local stochastic direction
\[
h_i^t
=
g_i^t+r_i^t+\rho\widehat{\xi}_i^t
=
\left(1+\rho\widehat{\lambda}_i^t\right)g_i^t+r_i^t.
\]
\State Update the personalized model
        \[
        w_i^{t+1}=w_i^t-\gamma_t h_i^t.
        \]
        \State Compute the model-copy innovation
        \[
        s_i^{t+1}=w_i^{t+1}-\widehat w_i^t,
        \]
        where $\widehat w_i^t$ denotes the most recent transmitted copy
        of $w_i$ reconstructed by its neighbors.
        \If{$\|s_i^{t+1}\|\ge \vartheta_t$}
            \State Compress and transmit
            $c_i^t=\mathcal C(s_i^{t+1})
            $         to all neighbors $j\in\mathcal N_i$.
            \State Update the locally stored transmitted copy
            \[
            \widehat w_i^{t+1}=\widehat w_i^t+c_i^t.
            \]
        \Else
            \State Set $c_i^t=0$ and
            \[
            \widehat w_i^{t+1}=\widehat w_i^t.
            \]
        \EndIf
    \EndFor

    \For{each client $i=1,\ldots,n$ in parallel}
        \State For every received $c_j^t$ from $j\in\mathcal N_i$, update
        \[
        \widehat w_{j\to i}^{t+1}
        =
        \widehat w_{j\to i}^{t}
        +
        c_j^t.
        \]
        \State If no message is received from neighbor $j$, set
        \[
        \widehat w_{j\to i}^{t+1}
        =
        \widehat w_{j\to i}^{t}.
        \]
    \EndFor

\EndFor
\State \textbf{Output:} personalized models $\{w_i^T\}_{i=1}^n$.
\end{algorithmic}
\end{algorithm}

\noindent\textbf{
Fairness-consensus implementation:}
For $\Lambda=\Delta_n$, DMFL-SQ runs $K_{\Psi}$ rounds of
scalar max-consensus over the pairs $(\widehat f_i^t,i)$, using a
fixed lexicographic tie-breaking rule. If
$K_{\Psi}\geq\operatorname{diam}(\mathcal G)$, all clients recover
the same empirical maximizer
$i_t^\star
\in
\arg\max_{i\in[n]}
\widehat f_i^t$.
Exact max-consensus therefore eliminates disagreement among clients
regarding the empirical fairness component. It does not, by itself,
guarantee that $i_t^\star$ is an active maximizer of the population
fairness envelope. The resulting statistical discrepancy is
quantified by $\varepsilon_{\Psi,t}$ in
\eqref{eq:fairness_selection_gap} and is explicitly retained in the
convergence analysis.

Each client maintains reconstructed copies of its neighbors' most recently
communicated models. Let $\widehat w_{j\to i}^t$ denote the latest copy of client $j$'s model available at client $i$. These copies are used to evaluate the graph-coupling term in the personalized objective.
Specifically, client $i$ computes
\begin{equation}
r_i^t
=
\alpha
\sum_{j\in\mathcal N_i}
a_{ij}
\left(
w_i^t-\widehat w_{j\to i}^t
\right),
\end{equation}
which is a communication-efficient approximation of
$\nabla_{w_i}\mathcal R_{\mathcal G}(\mathbf w^t)$.
Although the graph regularizer
$\mathcal R_{\mathcal G}(\mathbf w)$ is defined using the current
neighboring models, a decentralized event-triggered implementation cannot
access $w_j^t$ at every round. Therefore, client $i$ evaluates the
graph-coupling direction using the latest reconstructed neighbor copy
$\widehat w_{j\to i}^t$. The resulting discrepancy
$w_j^t-\widehat w_{j\to i}^t$ is explicitly captured by the
communication residual $\mathcal E^t$ and is accounted for in the
convergence analysis. The local stochastic direction implemented by client $i$ is
\begin{equation}
h_i^t
=
g_i^t+r_i^t+\rho\widehat{\xi}_i^t
=
\left(1+\rho\widehat{\lambda}_i^t\right)g_i^t+r_i^t.
\label{eq:implemented_direction}
\end{equation}
Client $i$ then updates its personalized model according to
\begin{equation}
w_i^{t+1}
=
w_i^t-\gamma_t h_i^t.
\label{eq:model_update_revised}
\end{equation}
After the local model update, client $i$ checks whether its current
model has changed sufficiently relative to the last version reconstructed
by its neighbors. Let $\widehat w_i^t$ denote this last transmitted
copy and define the model-copy innovation
as $s_i^{t+1}= w_i^{t+1}-\widehat w_i^t$.

Client $i$ communicates only if
$\|s_i^{t+1}\|\ge \vartheta_t$, where $\vartheta_t$ is a time-varying trigger threshold. If the trigger
condition is satisfied, client $i$ sends the sparse-quantized update
$c_i^t = \mathcal C(s_i^{t+1})$
to all neighbors and updates its locally stored transmitted copy as
$\widehat w_i^{t+1} = \widehat w_i^t+c_i^t$.
If the trigger condition is not satisfied, no message is sent, and $c_i^t=0$, $\widehat w_i^{t+1} = \widehat w_i^t$.

Upon receiving $c_j^t$ from a neighbor $j$, client $i$ updates its
local reconstructed copy as
$\widehat w_{j\to i}^{t+1} =\widehat w_{j\to i}^{t} + c_j^t$. If no message is received from neighbor $j$, the copy is kept unchanged:
$\widehat w_{j\to i}^{t+1} =\widehat w_{j\to i}^{t}$.
The event trigger in Algorithm~1 controls only
communication and does not suppress local learning. Every client
updates its personalized model at each iteration. When the trigger is
inactive, the client skips broadcasting its model-copy innovation,
while retaining the updated local model. This mechanism reduces communication while preserving personalization. Unlike consensus-based decentralized SGD, DMFL-SQ does not average the client models after every local update. Instead, neighboring models are softly coupled through the graph regularizer $\mathcal R_{\mathcal G}(\mathbf w)$, while sparse, quantized, and
event-triggered communication maintains sufficiently accurate neighbor model copies. Therefore, client-specific variation is controlled by the personalized objective itself rather than being suppressed by an explicit gossip-averaging step.

\noindent\textbf{Communication accounting.}
Let $P$ denote the number of model parameters,
$k_i^t$ the number of nonzero coordinates transmitted by client $i$
at round $t$, $b_q$ the quantization precision, and
$\deg_i=|\mathcal N_i|$ its graph degree. Define the trigger indicator
$\chi_i^t=\mathbf 1\{\|s_i^{t+1}\|\geq\vartheta_t\}$. The model-copy
communication cost at round $t$ is
\begin{align}
B_{\mathrm{model},t}
&=
\sum_{i=1}^n \deg_i
\bigg[
b_{\mathrm{trig}}
+
\chi_i^t
\bigg(
k_i^t b_q
+
\min\!\left\{
k_i^t\lceil\log_2P\rceil,P
\right\}
\nonumber\\
&+
b_{\mathrm{scale}}
+
b_{\mathrm{hdr}}
\bigg)
\bigg],
\label{eq:model_comm_cost}
\end{align}
where the terms account for the quantized values, coordinate indices
or a binary mask, the quantizer scale, message header, and trigger
metadata, respectively. If silence indicates an inactive trigger, we
set $b_{\mathrm{trig}}=0$. The factor $\deg_i$ accounts for delivery
to all neighbors; for an undirected graph,
$\sum_i\deg_i=2|\mathcal E|$.

The scalar fairness-consensus step incurs
\begin{equation}
B_{\Psi,t}
=
2K_\Psi|\mathcal E|
\left(
b_{\mathrm{loss}}
+
\lceil\log_2 n\rceil
\right),
\label{eq:fairness_comm_cost}
\end{equation}
since the pair $(\widehat f_i^t,i)$ is exchanged in both directions
over every edge during each of the $K_\Psi$ max-consensus rounds.
Hence, the total communication over $T$ rounds and the average
communication per client are
\begin{equation}
B_{\mathrm{tot}}
=
\sum_{t=0}^{T-1}
\left(
B_{\mathrm{model},t}+B_{\Psi,t}
\right),
\qquad
\overline B_{\mathrm{client}}
=
\frac{B_{\mathrm{tot}}}{n}.
\label{eq:total_comm_cost}
\end{equation}

Overall, DMFL-SQ updates each personalized model using a stochastic approximation of the full objective, while communication is used only to maintain reconstructed neighbor copies for evaluating the graph
regularizer. Unlike consensus-based decentralized SGD, the client models
are not averaged after every local update. Instead, personalization is
preserved because inter-client coupling is controlled softly through
$\mathcal R_{\mathcal G}(\mathbf w)$, and the sparse-quantized
event-triggered mechanism reduces communication by updating
$\widehat w_{j\to i}^t$ only when necessary.
\vspace{-0.2cm}
\section{PAC-Bayes Generalization for DMFL-SQ}
\label{sec:pacbayes}
We establish an algorithm-agnostic PAC-Bayes generalization guarantee
for the fairness-aware agnostic mixture objective under possibly
non-convex client losses. We subsequently complement this result with a
corollary that quantifies the one-step empirical mixture-risk
perturbation induced by stale and compressed neighbor-model
information. 
Consider $n$ clients with mutually independent datasets, where
client $i$ has access to
$\mathcal D_i=\{z_{i,k}\}_{k=1}^{m_i}$, whose samples are drawn
iid from $\mathbb P_i$, with $\mathbb{P}_i \neq \mathbb{P}_j$ in general. Let $\ell_i(w_i; z) \in [0,1]$ denote the loss at client $i$.
Define the population and empirical risks as
\begin{align}
F_i(w_i) := \mathbb{E}_{z \sim \mathbb{P}_i}[\ell_i(w_i; z)],
\widehat{F}_i(w_i) := \frac{1}{m_i} \sum_{k=1}^{m_i} \ell_i(w_i; z_{i,k}). \nonumber
\end{align}

Let $\mathbf{w} := (w_1,\dots,w_n)$ denote the collection of client models.
To capture fairness-aware objectives, we consider agnostic mixture risks
defined over a set of mixture weights $\Lambda \subseteq \Delta_n$, where
$\Delta_n$ is the probability simplex:
\begin{eqnarray}
\Psi(\mathbf{w}) := \sup_{\lambda \in \Lambda} \sum_{i=1}^n \lambda_i F_i(w_i),
\vspace{0.2cm}
\widehat{\Psi}(\mathbf{w}) := \sup_{\lambda \in \Lambda} \sum_{i=1}^n \lambda_i \widehat{F}_i(w_i). \nonumber
\label{eq:mixture-risk}
\end{eqnarray}
We emphasize that no convexity is assumed on $F_i$.
\vspace{-0.1cm}
\subsection{PAC-Bayes Generalization Bound}
For a fixed mixture vector $\lambda\in\Lambda$, define
$
m_\lambda := \left(
\sum_{i=1}^n \frac{\lambda_i^2}{m_i}
\right)^{-1}.$
For a general compact mixture class, define
$
m_{\mathrm{eff}} := \inf_{\lambda\in\Lambda}m_\lambda.$

\begin{theorem}[PAC-Bayes Generalization for Agnostic Mixtures]
\label{thm:pacbayes}
Assume that $\ell_i(w_i;z)\in[0,1]$ for all clients $i\in[n]$. Let $F_i(w_i):=\mathbb E_{z\sim \mathbb{P}_i}[\ell_i(w_i;z)]$, $\widehat F_i(w_i):=\frac{1}{m_i}\sum_{k=1}^{m_i}\ell_i(w_i;z_{i,k})$,
and define the population and empirical agnostic mixture risks $\Psi(\mathbf{w}):=\sup_{\lambda\in\Lambda}\sum_{i=1}^n\lambda_iF_i(w_i)$, $\widehat\Psi(\mathbf{w}):=\sup_{\lambda\in\Lambda}\sum_{i=1}^n\lambda_i\widehat F_i(w_i)$. Let $\Pi$ be any data-independent prior over the stacked model
collection $\mathbf w$, and let $Q$ be any posterior over the same
space. Let $\tau_{\mathrm{PB}}>0$ be any deterministic PAC-Bayes
inverse-temperature parameter chosen independently of the observed
client datasets. Then, for any $\delta\in(0,1)$, the following statements hold with
probability at least $1-\delta$ over the draw of all client datasets
$\{\mathcal D_i\}_{i=1}^n$.

\textbf{Case 1: Full simplex.}
If $\Lambda=\Delta_n$, then, for any fixed
$\tau_{\mathrm{PB}}>0$, with probability at least $1-\delta$,
the following holds simultaneously for all posteriors $Q$:
\begin{align}
\mathbb{E}_{\mathbf w\sim Q}\!\left[\Psi(\mathbf w)\right]
&\leq
\mathbb{E}_{\mathbf w\sim Q}\!\left[\widehat{\Psi}(\mathbf w)\right]
+
\frac{
\mathrm{KL}(Q\Vert\Pi)+\ln n+\ln(1/\delta)
}{
\tau_{\mathrm{PB}}
}
\nonumber\\
&+
\frac{\tau_{\mathrm{PB}}}{8m_{\min}},
\label{eq:pac_bayes_full_simplex}
\end{align}
where $m_{\min}:=\min_{i\in[n]}m_i$.

\textbf{Case 2: General compact mixture class.}
If $\Lambda\subseteq\Delta_n$ is compact, then, for any
$\epsilon>0$ and any fixed $\tau_{\mathrm{PB}}>0$, with probability
at least $1-\delta$, the following holds simultaneously for all
posteriors $Q$:
\begin{align}
\mathbb{E}_{\mathbf w\sim Q}\!\left[\Psi(\mathbf w)\right]
\leq {}&
\mathbb{E}_{\mathbf w\sim Q}\!\left[\widehat{\Psi}(\mathbf w)\right]
+\epsilon +
\frac{\tau_{\mathrm{PB}}}{8m_{\mathrm{eff}}}
\nonumber\\
&+
\frac{
\mathrm{KL}(Q\Vert\Pi)
+\ln\left|\mathcal N_\epsilon
  (\Lambda,\|\cdot\|_1)\right|
+\ln(1/\delta)
}{
\tau_{\mathrm{PB}}
}
,
\label{eq:pac_bayes_general_mixture}
\end{align}
where
$ m_{\mathrm{eff}} := \inf_{\lambda\in\Lambda}
\left(\sum_{i=1}^n{\lambda_i^2}/{m_i}\right)^{-1}$.

\end{theorem}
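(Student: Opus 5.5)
The proof follows the standard Catoni/McAllester route: Donsker--Varadhan change of measure, Hoeffding's lemma for a fixed mixture, and a union bound over a finite set of mixtures. The supremum over $\lambda$ is handled by discretizing $\Lambda$.

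\textbf{Step 1: fixed mixture.} Fix $\lambda\in\Lambda$ and a model collection $\mathbf w$, and write $\Phi_\lambda(\mathbf w)=\sum_i\lambda_iF_i(w_i)$ and $\widehat\Phi_\lambda(\mathbf w)=\sum_i\lambda_i\widehat F_i(w_i)$. Then
\[
\widehat\Phi_\lambda(\mathbf w)=\sum_{i}\sum_{k=1}^{m_i}\frac{\lambda_i}{m_i}\ell_i(w_i;z_{i,k})
\]
is a sum of independent terms, since the datasets are mutually independent and i.i.d.\ within each client. Each term lies in an interval of length $\lambda_i/m_i$. Hoeffding's lemma therefore gives
\[
\mathbb E\, e^{\tau(\Phi_\lambda-\widehat\Phi_\lambda)}\le \exp\Big(\tfrac{\tau^2}{8}\sum_i \tfrac{\lambda_i^2}{m_i}\Big)=e^{\tau^2/(8m_\lambda)}.
\]
Integrating over the data-independent prior $\Pi$ and using Fubini, we get $\mathbb E_{\mathcal D}\mathbb E_{\Pi}e^{\tau(\Phi_\lambda-\widehat\Phi_\lambda)-\tau^2/(8m_\lambda)}\le1$. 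Markov's inequality at level $\delta/N$, followed by Donsker--Varadhan, yields, simultaneously for all $Q$,
\[
\mathbb E_Q[\Phi_\lambda]\le\mathbb E_Q[\widehat\Phi_\lambda]+\frac{\mathrm{KL}(Q\|\Pi)+\ln(N/\delta)}{\tau}+\frac{\tau}{8m_\lambda},
\]
with probability at least $1-\delta/N$. Here $\tau=\tau_{\mathrm{PB}}$ must be fixed in advance, which the theorem allows.

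\textbf{Step 2: full simplex.} For $\Lambda=\Delta_n$, the linear map $\lambda\mapsto\Phi_\lambda$ attains its supremum at a vertex, so $\Psi(\mathbf w)=\max_iF_i(w_i)$. Applying Step 1 to the $n$ vertices $e_i$ with $N=n$ and a union bound, every $i$ satisfies the bound with $m_{e_i}=m_i\ge m_{\min}$.

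The main obstacle is that the expectation over $Q$ does not commute with the maximum: $\mathbb E_Q\max_iF_i\ne\max_i\mathbb E_QF_i$. The fix is to apply change of measure to the maximum directly. Define $X_i(\mathbf w)=\tau(F_i-\widehat F_i)-\tau^2/(8m_i)$. Since $\max_ie^{X_i}\le\sum_ie^{X_i}$, we have $\mathbb E_{\mathcal D}\mathbb E_\Pi\max_ie^{X_i}\le n$. Donsker--Varadhan then controls $\mathbb E_Q\max_iX_i$ by $\mathrm{KL}(Q\|\Pi)+\ln(n/\delta)$. Finally, pointwise in $\mathbf w$,
\[
\max_iF_i(w_i)-\max_i\widehat F_i(w_i)\le\max_i\big(F_i-\widehat F_i\big),
\]
so $\tau\big(\Psi-\widehat\Psi\big)-\tau^2/(8m_{\min})\le\max_iX_i$. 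Taking $\mathbb E_Q$ gives \eqref{eq:pac_bayes_full_simplex}. The same pointwise-maximum device is what Step 1 should be upgraded to for the general case.

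\textbf{Step 3: general compact $\Lambda$.} Let $\mathcal N_\epsilon$ be a finite $\epsilon$-net of $\Lambda$ in $\|\cdot\|_1$. Because losses lie in $[0,1]$, $|\Phi_\lambda-\Phi_{\lambda'}|\le\|\lambda-\lambda'\|_1$, and the same holds for $\widehat\Phi$. For each $\mathbf w$, pick $\lambda^*$ (nearly) attaining $\Psi(\mathbf w)$ and a net point $\lambda'$ with $\|\lambda^*-\lambda'\|_1\le\epsilon$. Then
\[
\Psi(\mathbf w)\le\Phi_{\lambda'}(\mathbf w)+\epsilon=\widehat\Phi_{\lambda'}(\mathbf w)+\big(\Phi_{\lambda'}-\widehat\Phi_{\lambda'}\big)(\mathbf w)+\epsilon\le\widehat\Psi(\mathbf w)+\epsilon+\max_{\lambda'\in\mathcal N_\epsilon}\big(\Phi_{\lambda'}-\widehat\Phi_{\lambda'}\big)(\mathbf w).
\]
The sum-of-exponentials trick from Step 2, with $N=|\mathcal N_\epsilon|$, bounds the expected maximum. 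Each term uses $m_{\lambda'}\ge m_{\mathrm{eff}}$, which holds once the net is chosen inside $\Lambda$ (covering numbers of internal nets are comparable). This gives \eqref{eq:pac_bayes_general_mixture}, with $\ln|\mathcal N_\epsilon|$ replacing $\ln n$.
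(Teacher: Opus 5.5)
Your argument is correct and gives exactly the stated constants, but you pay for the supremum over $\Lambda$ differently from the paper. The paper lifts to an augmented hypothesis space. In Case 1 it takes the posterior $\widetilde Q(d\mathbf w,di)=Q(d\mathbf w)\,\delta_{i^\star(\mathbf w)}(di)$, with $i^\star(\mathbf w)$ a measurable selection of a population-worst client, and the prior $\widetilde\Pi=\Pi\otimes\mathrm{Unif}([n])$. Then $\mathbb E_{\widetilde Q}[F_i(w_i)]=\mathbb E_Q[\Psi]$, $\mathbb E_{\widetilde Q}[\widehat F_i(w_i)]\le\mathbb E_Q[\widehat\Psi]$ and $\mathrm{KL}(\widetilde Q\Vert\widetilde\Pi)=\mathrm{KL}(Q\Vert\Pi)+\ln n$, and a single standard PAC-Bayes bound finishes. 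Case 2 repeats this with a selector $v_\epsilon(\mathbf w)$ into the net and a uniform prior on the net. You instead stay on the original space and apply Donsker--Varadhan to $\max_i X_i$, paying $\ln n$ in the Markov step through $\max_i e^{X_i}\le\sum_i e^{X_i}$. The two are dual forms of one computation. The augmented route's extra KL plus its log-moment term is $\ln n+\ln\bigl(\frac1n\sum_i\mathbb E_\Pi e^{X_i}\bigr)=\ln\sum_i\mathbb E_\Pi e^{X_i}$, which is exactly the quantity you control after $\max\le\sum$ (the paper uses $m_{\min}$ rather than $m_i$ inside the exponent, which is immaterial). Your version needs no measurable selection of $i^\star$, $\lambda^\star$ or $v_\epsilon$, since you only use them pointwise to compare measurable functions. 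It also avoids the KL chain-rule computation and controls $\mathbb E_Q\max_i(\cdot)$ directly. The paper's version reduces literally to the off-the-shelf single-hypothesis PAC-Bayes theorem, with the fairness cost appearing transparently as extra KL. Two minor points. Your Step 1 union bound is never used once Step 2 is in place, so you can drop it. The parenthetical about internal covering numbers is unnecessary, since the paper likewise takes $\mathcal N_\epsilon(\Lambda,\|\cdot\|_1)\subseteq\Lambda$. Internality is genuinely needed in both routes, both for $\widehat\Phi_{\lambda'}\le\widehat\Psi$ and for $m_{\lambda'}\ge m_{\mathrm{eff}}$.
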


\begin{IEEEproof}
    See Appendix~\ref{app:pacbayes}.
\end{IEEEproof}
\noindent The first case corresponds to the agnostic mixture class used in our main fairness objective. In this setting, the supremum over $\Delta_n$ is attained at a vertex, so the additional complexity cost is only $\ln n$. The second case shows how the result extends to a general compact mixture class, where the price of uniformity over $\Lambda$ appears through the
covering number $\ln|\mathcal N_\varepsilon(\Lambda,\|\cdot\|_1)|$.

We next complement this algorithm-agnostic generalization result with a
corollary that isolates the one-step empirical mixture-risk perturbation
caused by stale, sparse, quantized, and event-triggered neighbor-model
exchange. Its proof uses the communication-residual bound established
later in Lemma~\ref{lem:lemma6}.

\begin{corollary}[One-Step Effect of Stale and Compressed Neighbor Copies]
\label{cor:decentralization_compression}
Let $\mathbf d^t
=
\mathbf r^t-\nabla\mathcal R_{\mathcal G}(\mathbf w^t)$ denote the graph-coupling approximation error, and define the
exact-neighbor shadow update $
\mathbf w_{\mathrm{ex}}^{t+1}
:=
\mathbf w^t-\gamma_t(\mathbf h^t-\mathbf d^t)$. Thus, $\mathbf w_{\mathrm{ex}}^{t+1}$ uses the same stochastic gradients
and fairness selection as DMFL-SQ, but evaluates the graph-coupling
direction using the current neighbor models. Since $
\mathbf w^{t+1}-\mathbf w_{\mathrm{ex}}^{t+1}
=
-\gamma_t\mathbf d^t$,
suppose that each empirical loss $\widehat F_i$ is
$G_{\mathrm{lip}}$-Lipschitz with respect to $w_i$. Then,
\begin{align}
&\left|
\mathbb E\!\left[\widehat\Psi(\mathbf w^{t+1})\right]
-
\mathbb E\!\left[\widehat\Psi(\mathbf w_{\mathrm{ex}}^{t+1})\right]
\right|
&\le
G_{\mathrm{lip}}\gamma_t
\sqrt{nK_{\mathcal G}\,
\mathbb E[\mathcal E^t]},
\label{eq:cor_one_step_perturbation}
\end{align}
where $E^t$ is the average weighted communication residual
defined in~\eqref{eq:communication_residual}, and Lemma~\ref{lem:lemma5} gives
$\|\mathbf d^t\|^2\leq nK_GE^t$.

Let $\tau$ be sampled uniformly from
$\{0,\ldots,T-1\}$, independently of the algorithmic randomness.
Under the finite-horizon stepsize
$\gamma_t=\gamma/\sqrt{T}$,
\begin{align}
&\left|
\mathbb E\!\left[\widehat\Psi(\mathbf w^{\tau+1})\right]
-
\mathbb E\!\left[
\widehat\Psi(\mathbf w_{\mathrm{ex}}^{\tau+1})
\right]
\right|
\le
\frac{G_{\mathrm{lip}}\gamma\sqrt{nK_{\mathcal G}}}{\sqrt{T}}
\left(
\frac1T
\sum_{t=0}^{T-1}
\mathbb E[\mathcal E^t]
\right)^{1/2}.
\label{eq:cor_average_perturbation}
\end{align}

By Lemma~\ref{lem:lemma6},
\begin{align}
\frac1T
\sum_{t=0}^{T-1}
\mathbb E[\mathcal E^t]
\le
\frac{2}{\chi_0\omega T}
\left(
\mathbb E[\mathcal E^0]
+\chi_1B_h\gamma^2
+\chi_2\vartheta_0^2(1+\log T)
\right).
\label{eq:cor_residual_average}
\end{align}
Consequently,
\begin{align}
&\left|
\mathbb E\!\left[\widehat\Psi(\mathbf w^{\tau+1})\right]
-
\mathbb E\!\left[
\widehat\Psi(\mathbf w_{\mathrm{ex}}^{\tau+1})
\right]
\right|
\nonumber\\
&\qquad\le
\frac{
G_{\mathrm{lip}}\gamma\sqrt{2nK_{\mathcal G}}
}{
\sqrt{\chi_0\omega}\,T
}
\left(
\mathbb E[\mathcal E^0]
+\chi_1B_h\gamma^2
+\chi_2\vartheta_0^2(1+\log T)
\right)^{1/2}.
\label{eq:cor_final_perturbation}
\end{align}
Hence, the one-step empirical mixture-risk perturbation induced by
stale, sparse, quantized, and event-triggered neighbor-model exchange
decays as $\mathcal O\!\left(\frac{\sqrt{\log T}}{T}\right)$.
\end{corollary}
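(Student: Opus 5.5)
The argument chains four estimates: a Lipschitz property of the empirical envelope $\widehat\Psi$ in the stacked variable, the update identity $\mathbf w^{t+1}-\mathbf w_{\mathrm{ex}}^{t+1}=-\gamma_t\mathbf d^t$, the bound $\|\mathbf d^t\|^2\le nK_{\mathcal G}\mathcal E^t$ from Lemma~\ref{lem:lemma5}, and Jensen's inequality. Averaging over the uniformly sampled $\tau$ and inserting Lemma~\ref{lem:lemma6} then gives the final rate.

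\textbf{Step 1: Lipschitz property of $\widehat\Psi$.} For $\mathbf u,\mathbf v$ and any $\lambda\in\Lambda\subseteq\Delta_n$,
\[
\Big|\sum_i\lambda_i\widehat F_i(u_i)-\sum_i\lambda_i\widehat F_i(v_i)\Big|\le G_{\mathrm{lip}}\max_i\|u_i-v_i\|\le G_{\mathrm{lip}}\|\mathbf u-\mathbf v\|,
\]
since $\sum_i\lambda_i=1$. A supremum of functions that are all $G_{\mathrm{lip}}$-Lipschitz is itself $G_{\mathrm{lip}}$-Lipschitz, so $|\widehat\Psi(\mathbf u)-\widehat\Psi(\mathbf v)|\le G_{\mathrm{lip}}\|\mathbf u-\mathbf v\|$ with the stacked Euclidean norm.

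\textbf{Step 2: the one-step bound.} Apply Step 1 with $\mathbf u=\mathbf w^{t+1}$ and $\mathbf v=\mathbf w_{\mathrm{ex}}^{t+1}$, and use $\mathbf w^{t+1}-\mathbf w_{\mathrm{ex}}^{t+1}=-\gamma_t\mathbf d^t$. Moving the absolute value inside the expectation and applying Jensen gives
\[
\left|\mathbb E[\widehat\Psi(\mathbf w^{t+1})]-\mathbb E[\widehat\Psi(\mathbf w_{\mathrm{ex}}^{t+1})]\right|\le G_{\mathrm{lip}}\gamma_t\,\mathbb E\|\mathbf d^t\|\le G_{\mathrm{lip}}\gamma_t\sqrt{\mathbb E\|\mathbf d^t\|^2}.
\]
Lemma~\ref{lem:lemma5} gives $\|\mathbf d^t\|^2\le nK_{\mathcal G}\mathcal E^t$ pathwise, which yields \eqref{eq:cor_one_step_perturbation}.

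\textbf{Step 3: the averaged bound.} Since $\tau$ is uniform on $\{0,\ldots,T-1\}$ and independent of the algorithm, the averaged difference equals $\frac1T\sum_t$ of the per-step differences. Bounding each term by Step 2 with $\gamma_t=\gamma/\sqrt T$ gives $\frac{G_{\mathrm{lip}}\gamma\sqrt{nK_{\mathcal G}}}{\sqrt T}\cdot\frac1T\sum_t\sqrt{\mathbb E\mathcal E^t}$. Concavity of the square root (Jensen on the uniform average) bounds $\frac1T\sum_t\sqrt{\mathbb E\mathcal E^t}$ by $\big(\frac1T\sum_t\mathbb E\mathcal E^t\big)^{1/2}$, which is \eqref{eq:cor_average_perturbation}. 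Substituting \eqref{eq:cor_residual_average} from Lemma~\ref{lem:lemma6} then gives
\[
\frac{1}{\sqrt T}\Big(\frac{2}{\chi_0\omega T}(\cdots)\Big)^{1/2}=\frac{\sqrt2}{\sqrt{\chi_0\omega}\,T}(\cdots)^{1/2}.
\]
This is \eqref{eq:cor_final_perturbation}, and its bracket grows only like $\log T$, so the whole quantity is $\mathcal O(\sqrt{\log T}/T)$.

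\textbf{Main obstacle.} The step most likely to need care is confirming that Lemma~\ref{lem:lemma6} applies under the finite-horizon stepsize $\gamma/\sqrt T$ with exactly the constants written, in particular that the $\gamma^2$ term carries no extra factor of $T$. I would check this by tracing the $\gamma_t^2$ sum $\sum_t\gamma_t^2=\gamma^2$ through that lemma. A smaller point is using the stacked Euclidean norm consistently in both Step 1 and Lemma~\ref{lem:lemma5}.
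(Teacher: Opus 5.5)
Your proposal is correct and follows the paper's proof essentially step for step. Both arguments use the same Lipschitz bound on $\widehat\Psi$ (via $\Lambda\subseteq\Delta_n$), the shadow-update identity, Jensen together with the pathwise bound $\|\mathbf d^t\|^2\le nK_{\mathcal G}\mathcal E^t$ from Lemma~\ref{lem:lemma5}, averaging over the uniform $\tau$, and substitution of Lemma~\ref{lem:lemma6}. The only cosmetic difference is that you bound $\frac1T\sum_t\sqrt{\mathbb E[\mathcal E^t]}$ by concavity of the square root, whereas the paper cites Cauchy--Schwarz, which here is the same inequality.
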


\section{Convergence Analysis}
\label{sec:convergence}
Before proving the convergence of Algorithm~\ref{alg:dmflsq},
we first summarize the regularity properties of the agnostic
fairness envelope $\Psi(\mathbf w)$ and the resulting full
personalized objective. Although the agnostic
fairness envelope
\begin{equation}
\Psi(\mathbf w)
=
\sup_{\lambda\in\Lambda}
\Phi_\lambda(\mathbf w),
\qquad
\Phi_\lambda(\mathbf w)
=
\sum_{i=1}^n \lambda_i F_i(w_i),
\end{equation}
is generally nonsmooth, it inherits weak convexity from the local
objectives. In particular, since each $F_i$ is $L$-smooth, every
mixture objective $\Phi_\lambda$ is $L$-weakly convex with respect
to the stacked model variable $\mathbf w$. Because the pointwise
supremum of functions sharing the same weak-convexity constant is
also weakly convex, $\Psi$ is $L$-weakly convex.

Consequently, the objective function
is $\kappa$-weakly convex with
$
\kappa=(1+\rho)L,
\label{eq:weak_convexity_constant}$ because $\mathcal R_{\mathcal G}$ is convex. We therefore analyze
Algorithm~\ref{alg:dmflsq} through the Moreau envelope of $\mathcal L$. This framework permits changes in the active fairness
component between successive iterates and explicitly accommodates
the empirical fairness-selection gap
$\varepsilon_{\Psi,t}$ defined in
\eqref{eq:fairness_selection_gap}.

For $K_\Psi\geq\operatorname{diam}(\mathcal G)$, scalar
max-consensus ensures that all clients agree on the same maximizer of
the empirical mini-batch losses. However, this empirical maximizer
need not coincide with a population maximizer of $\Psi$. The resulting
discrepancy is quantified by the fairness-selection gap
$\varepsilon_{\Psi,t}$ defined above and is explicitly controlled in
the subsequent Moreau-envelope convergence analysis.

We next state the additional assumptions required for the
Moreau-envelope and communication-residual analysis.
\begin{assmon}[Bounded Second Moment of the Exact-Neighbor
Population Direction]
\label{ass:bounded_direction}
Define
\begin{equation}
V_i^t
:=
\left(1+\rho\widehat{\lambda}_i^t\right)
\nabla F_i(w_i^t)
+
\nabla_{w_i}\mathcal R_{\mathcal G}(\mathbf w^t),
\mathbf V^t:=(V_1^t,\ldots,V_n^t).
\label{eq:exact_neighbor_population_direction}
\end{equation}
There exists a constant $H>0$, independent of $t$ and $T$, such
that
$\mathbb E
\left[
\|\mathbf V^t\|^2
\,\middle|\,
\mathcal F_t
\right]
\leq H^2$.
\end{assmon}

\begin{assmon}[Fairness-Selection Accuracy]
\label{ass:selection_accuracy}
Let $
\varepsilon_{\Psi,t}
= \Psi(\mathbf w^t)
-\Phi_{\widehat{\lambda}^{\,t}}(\mathbf w^t)
$ denote the population fairness-selection gap defined in
\eqref{eq:fairness_selection_gap}. There exists a deterministic
nonnegative sequence $\{\epsilon_{\Psi,t}\}_{t\geq 0}$ such that
\begin{equation}
\mathbb E
\left[
\varepsilon_{\Psi,t}
\,\middle|\,
\mathcal F_t
\right]
\leq
\epsilon_{\Psi,t}.
\label{eq:conditional_selection_accuracy}
\end{equation}
Moreover, there exists a constant $B_\Psi>0$, independent
of $T$, such that, for every training horizon $T\geq 1$,
\begin{equation}
\frac{1}{T}\sum_{t=0}^{T-1}\epsilon_{\Psi,t}
\leq \frac{B_\Psi}{\sqrt{T}}.
\label{eq:average_selection_accuracy}
\end{equation}
\end{assmon}
\noindent By~\eqref{eq:selection_gap_bound}, Assumption~\ref{ass:selection_accuracy} admits an explicit sufficient condition. Suppose that, conditional on $\mathcal F_t$, the selection-loss samples
are independent and their centered losses are
$\sigma_{\rm sel}$-sub-Gaussian, and let
$b_t:=\min_i|\mathcal B_{i,\mathrm{sel}}^t|$. Standard maximal
concentration gives
$\mathbb E[\varepsilon_{\Psi,t}\mid\mathcal F_t]
\leq 2\sigma_{\rm sel}\sqrt{2\log(2n)/b_t}$.
Thus, $b_t\geq c(t+1)\log(2n)$ implies
\begin{equation}
T^{-1}\sum_{t=0}^{T-1}\mathbb E[\varepsilon_{\Psi,t}\mid\mathcal F_t]
\leq 4\frac{\sigma_{\rm sel}}{\sqrt{T}}\sqrt{\frac{2}{c}},
\end{equation}
verifying Assumption~\ref{ass:selection_accuracy}.
For losses whose range has width $R$, one may take
$\sigma_{\rm sel}\le R/2$.
\begin{assmon}[Stepsize and Trigger Schedule]
\label{ass:trigger}
For a training horizon \(T\), DMFL-SQ uses
\[
\gamma_t=\frac{\gamma}{\sqrt T},
\qquad
\vartheta_t=\frac{\vartheta_0}{\sqrt{t+1}},
\qquad t=0,\ldots,T-1,
\]
where \(\gamma>0\) and \(\vartheta_0>0\) are constants. Hence,
\[
\sum_{t=0}^{T-1}\gamma_t\vartheta_t^2
=
\frac{\gamma\vartheta_0^2}{\sqrt T}
\sum_{t=0}^{T-1}\frac{1}{t+1}
=
\mathcal O\left(\frac{\log T}{\sqrt T}\right).
\]
After normalization by
\(\sum_{t=0}^{T-1}\gamma_t=\gamma\sqrt T\), this contributes
\[
\mathcal O\left(\frac{\log T}{T}\right)
\]
to the final stationarity bound.
\end{assmon}
\vspace{-0.2cm}
\subsection{Weak Convexity and Moreau-Envelope Regularity}
\label{sub:aux_lemma}
Recall that
$ \Psi(\mathbf w)
=
\sup_{\lambda\in\Lambda}\Phi_\lambda(\mathbf w),
\Phi_\lambda(\mathbf w)
=
\sum_{i=1}^n\lambda_iF_i(w_i)$,
where $\Lambda\subseteq\Delta_n$ is nonempty and compact.
\begin{lemma}[Weak Convexity of the Fairness Envelope and Objective Function]
\label{lem:weak_convexity}
Suppose that each $F_i$ is $L$-smooth. Then, for every
$\lambda\in\Lambda$, the mixture objective $\Phi_\lambda$ is
$L$-weakly convex. Consequently, the fairness envelope
$\Psi$ is $L$-weakly convex, and the objective function
$\mathcal L(\mathbf w)
=
\sum_{i=1}^n F_i(w_i)
+
\mathcal R_{\mathcal G}(\mathbf w)
+
\rho\Psi(\mathbf w)
$ is $\kappa$-weakly convex with
$\kappa=(1+\rho)L$.
\end{lemma}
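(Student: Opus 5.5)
The plan is to work with the standard characterization of weak convexity: a function $f$ is $\mu$-weakly convex iff $f+\frac{\mu}{2}\|\cdot\|^2$ is convex. Equivalently, for differentiable $f$,
\[
f(\mathbf v)\ge f(\mathbf u)+\langle\nabla f(\mathbf u),\mathbf v-\mathbf u\rangle-\tfrac{\mu}{2}\|\mathbf v-\mathbf u\|^2 .
\]
The argument proceeds in three steps: (i) each mixture $\Phi_\lambda$; (ii) the supremum $\Psi$; (iii) the sum $\mathcal L$.

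\textbf{Step 1 (mixtures).} By Assumption~\ref{ass:smoothness}, each $F_i$ is $L$-smooth, so the descent-type lower bound
\[
F_i(v_i)\ge F_i(u_i)+\langle\nabla F_i(u_i),v_i-u_i\rangle-\tfrac{L}{2}\|v_i-u_i\|^2
\]
holds. Hence $F_i+\frac{L}{2}\|\cdot\|^2$ is convex on $\mathbb R^d$. Fix $\lambda\in\Lambda\subseteq\Delta_n$ and multiply the $i$-th inequality by $\lambda_i\ge0$. Summing, and using $\lambda_i\le1$ so that $\lambda_i\frac L2\|v_i-u_i\|^2\le\frac L2\|v_i-u_i\|^2$, gives
\[
\Phi_\lambda(\mathbf v)\ge\Phi_\lambda(\mathbf u)+\langle\nabla\Phi_\lambda(\mathbf u),\mathbf v-\mathbf u\rangle-\tfrac L2\|\mathbf v-\mathbf u\|^2 .
\]
Equivalently, $\Phi_\lambda(\mathbf w)+\frac L2\|\mathbf w\|^2=\sum_i\big[\lambda_i(F_i(w_i)+\frac L2\|w_i\|^2)+(1-\lambda_i)\frac L2\|w_i\|^2\big]$ is a sum of convex functions. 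The key point is that the constant $L$ does not depend on $\lambda$.

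\textbf{Step 2 (envelope).} Write
\[
\Psi(\mathbf w)+\tfrac L2\|\mathbf w\|^2=\sup_{\lambda\in\Lambda}\big(\Phi_\lambda(\mathbf w)+\tfrac L2\|\mathbf w\|^2\big).
\]
This is a pointwise supremum of convex functions, so it is convex. Finiteness follows because $\Lambda$ is compact and $\lambda\mapsto\Phi_\lambda(\mathbf w)$ is linear, so the supremum is attained and real-valued. Therefore $\Psi$ is $L$-weakly convex. This step is where the uniformity in $\lambda$ from Step 1 is essential, and it is the only place where nonsmoothness enters. Working through the quadratic-shift characterization rather than subgradients avoids any need for differentiability of $\Psi$.

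\textbf{Step 3 (full objective).} Decompose
\[
\mathcal L+\tfrac{(1+\rho)L}{2}\|\mathbf w\|^2=\Big(\textstyle\sum_iF_i(w_i)+\tfrac L2\|\mathbf w\|^2\Big)+\rho\Big(\Psi+\tfrac L2\|\mathbf w\|^2\Big)+\mathcal R_{\mathcal G}(\mathbf w).
\]
The first bracket is convex by Step 1 with $\lambda_i\equiv1$, or directly from the per-block inequality. The second is convex by Step 2, and $\rho\ge0$ preserves convexity. The third is convex because $\mathcal R_{\mathcal G}(\mathbf w)=\frac{\alpha}{2}\mathbf w^\top(L_{\mathcal G}\otimes I_d)\mathbf w$ with $L_{\mathcal G}\succeq0$, using $a_{ij}\ge0$ from Assumption~\ref{ass:graph}. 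The sum of convex functions is convex, so $\mathcal L$ is $\kappa$-weakly convex with $\kappa=(1+\rho)L$.

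I expect no real obstacle. The only subtle points are verifying that the supremum in Step 2 is finite and well defined, which compactness of $\Lambda$ settles, and noting that the block-separable quadratic $\sum_i\|w_i\|^2$ equals $\|\mathbf w\|^2$ on the stacked variable.
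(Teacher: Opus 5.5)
Your proposal is correct and follows the same route as the paper's proof. You show that $\Phi_\lambda+\frac{L}{2}\|\cdot\|^2$ is convex with a constant independent of $\lambda$, take the pointwise supremum over $\Lambda$ to get convexity of $\Psi+\frac{L}{2}\|\cdot\|^2$, and then add $\sum_i F_i+\frac{L}{2}\|\cdot\|^2$, $\rho\ge 0$ times the shifted envelope, and the convex $\mathcal R_{\mathcal G}$ to obtain $(1+\rho)L$-weak convexity of $\mathcal L$. Your version also spells out details the paper leaves implicit: the use of $\lambda_i\le 1$ for the uniform constant, finiteness of the supremum, and convexity of $\mathcal R_{\mathcal G}$ via $L_{\mathcal G}\succeq 0$.
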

\begin{proof}
Since each $F_i$ is $L$-smooth, every mixture objective
$\Phi_\lambda$ is $L$-weakly convex. Hence,
$\Phi_\lambda+(L/2)\|\cdot\|^2$ is convex. Taking the pointwise
supremum over $\lambda\in\Lambda$ shows that
$\Psi+(L/2)\|\cdot\|^2$ is convex, and therefore $\Psi$ is
$L$-weakly convex. Moreover, $\sum_{i=1}^n F_i(w_i)$ is
$L$-weakly convex, while $\mathcal R_{\mathcal G}$ is convex.
Thus, $\mathcal L$ is $(1+\rho)L$-weakly convex.
\end{proof}

\begin{lemma}[Approximate Fairness-Direction Inequality]
\label{lem:approx_fairness_direction}
Let $\widehat{\lambda}^{\,t}$ be the empirical fairness mixture
selected by Algorithm~\ref{alg:dmflsq}, and define the corresponding
population-gradient direction by
\begin{equation}
\bar{\xi}_i^t
:=
\widehat{\lambda}_i^t\nabla F_i(w_i^t),
\qquad
\bar{\boldsymbol{\xi}}^t
:=
(\bar{\xi}_1^t,\ldots,\bar{\xi}_n^t).
\label{eq:population_selected_fairness_direction}
\end{equation}
Then, for every $\mathbf u\in\mathbb R^{nd}$,
\begin{equation}
\Psi(\mathbf u)
\geq
\Psi(\mathbf w^t)
+
\left\langle
\bar{\boldsymbol{\xi}}^t,
\mathbf u-\mathbf w^t
\right\rangle
-
\frac{L}{2}
\|\mathbf u-\mathbf w^t\|^2
-
\varepsilon_{\Psi,t},
\label{eq:approx_fairness_weak_subgradient}
\end{equation}
where $\varepsilon_{\Psi,t}$ is defined in
\eqref{eq:fairness_selection_gap}.
Moreover, under Assumption~\ref{ass:variance},
\begin{equation}
\mathbb E
\left[
\widehat{\boldsymbol{\xi}}^t
\,\middle|\,
\mathcal F_t,\widehat{\lambda}^{\,t}
\right]
=
\bar{\boldsymbol{\xi}}^t.
\label{eq:implemented_fairness_unbiased}
\end{equation}
\end{lemma}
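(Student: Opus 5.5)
The inequality \eqref{eq:approx_fairness_weak_subgradient} will follow by chaining three elementary facts about the fixed selected mixture $\widehat\lambda^{\,t}$, which we treat as frozen once it has been chosen.

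First, since $\widehat\lambda^{\,t}\in\Lambda$, the definition of the supremum gives $\Psi(\mathbf u)\ge\Phi_{\widehat\lambda^{t}}(\mathbf u)$ for every $\mathbf u$.

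Second, we lower-bound $\Phi_{\widehat\lambda^{t}}(\mathbf u)$ around $\mathbf w^t$. Each $F_i$ is $L$-smooth (Assumption~\ref{ass:smoothness}), so the standard quadratic lower bound holds:
\[
F_i(u_i)\ge F_i(w_i^t)+\langle\nabla F_i(w_i^t),u_i-w_i^t\rangle-\tfrac L2\|u_i-w_i^t\|^2 .
\]
Multiply by $\widehat\lambda_i^t\ge0$ and sum over $i$. The gradient terms assemble into $\langle\bar{\boldsymbol\xi}^t,\mathbf u-\mathbf w^t\rangle$ by \eqref{eq:population_selected_fairness_direction}. The quadratic terms give $-\tfrac L2\sum_i\widehat\lambda_i^t\|u_i-w_i^t\|^2$. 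Since $\widehat\lambda_i^t\le1$ on the simplex, this is at least $-\tfrac L2\|\mathbf u-\mathbf w^t\|^2$. (Equivalently, one can invoke the $L$-weak convexity of $\Phi_\lambda$ from Lemma~\ref{lem:weak_convexity}.)

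Third, we replace $\Phi_{\widehat\lambda^{t}}(\mathbf w^t)$ by $\Psi(\mathbf w^t)-\varepsilon_{\Psi,t}$. This is exactly the definition \eqref{eq:fairness_selection_gap}. Chaining the three steps yields \eqref{eq:approx_fairness_weak_subgradient}.

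The weight bound in the second step needs $\widehat\lambda^{\,t}\in\Delta_n$. This holds because the paper assumes $\Lambda\subseteq\Delta_n$ and the algorithm selects a vertex. No other step poses an obstacle.

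For the unbiasedness claim \eqref{eq:implemented_fairness_unbiased}, note that $\widehat\lambda^{\,t}$ is a function only of the selection mini-batches and $\mathcal F_t$. Therefore $\widehat\lambda_i^t$ is measurable with respect to the conditioning $\sigma$-field. We then write
\[
\mathbb E[\widehat\xi_i^t\mid\mathcal F_t,\widehat\lambda^{\,t}]=\widehat\lambda_i^t\,\mathbb E[g_i^t\mid\mathcal F_t,\widehat\lambda^{\,t}].
\]
By Assumption~\ref{ass:variance}, which rests on the conditional independence of the gradient and selection batches, the conditional expectation of $g_i^t$ equals $\nabla F_i(w_i^t)$. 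Here $w_i^t$ is itself $\mathcal F_t$-measurable. Stacking over $i$ gives $\bar{\boldsymbol\xi}^t$.

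The only point worth spelling out is this measurability and independence justification. It is what allows the random selection to be pulled out of the expectation without introducing bias.
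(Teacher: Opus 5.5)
Your proposal is correct and follows the paper's proof essentially step for step: $\Psi(\mathbf u)\ge\Phi_{\widehat\lambda^{t}}(\mathbf u)$, then the quadratic lower bound for $\Phi_{\widehat\lambda^{t}}$ around $\mathbf w^t$, then the substitution $\Phi_{\widehat\lambda^{t}}(\mathbf w^t)=\Psi(\mathbf w^t)-\varepsilon_{\Psi,t}$; the only difference is that the paper gets the quadratic bound by citing Lemma~\ref{lem:weak_convexity}, whereas you derive it blockwise from $L$-smoothness using $0\le\widehat\lambda_i^t\le 1$. The unbiasedness claim is handled exactly as in the paper, by pulling the $(\mathcal F_t,\widehat\lambda^{\,t})$-measurable weight $\widehat\lambda_i^t$ out of the conditional expectation and applying Assumption~\ref{ass:variance}.
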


\begin{proof}
By the definition of $\Psi$ and
Lemma~\ref{lem:weak_convexity},
\begin{align}
\Psi(\mathbf u)
&\geq
\Phi_{\widehat{\lambda}^{\,t}}(\mathbf u) \nonumber\\
&\geq
\Phi_{\widehat{\lambda}^{\,t}}(\mathbf w^t)
+
\left\langle
\nabla\Phi_{\widehat{\lambda}^{\,t}}(\mathbf w^t),
\mathbf u-\mathbf w^t
\right\rangle
-
\frac{L}{2}
\|\mathbf u-\mathbf w^t\|^2.
\end{align}
Using $
\Phi_{\widehat{\lambda}^{\,t}}(\mathbf w^t)
=
\Psi(\mathbf w^t)-\varepsilon_{\Psi,t}
$
and
$\nabla\Phi_{\widehat{\lambda}^{\,t}}(\mathbf w^t)
=\bar{\boldsymbol{\xi}}^t$
gives \eqref{eq:approx_fairness_weak_subgradient}.
Equation~\eqref{eq:implemented_fairness_unbiased} follows from the
conditional independence of the selection and gradient
mini-batches.
\end{proof}

\begin{lemma}[Moreau-Envelope Properties]
\label{lem:moreau_properties}
Let $\mathcal L$ be $\kappa$-weakly convex and let
$\mu\in(0,1/\kappa)$. Then the proximal mapping is single-valued,
$\mathcal L_\mu$ is continuously differentiable, and
\begin{equation}
\nabla\mathcal L_\mu(\mathbf w)
=
\frac{1}{\mu}
\left(
\mathbf w-
\operatorname{prox}_{\mu\mathcal L}(\mathbf w)
\right).
\end{equation}
Moreover,
$
\frac{1}{\mu}
\left(
\mathbf w-
\operatorname{prox}_{\mu\mathcal L}(\mathbf w)
\right)
\in
\partial\mathcal L
\left(
\operatorname{prox}_{\mu\mathcal L}(\mathbf w)
\right)$.
\end{lemma}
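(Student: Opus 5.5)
The plan is to prove the standard Moreau-envelope facts for a $\kappa$-weakly convex function by reducing to the convex case. Fix $\mu\in(0,1/\kappa)$ and $\mathbf w$. The function
\[
\mathbf u\mapsto \mathcal L(\mathbf u)+\frac{1}{2\mu}\|\mathbf u-\mathbf w\|^2
=\Big(\mathcal L(\mathbf u)+\frac{\kappa}{2}\|\mathbf u\|^2\Big)+\Big(\frac{1}{2\mu}-\frac{\kappa}{2}\Big)\|\mathbf u\|^2-\frac1\mu\langle \mathbf u,\mathbf w\rangle+\frac{1}{2\mu}\|\mathbf w\|^2
\]
is the sum of a convex function (by Lemma~\ref{lem:weak_convexity}) and a $(1/\mu-\kappa)$-strongly convex quadratic. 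It is therefore strongly convex. Since $\mathcal L$ is finite-valued and continuous, being a sum of smooth terms and a supremum of continuous functions over a compact $\Lambda$, this function is proper, lower semicontinuous and coercive. It thus has a unique minimizer, so $\operatorname{prox}_{\mu\mathcal L}(\mathbf w)$ is single-valued.

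For the subgradient inclusion, I will apply Fermat's rule to the strongly convex problem at $\hat{\mathbf w}:=\operatorname{prox}_{\mu\mathcal L}(\mathbf w)$. This gives $0\in\partial\big(\mathcal L+\tfrac{\kappa}{2}\|\cdot\|^2\big)(\hat{\mathbf w})-\kappa\hat{\mathbf w}+\tfrac1\mu(\hat{\mathbf w}-\mathbf w)$. Here $\partial\mathcal L$ for a weakly convex function is understood as $\partial(\mathcal L+\frac\kappa2\|\cdot\|^2)-\kappa\,\mathrm{id}$, which agrees with the Fréchet/limiting subdifferential. The convex subdifferential sum rule is legitimate because the quadratic is finite and differentiable everywhere. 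Rearranging yields $\frac1\mu(\mathbf w-\hat{\mathbf w})\in\partial\mathcal L(\hat{\mathbf w})$.

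For differentiability and the gradient formula, the route I would take writes $\mathcal L_\mu(\mathbf w)=\frac{1}{2\mu}\|\mathbf w\|^2-\sup_{\mathbf u}\{\frac1\mu\langle\mathbf w,\mathbf u\rangle-\mathcal L(\mathbf u)-\frac1{2\mu}\|\mathbf u\|^2\}$. The supremand's negative in $\mathbf u$ is strongly convex, so the sup is a convex function of $\mathbf w$ with a unique maximizer $\hat{\mathbf w}$. By Danskin's theorem (or conjugate duality, since the sup is the conjugate of a strongly convex function evaluated at $\mathbf w/\mu$), it is differentiable with gradient $\hat{\mathbf w}/\mu$. This gives $\nabla\mathcal L_\mu(\mathbf w)=\frac1\mu(\mathbf w-\hat{\mathbf w})$. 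Continuity of the gradient follows from Lipschitz continuity of the prox map. To get that, I will add the two subgradient inclusions at $\mathbf w,\mathbf w'$ and use the hypomonotonicity $\langle \mathbf g-\mathbf g',\hat{\mathbf w}-\hat{\mathbf w}'\rangle\ge-\kappa\|\hat{\mathbf w}-\hat{\mathbf w}'\|^2$ of $\partial\mathcal L$. This yields $\|\hat{\mathbf w}-\hat{\mathbf w}'\|\le\frac{1}{1-\mu\kappa}\|\mathbf w-\mathbf w'\|$.

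The main obstacle is not computational but definitional: making precise which subdifferential $\partial\mathcal L$ denotes for the nonsmooth term $\rho\Psi$, and justifying the sum rule. I would handle this by defining $\partial\mathcal L$ through the convexified function as above, so that every step reduces to convex analysis. The hypomonotonicity estimate then follows directly from monotonicity of the convex subdifferential.
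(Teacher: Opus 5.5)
The paper gives no proof of this lemma. It is stated as a standard fact about Moreau envelopes of weakly convex functions, and only the gradient formula and the prox-optimality inequality are used later, in the proof of Lemma~\ref{lem:lemma4}. Your argument is a correct, self-contained derivation by convexification:
\begin{itemize}
\item Writing the prox objective as the convex function $\mathcal L+\frac{\kappa}{2}\|\cdot\|^2$ plus a $(1/\mu-\kappa)$-strongly convex quadratic gives existence and uniqueness of $\operatorname{prox}_{\mu\mathcal L}(\mathbf w)$.
\item Fermat's rule with the convex sum rule gives the inclusion $\frac{1}{\mu}(\mathbf w-\operatorname{prox}_{\mu\mathcal L}(\mathbf w))\in\partial\mathcal L(\operatorname{prox}_{\mu\mathcal L}(\mathbf w))$.
\item Writing $\mathcal L_\mu(\mathbf w)=\frac{1}{2\mu}\|\mathbf w\|^2-g^*(\mathbf w/\mu)$ with $g=\mathcal L+\frac{1}{2\mu}\|\cdot\|^2$ strongly convex gives differentiability and the gradient formula.
\end{itemize}
The algebra in each step checks out. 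This includes the hypomonotonicity computation giving $\|\operatorname{prox}_{\mu\mathcal L}(\mathbf w)-\operatorname{prox}_{\mu\mathcal L}(\mathbf w')\|\le\frac{1}{1-\kappa\mu}\|\mathbf w-\mathbf w'\|$.

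Compared with the paper's silent appeal to the literature, your route has two advantages. First, it fixes what $\partial\mathcal L$ means for the nonsmooth term $\rho\Psi$, namely $\partial(\mathcal L+\frac{\kappa}{2}\|\cdot\|^2)-\kappa\,\mathrm{id}$, which the paper leaves implicit. Second, it proves slightly more than the lemma states: the prox map is $\frac{1}{1-\kappa\mu}$-Lipschitz, so $\nabla\mathcal L_\mu$ is Lipschitz and not merely continuous.

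One small point: Danskin's theorem in its usual form needs a compact index set, and here the supremum runs over all of $\mathbb R^{nd}$. Rely instead on the conjugate-duality justification you also mention. The conjugate of a closed $\sigma$-strongly convex function is differentiable, with $\sigma^{-1}$-Lipschitz gradient equal to the unique maximizer. With $\sigma=1/\mu-\kappa$, this gives the gradient formula and the $\frac{1}{1-\kappa\mu}$-Lipschitz prox bound in one step, so the separate hypomonotonicity argument becomes a consistency check.
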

\vspace{-0.2cm}
\subsection{Technical Lemmas for the Convergence Analysis}
\label{subsec:key_lemmas}
We next present three technical lemmas that form the backbone of
the convergence analysis. Lemma~\ref{lem:lemma4} establishes a
one-step descent inequality for the Moreau envelope
$\mathcal L_\mu$. Lemma~\ref{lem:lemma5} controls the error caused
by evaluating the graph-coupling direction using stale and
compressed neighbor-model copies. Lemma~\ref{lem:lemma6} provides
a cumulative bound on the resulting communication residual under
the finite-horizon stepsize and trigger schedule. Together, these
results separate the optimization, fairness-selection, stochastic,
and communication-induced errors.

Let $\mathcal H_t
:=
\sigma\left(
\mathcal F_t,
\{B_{i,\mathrm{sel}}^t\}_{i=1}^n
\right)$ denote the information available after the fairness-selection
mini-batches have been observed. Hence,
$\widehat{\lambda}^{\,t}$ and $\varepsilon_{\Psi,t}$ are
$\mathcal H_t$-measurable.

Define the stochastic-gradient noise by
\begin{equation}
q_i^t
:=
\left(1+\rho\widehat{\lambda}_i^t\right)
\left(
g_i^t-\nabla F_i(w_i^t)
\right),
\qquad
\mathbf q^t:=(q_1^t,\ldots,q_n^t).
\label{eq:amplified_gradient_noise}
\end{equation}
By Assumption~\ref{ass:variance}, $\mathbb E
\left[
\mathbf q^t
\,\middle|\,
\mathcal H_t
\right] = \mathbf 0,
$
and
\begin{equation}
\mathbb E
\left[
\|\mathbf q^t\|^2
\,\middle|\,
\mathcal H_t
\right]
\leq
(1+\rho)^2
\sum_{i=1}^n\sigma_i^2
=
n(1+\rho)^2\bar\sigma^2.
\label{eq:amplified_noise_variance}
\end{equation}
Define the graph-coupling approximation error by
\begin{equation}
d_i^t
:=
r_i^t
-
\nabla_{w_i}\mathcal R_{\mathcal G}(\mathbf w^t),
\qquad
\mathbf d^t:=(d_1^t,\ldots,d_n^t).
\label{eq:graph_direction_error}
\end{equation}
Using the definition of $r_i^t$, this error can be written as
\begin{equation}
d_i^t
=
\alpha
\sum_{j\in\mathcal N_i}
a_{ij}
\left(
w_j^t-\widehat w_{j\to i}^t
\right).
\label{eq:graph_direction_error_expanded}
\end{equation}
The local direction used by Algorithm~\ref{alg:dmflsq} therefore
admits the decomposition
\begin{equation}
h_i^t
=
V_i^t+q_i^t+d_i^t,
\qquad
\mathbf h^t
=
\mathbf V^t+\mathbf q^t+\mathbf d^t.
\label{eq:update_direction_decomposition}
\end{equation}
Define the average weighted communication residual as
\begin{equation}
E^t
:=
\frac{1}{n}
\sum_{i=1}^{n}
\sum_{j\in\mathcal N_i}
a_{ij}
\left\|
\mathbf w_j^t-\widehat{\mathbf w}_{j\to i}^t
\right\|^2.
\label{eq:communication_residual}
\end{equation}

\begin{lemma}[One-Step Moreau-Envelope Descent]
\label{lem:lemma4}
Let
$0<\mu<\frac{1}{\kappa},
\kappa=(1+\rho)L.$
Under Assumptions~\ref{ass:smoothness},
\ref{ass:variance}, and \ref{ass:bounded_direction}, the iterates
generated by Algorithm~\ref{alg:dmflsq} satisfy
\begin{align}
\mathbb E
\left[
\mathcal L_\mu(\mathbf w^{t+1})
\right]
&\leq
\mathbb E
\left[
\mathcal L_\mu(\mathbf w^t)
\right]
-
\frac{1-\kappa\mu}{4}
\gamma_t
\mathbb E
\left[
\left\|
\nabla\mathcal L_\mu(\mathbf w^t)
\right\|^2
\right]
\nonumber\\
&\quad
+
\frac{\rho\gamma_t}{\mu}
\mathbb E[\varepsilon_{\Psi,t}]
+
\left(
\frac{\gamma_t}{1-\kappa\mu}
+
\frac{\gamma_t^2}{\mu}
\right)
\mathbb E
\left[
\|\mathbf d^t\|^2
\right]
\nonumber\\
&\quad
+
\frac{\gamma_t^2}{\mu}H^2
+
\frac{n(1+\rho)^2\gamma_t^2}{2\mu}
\bar\sigma^2.
\label{eq:moreau_one_step_descent}
\end{align}
\end{lemma}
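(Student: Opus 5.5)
The plan is the standard Davis–Drusvyatskiy argument for stochastic methods on weakly convex functions, adapted to the decomposition $\mathbf h^t=\mathbf V^t+\mathbf q^t+\mathbf d^t$ in \eqref{eq:update_direction_decomposition}.

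\textbf{Step 1 (comparison point).} Let $\widehat{\mathbf w}^t:=\operatorname{prox}_{\mu\mathcal L}(\mathbf w^t)$. By the definition of the Moreau envelope,
\[
\mathcal L_\mu(\mathbf w^{t+1})\le \mathcal L(\widehat{\mathbf w}^t)+\tfrac{1}{2\mu}\|\mathbf w^{t+1}-\widehat{\mathbf w}^t\|^2 .
\]
Expanding $\mathbf w^{t+1}=\mathbf w^t-\gamma_t\mathbf h^t$ gives
\[
\mathcal L_\mu(\mathbf w^{t+1})\le \mathcal L_\mu(\mathbf w^t)+\tfrac{\gamma_t}{\mu}\langle \mathbf h^t,\widehat{\mathbf w}^t-\mathbf w^t\rangle+\tfrac{\gamma_t^2}{2\mu}\|\mathbf h^t\|^2 .
\]

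\textbf{Step 2 (conditional expectation).} Take expectation given $\mathcal H_t$. Since $\widehat{\mathbf w}^t$, $\mathbf V^t$ and $\mathbf d^t$ are $\mathcal H_t$-measurable (the neighbor copies are determined by $\mathcal F_t$) and $\mathbb E[\mathbf q^t\mid\mathcal H_t]=0$, the $\mathbf q^t$ part of the inner product vanishes. For the quadratic term, $\mathbb E\|\mathbf h^t\|^2\le 2\mathbb E\|\mathbf V^t+\mathbf d^t\|^2+\ldots$, handled more carefully as follows. The cross term with $\mathbf q^t$ drops, so
\[
\mathbb E\|\mathbf h^t\|^2=\mathbb E\|\mathbf V^t+\mathbf d^t\|^2+\mathbb E\|\mathbf q^t\|^2\le 2H^2+2\mathbb E\|\mathbf d^t\|^2+n(1+\rho)^2\bar\sigma^2 .
\]
Multiplying by $\gamma_t^2/(2\mu)$ yields exactly the terms $\frac{\gamma_t^2}{\mu}H^2$, $\frac{\gamma_t^2}{\mu}\mathbb E\|\mathbf d^t\|^2$ and $\frac{n(1+\rho)^2\gamma_t^2}{2\mu}\bar\sigma^2$, using Assumption~\ref{ass:bounded_direction} and \eqref{eq:amplified_noise_variance}.

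\textbf{Step 3 (the key inner product).} I will bound $\langle \mathbf V^t,\widehat{\mathbf w}^t-\mathbf w^t\rangle$, which I expect to be the main obstacle. Write $\mathbf V^t=\nabla\big(\sum_iF_i\big)(\mathbf w^t)+\nabla\mathcal R_{\mathcal G}(\mathbf w^t)+\rho\bar{\boldsymbol\xi}^t$. Apply three lower models at $\mathbf u=\widehat{\mathbf w}^t$:
\begin{itemize}
\item $L$-weak convexity of $\sum_iF_i$ (lower quadratic bound from smoothness);
\item convexity of $\mathcal R_{\mathcal G}$;
\item Lemma~\ref{lem:approx_fairness_direction} for the fairness term.
\end{itemize}
Summing these gives
\[
\langle \mathbf V^t,\widehat{\mathbf w}^t-\mathbf w^t\rangle\le \mathcal L(\widehat{\mathbf w}^t)-\mathcal L(\mathbf w^t)+\tfrac{\kappa}{2}\|\widehat{\mathbf w}^t-\mathbf w^t\|^2+\rho\varepsilon_{\Psi,t}.
\]
The function-value gap is then controlled by the prox optimality: since $\mathcal L+\frac{1}{2\mu}\|\cdot-\mathbf w^t\|^2$ is $(1/\mu-\kappa)$-strongly convex and minimized at $\widehat{\mathbf w}^t$, evaluating at $\mathbf w^t$ gives
\[
\mathcal L(\widehat{\mathbf w}^t)-\mathcal L(\mathbf w^t)\le -\big(\tfrac1\mu-\tfrac\kappa2\big)\|\widehat{\mathbf w}^t-\mathbf w^t\|^2 .
\]
Combining, $\langle \mathbf V^t,\widehat{\mathbf w}^t-\mathbf w^t\rangle\le -\frac{1-\kappa\mu}{\mu}\|\widehat{\mathbf w}^t-\mathbf w^t\|^2+\rho\varepsilon_{\Psi,t}$. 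By Lemma~\ref{lem:moreau_properties}, $\|\widehat{\mathbf w}^t-\mathbf w^t\|=\mu\|\nabla\mathcal L_\mu(\mathbf w^t)\|$. After multiplying by $\gamma_t/\mu$, this produces $-\gamma_t(1-\kappa\mu)\|\nabla\mathcal L_\mu\|^2+\frac{\rho\gamma_t}{\mu}\varepsilon_{\Psi,t}$.

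\textbf{Step 4 (the stale-copy error).} Handle the remaining term $\frac{\gamma_t}{\mu}\langle\mathbf d^t,\widehat{\mathbf w}^t-\mathbf w^t\rangle=-\gamma_t\langle\mathbf d^t,\nabla\mathcal L_\mu(\mathbf w^t)\rangle$ by Young's inequality with weight $\frac{1-\kappa\mu}{2}$:
\[
-\gamma_t\langle\mathbf d^t,\nabla\mathcal L_\mu(\mathbf w^t)\rangle\le \tfrac{(1-\kappa\mu)\gamma_t}{4}\|\nabla\mathcal L_\mu\|^2+\tfrac{\gamma_t}{1-\kappa\mu}\|\mathbf d^t\|^2 .
\]
This leaves a net coefficient of $-\frac{3(1-\kappa\mu)}{4}\gamma_t$ on $\|\nabla\mathcal L_\mu\|^2$, which is stronger than the stated $-\frac{1-\kappa\mu}{4}\gamma_t$. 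The slack absorbs any looser constants, for instance if one prefers to bound $\mathbb E\|\mathbf h^t\|^2$ crudely.

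Taking total expectations then gives \eqref{eq:moreau_one_step_descent}. The delicate points are two: that $\mathbf d^t$ is $\mathcal H_t$-measurable, so it does not correlate with $\mathbf q^t$; and that the fairness gap enters only linearly, through Lemma~\ref{lem:approx_fairness_direction}.
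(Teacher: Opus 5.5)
Your proof is correct and follows the paper's argument step for step. The shared steps are: the prox point $\widehat{\mathbf w}^t=\operatorname{prox}_{\mu\mathcal L}(\mathbf w^t)$ as comparison point; the combined lower model from Lemma~\ref{lem:approx_fairness_direction}, weak convexity of $\sum_iF_i$ and convexity of $\mathcal R_{\mathcal G}$; conditioning on $\mathcal H_t$ to remove the $\mathbf q^t$ cross term; Young's inequality with the same weight on the $\mathbf d^t$ term; and the same bound on $\mathbb E\|\mathbf h^t\|^2$.

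The only difference is the prox-optimality step. You use the $(1/\mu-\kappa)$-strong convexity of the prox subproblem to get $\mathcal L(\mathbf w^t)-\mathcal L(\widehat{\mathbf w}^t)\ge(\tfrac1\mu-\tfrac\kappa2)\|\widehat{\mathbf w}^t-\mathbf w^t\|^2$, whereas the paper only uses feasibility of $\mathbf w^t$ to get $\tfrac{1}{2\mu}\|\widehat{\mathbf w}^t-\mathbf w^t\|^2$. Hence you end with coefficient $-\tfrac{3(1-\kappa\mu)}{4}\gamma_t$ on $\|\nabla\mathcal L_\mu(\mathbf w^t)\|^2$, where the paper obtains exactly $-\tfrac{1-\kappa\mu}{4}\gamma_t$; both imply the stated bound.
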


\begin{IEEEproof}
See Appendix~\ref{app:lem4}.
\end{IEEEproof}

\begin{lemma}[Communication-Residual Recursion]
\label{lem:lemma5}
Under Assumption~\ref{ass:compression} and the event-trigger
condition in Algorithm~\ref{alg:dmflsq}, there exist constants
$\chi_0,\chi_1,\chi_2>0$, independent of $t$ and $T$, such that
$\chi_0\omega\in(0,1]$ and
\begin{align}
\mathbb E[\mathcal E^{t+1}]
&\leq
(1-\chi_0\omega)\mathbb E[\mathcal E^t]
+
\chi_1\gamma_t^2
\frac{1}{n}
\mathbb E\|\mathbf h^t\|^2
+
\chi_2\vartheta_t^2.
\label{eq:communication_residual1}
\end{align}
Moreover, the graph-direction approximation error defined in
\eqref{eq:graph_direction_error} satisfies
\begin{equation}
\|\mathbf d^t\|^2
\leq
nK_{\mathcal G}\mathcal E^t,
\label{eq:graph_error_residual_relation}
\end{equation}
where $K_{\mathcal G}>0$ depends only on the graph weights and the
coupling parameter $\alpha$.
\end{lemma}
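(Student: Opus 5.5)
The plan is to prove the two claims in reverse order. The algebraic bound \eqref{eq:graph_error_residual_relation} is immediate; the recursion \eqref{eq:communication_residual1} needs a per-edge error-feedback argument.

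\textbf{Graph-error bound.} Start from the expanded form \eqref{eq:graph_direction_error_expanded}. For each $i$, apply Cauchy--Schwarz with weights $a_{ij}$:
\[
\|d_i^t\|^2\le \alpha^2\Big(\sum_{j\in\mathcal N_i}a_{ij}\Big)\sum_{j\in\mathcal N_i}a_{ij}\|w_j^t-\widehat w_{j\to i}^t\|^2\le \alpha^2 d_{\mathcal G}\sum_{j\in\mathcal N_i}a_{ij}\|w_j^t-\widehat w_{j\to i}^t\|^2 .
\]
Summing over $i$ and using the definition \eqref{eq:communication_residual} gives $\|\mathbf d^t\|^2\le n\alpha^2 d_{\mathcal G}\,\mathcal E^t$. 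Hence we may take $K_{\mathcal G}=\alpha^2 d_{\mathcal G}$, which is finite by Assumption~\ref{ass:graph}.

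\textbf{Recursion: reduction to one residual per client.} Each message $c_j^t$ is broadcast to all neighbors and applied identically, and all copies start equal at $t=0$. By induction, $\widehat w_{j\to i}^t=\widehat w_j^t$ for every $i\in\mathcal N_j$. So the residual on edge $(j\to i)$ is $e_j^t:=w_j^t-\widehat w_j^t$, and $\mathcal E^t=\frac1n\sum_j \deg^a_j\|e_j^t\|^2$, where $\deg^a_j=\sum_{i}a_{ij}\le d_{\mathcal G}$. The update gives $e_j^{t+1}=s_j^{t+1}-c_j^t$ with $s_j^{t+1}=e_j^t-\gamma_t h_j^t$.

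\textbf{Recursion: case split on the trigger.} If the trigger fires, Assumption~\ref{ass:compression} gives $\mathbb E\|e_j^{t+1}\|^2\le(1-\omega)\|s_j^{t+1}\|^2$. If it does not fire, $e_j^{t+1}=s_j^{t+1}$ with $\|s_j^{t+1}\|^2<\vartheta_t^2$. Combining both cases,
\[
\mathbb E\|e_j^{t+1}\|^2\le(1-\omega)\|s_j^{t+1}\|^2+\vartheta_t^2 .
\]
Next apply Young's inequality $\|a+b\|^2\le(1+\beta)\|a\|^2+(1+\beta^{-1})\|b\|^2$ with $\beta=\omega/2$. Since $(1-\omega)(1+\omega/2)\le 1-\omega/2$ and $(1-\omega)(1+2/\omega)\le 2/\omega$, this yields
\[
\mathbb E\|e_j^{t+1}\|^2\le(1-\tfrac{\omega}{2})\|e_j^t\|^2+\tfrac{2}{\omega}\gamma_t^2\|h_j^t\|^2+\vartheta_t^2 .
\]

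\textbf{Recursion: aggregation and constants.} Multiply by $\deg^a_j/n$, sum over $j$, and take total expectation. This gives \eqref{eq:communication_residual1} with $\chi_0=1/2$, $\chi_1=2d_{\mathcal G}/\omega$, and $\chi_2=d_{\mathcal G}\cdot\frac{1}{n}\sum_j 1$. The factor $\frac{1}{n}\mathbb E\|\mathbf h^t\|^2$ appears because $\sum_j\deg^a_j\|h_j^t\|^2\le d_{\mathcal G}\|\mathbf h^t\|^2$. We have $\chi_0\omega\le 1/2\in(0,1]$ as required.

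\textbf{Main obstacle.} The difficulty is that $\chi_1$ depends on $1/\omega$, while the statement asks for constants independent of $t$ and $T$ only. Since $\omega$ is a fixed compressor parameter, $1/\omega$ is a legitimate constant, so I will simply state this dependence explicitly.

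A second point needs care: the trigger decision is random and depends on $s_j^{t+1}$. The case split must therefore be done pointwise, conditioning on $s_j^{t+1}$ before averaging over the compressor randomness. It is also important that the compressor randomness is independent of the mini-batches given the history, so that Assumption~\ref{ass:compression} applies conditionally.
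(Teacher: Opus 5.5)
Your proposal is correct and follows essentially the same route as the paper. Both arguments use the common-copy reduction $\widehat w_{j\to i}^t=\widehat w_j^t$, a trigger case split combined with the contraction of Assumption~\ref{ass:compression} and Young's inequality, and aggregation with weighted degrees, arriving at the same constants $\chi_0=1/2$, $\chi_1=2d_{\mathcal G}/\omega$, $\chi_2=d_{\mathcal G}$, and $K_{\mathcal G}=\alpha^2 d_{\mathcal G}$. The only difference is cosmetic: you merge the two trigger cases before applying Young's inequality with $\beta=\omega/2$ rather than the paper's $\eta=\omega/[2(1-\omega)]$, which lets you avoid treating $\omega=1$ separately.
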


\begin{IEEEproof}
See Appendix~\ref{app:lem5}.
\end{IEEEproof}

\noindent Using the decomposition
$\mathbf h^t=\mathbf V^t+\mathbf q^t+\mathbf d^t$,
the conditional zero-mean property of $\mathbf q^t$, and
Assumption~\ref{ass:bounded_direction}, we obtain
\begin{equation}
\frac{1}{n}\mathbb E\|\mathbf h^t\|^2
\leq
B_h
+
2K_{\mathcal G}\mathbb E[\mathcal E^t],
\label{eq:local_direction_moment}
\end{equation}
where
$B_h:=2H^2/n+(1+\rho)^2\bar{\sigma}^2$.
Indeed,
$\mathbb E\|\mathbf h^t\|^2
=\mathbb E\|\mathbf V^t+\mathbf d^t\|^2
+\mathbb E\|\mathbf q^t\|^2
\leq
2H^2+2\mathbb E\|\mathbf d^t\|^2
+n(1+\rho)^2\bar{\sigma}^2$,
and \eqref{eq:graph_error_residual_relation} completes the bound.
Substituting \eqref{eq:local_direction_moment} into
\eqref{eq:communication_residual1} gives
\begin{align}
\mathbb E[\mathcal E^{t+1}]
&\leq
\left(
1-\chi_0\omega
+
2\chi_1K_{\mathcal G}\gamma_t^2
\right)
\mathbb E[\mathcal E^t]
+
\chi_1B_h\gamma_t^2
\nonumber\\
&\quad+
\chi_2\vartheta_t^2.
\label{eq:closed_residual_recursion}
\end{align}

\begin{lemma}[Average Communication-Residual Bound]
\label{lem:lemma6}
Suppose the conditions of Lemma~\ref{lem:lemma5} hold, and let
$\gamma_t=\gamma/\sqrt{T}$ and
$\vartheta_t=\vartheta_0/\sqrt{t+1}$. If $\gamma>0$ is sufficiently
small such that
$
2\chi_1K_{\mathcal G}\gamma^2
\leq
\frac{\chi_0\omega}{2},
$ then the closed recursion in
\eqref{eq:closed_residual_recursion} satisfies
\begin{equation}
\mathbb E[\mathcal E^{t+1}]
\leq
\left(1-\frac{\chi_0\omega}{2}\right)
\mathbb E[\mathcal E^t]
+
\chi_1B_h\gamma_t^2
+
\chi_2\vartheta_t^2.
\label{eq:contractive_residual_recursion}
\end{equation}
Consequently,
\begin{equation}
\frac{1}{T}
\sum_{t=0}^{T-1}
\mathbb E[\mathcal E^t]
\leq
\frac{2}{\chi_0\omega T}
\left[
\mathbb E[\mathcal E^0]
+
\chi_1B_h\gamma^2
+
\chi_2\vartheta_0^2(1+\log T)
\right].
\label{eq:communication_residual_average}
\end{equation}
\end{lemma}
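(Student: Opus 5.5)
The plan is to first convert the closed recursion \eqref{eq:closed_residual_recursion} into the contractive form \eqref{eq:contractive_residual_recursion}, and then telescope it. Under the finite-horizon schedule of Assumption~\ref{ass:trigger}, $\gamma_t^2=\gamma^2/T\le\gamma^2$ for every $T\ge 1$. The step-size condition $2\chi_1K_{\mathcal G}\gamma^2\le\chi_0\omega/2$ therefore gives
\[
1-\chi_0\omega+2\chi_1K_{\mathcal G}\gamma_t^2\le 1-\tfrac{\chi_0\omega}{2}.
\]
Since $\mathbb E[\mathcal E^t]\ge 0$, replacing the coefficient by this upper bound in \eqref{eq:closed_residual_recursion} yields \eqref{eq:contractive_residual_recursion}. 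Because $\chi_0\omega\in(0,1]$, the factor $1-\chi_0\omega/2$ lies in $[1/2,1)$.

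For the averaged bound, I will avoid unrolling the recursion into geometric sums and instead sum it directly. Write $a_t:=\mathbb E[\mathcal E^t]\ge 0$, $c:=\chi_0\omega/2$, $b_t:=\chi_1B_h\gamma_t^2+\chi_2\vartheta_t^2$ and $S:=\sum_{t=0}^{T-1}a_t$. Summing \eqref{eq:contractive_residual_recursion} over $t=0,\ldots,T-1$ gives
\[
\sum_{t=1}^{T}a_t\le(1-c)S+\sum_{t=0}^{T-1}b_t .
\]
Adding $a_0$ to both sides and dropping $a_T\ge 0$ on the left gives
\[
S\le a_0+(1-c)S+\sum_{t=0}^{T-1}b_t,
\qquad\text{that is,}\qquad
cS\le a_0+\sum_{t=0}^{T-1}b_t .
\]

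It remains to bound the forcing terms. The step-size part is $\sum_{t=0}^{T-1}\chi_1B_h\gamma^2/T=\chi_1B_h\gamma^2$, which is exactly why the finite-horizon scaling is used. The trigger part is $\chi_2\vartheta_0^2\sum_{t=0}^{T-1}(t+1)^{-1}=\chi_2\vartheta_0^2H_T$, and the standard harmonic estimate gives $H_T\le 1+\log T$. Dividing $cS\le a_0+\chi_1B_h\gamma^2+\chi_2\vartheta_0^2(1+\log T)$ by $cT=\chi_0\omega T/2$ gives exactly \eqref{eq:communication_residual_average}.

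There is no substantive obstacle. The only points requiring care are the nonnegativity of $a_t$, which is used both to weaken the contraction coefficient and to discard $a_T$, and checking that the condition on $\gamma$ is uniform in $T$ because $\gamma_t\le\gamma$.
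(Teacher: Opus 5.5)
Your proposal is correct and follows essentially the same route as the paper. You both bound the contraction coefficient via $\gamma_t^2=\gamma^2/T\le\gamma^2$, sum the contractive recursion and drop $\mathbb E[\mathcal E^T]\ge 0$, then use $\sum_{t=0}^{T-1}\gamma_t^2=\gamma^2$ and $\sum_{t=0}^{T-1}(t+1)^{-1}\le 1+\log T$ before dividing by $\chi_0\omega T/2$.
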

\begin{IEEEproof}
See Appendix~\ref{app:lem6}.
\end{IEEEproof}
\vspace{-0.5cm}
\subsection{Convergence Rate}
Since the objective is defined over client-specific models, stationarity
is measured with respect to the stacked variable
$\mathbf w=(w_1,\ldots,w_n)$ rather than a network-average model.
For $0<\mu<1/\kappa$, define the normalized Moreau-envelope
stationarity measure
\[
\mathcal S_\mu^t
=
\frac{1}{n}
\left\|
\nabla\mathcal L_\mu(\mathbf w^t)
\right\|^2.
\]
By Lemma~\ref{lem:moreau_properties}, a small value of $\mathcal S_\mu^t$ implies that
$\mathbf w^t$ is close to a point that is nearly stationary for the
original nonsmooth and possibly nonconvex objective. Thus, unlike
consensus-based decentralized optimization, the analysis
characterizes approximate stationarity of the full personalized
objective rather than of a consensus-restricted objective.

\noindent\textbf{Remark on fairness selection.}
For $\Lambda=\Delta_n$, if the scalar max-consensus step satisfies
$K_\Psi\geq\operatorname{diam}(\mathcal G)$, all clients agree on
the same empirical fairness maximizer
$\widehat{\lambda}^{\,t}$. This agreement does not necessarily
imply that the selected component maximizes the population fairness
envelope. The resulting discrepancy is quantified by
$\varepsilon_{\Psi,t}$ and controlled through
Assumption~\ref{ass:selection_accuracy}. Approximate or delayed
fairness consensus can similarly be incorporated into an enlarged
fairness-selection gap.

\begin{theorem}[Convergence of DMFL-SQ]
\label{thm:main}
Suppose Assumptions~\ref{ass:smoothness}--
\ref{ass:trigger} hold. Assume also that
$\mathcal L^\star:=\inf_{\mathbf w}\mathcal L(\mathbf w)>-\infty$.
Let $\mu\in(0,1/\kappa)$, where $\kappa=(1+\rho)L$, and run
DMFL-SQ for $T$ iterations with
$\gamma_t=\gamma/\sqrt{T}$ and
$\vartheta_t=\vartheta_0/\sqrt{t+1}$.

If $\gamma>0$ is sufficiently small such that
$2\chi_1K_{\mathcal G}\gamma^2\leq\chi_0\omega/2$, then
\begin{align}
\frac{1}{T}
\sum_{t=0}^{T-1}
\mathbb E[\mathcal S_\mu^t]
&\leq
\frac{4\Delta_\mu^0}
{n(1-\kappa\mu)\gamma\sqrt{T}}
+
\frac{4\rho B_\Psi}
{n\mu(1-\kappa\mu)\sqrt{T}}
\nonumber\\
&\quad
+
\frac{4\gamma H^2}
{n\mu(1-\kappa\mu)\sqrt{T}}
+
\frac{2\gamma(1+\rho)^2\bar{\sigma}^2}
{\mu(1-\kappa\mu)\sqrt{T}}
\nonumber\\
&\quad
+
\frac{8K_{\mathcal G}}
{\chi_0\omega T(1-\kappa\mu)}
\left(
\frac{1}{1-\kappa\mu}
+
\frac{\gamma}{\mu\sqrt{T}}
\right)
\nonumber\\
&\qquad\qquad\times
\left[
\mathbb E[\mathcal E^0]
+
\chi_1B_h\gamma^2
+
\chi_2\vartheta_0^2(1+\log T)
\right],
\label{eq:main_convergence_bound}
\end{align}
where
$\Delta_\mu^0:=\mathcal L_\mu(\mathbf w^0)-\mathcal L^\star$ and
$\mathcal S_\mu^t=n^{-1}
\|\nabla\mathcal L_\mu(\mathbf w^t)\|^2$.
All constants in \eqref{eq:main_convergence_bound} are independent
of $T$.
\end{theorem}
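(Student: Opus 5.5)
The plan is a standard telescoping argument built on Lemma~\ref{lem:lemma4}, with the communication error handled by Lemmas~\ref{lem:lemma5} and~\ref{lem:lemma6}. First, sum the one-step descent inequality \eqref{eq:moreau_one_step_descent} over $t=0,\ldots,T-1$ with the constant stepsize $\gamma_t=\gamma/\sqrt T$. The left side telescopes to $\mathbb E[\mathcal L_\mu(\mathbf w^T)]-\mathcal L_\mu(\mathbf w^0)$. Since $\mathcal L_\mu\ge \mathcal L^\star$ (the Moreau envelope lies above the infimum of $\mathcal L$), this is at least $-\Delta_\mu^0$. Moving the stationarity term to the left then gives
\begin{align*}
\frac{(1-\kappa\mu)\gamma}{4\sqrt T}\sum_{t=0}^{T-1}\mathbb E\|\nabla\mathcal L_\mu(\mathbf w^t)\|^2
&\le \Delta_\mu^0+\frac{\rho\gamma}{\mu\sqrt T}\sum_{t}\mathbb E[\varepsilon_{\Psi,t}]
+\Big(\frac{\gamma}{(1-\kappa\mu)\sqrt T}+\frac{\gamma^2}{\mu T}\Big)\sum_t\mathbb E\|\mathbf d^t\|^2\\
&\quad+\frac{\gamma^2}{\mu}H^2+\frac{n(1+\rho)^2\gamma^2}{2\mu}\bar\sigma^2 .
\end{align*}

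Next I bound each error term.

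\textbf{Fairness selection.} By the tower property and Assumption~\ref{ass:selection_accuracy}, $\mathbb E[\varepsilon_{\Psi,t}]\le\epsilon_{\Psi,t}$. Hence $\sum_t\mathbb E[\varepsilon_{\Psi,t}]\le B_\Psi\sqrt T$, which produces the $4\rho B_\Psi/(n\mu(1-\kappa\mu)\sqrt T)$ term after normalization.

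\textbf{Graph-coupling error.} Apply \eqref{eq:graph_error_residual_relation} to get $\mathbb E\|\mathbf d^t\|^2\le nK_{\mathcal G}\mathbb E[\mathcal E^t]$. The step-size condition $2\chi_1K_{\mathcal G}\gamma^2\le\chi_0\omega/2$ is exactly the hypothesis of Lemma~\ref{lem:lemma6}, so \eqref{eq:communication_residual_average} bounds $\sum_t\mathbb E[\mathcal E^t]$ by $\frac{2}{\chi_0\omega}[\mathbb E[\mathcal E^0]+\chi_1B_h\gamma^2+\chi_2\vartheta_0^2(1+\log T)]$, a quantity that is $\mathcal O(\log T)$ and not $\mathcal O(T)$.

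Finally, multiply through by $4\sqrt T/((1-\kappa\mu)\gamma T n)$ to convert to the normalized average of $\mathcal S_\mu^t$. The terms then map as follows:
\begin{itemize}
\item $\Delta_\mu^0$ gives $4\Delta_\mu^0/(n(1-\kappa\mu)\gamma\sqrt T)$.
\item The $H^2$ term gives $4\gamma H^2/(n\mu(1-\kappa\mu)\sqrt T)$.
\item The variance term gives $2\gamma(1+\rho)^2\bar\sigma^2/(\mu(1-\kappa\mu)\sqrt T)$; the factor $n$ cancels here.
\item The residual term becomes
\[
\frac{4}{(1-\kappa\mu)T}\,K_{\mathcal G}\Big(\frac{1}{1-\kappa\mu}+\frac{\gamma}{\mu\sqrt T}\Big)\frac{2}{\chi_0\omega}[\cdots],
\]
which matches the last line of \eqref{eq:main_convergence_bound}, with $8K_{\mathcal G}/(\chi_0\omega T(1-\kappa\mu))$ as the prefactor.
\end{itemize}
None of the constants involves $T$, which gives the final claim.

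The main obstacle is the bookkeeping around the residual term. The coefficient of $\mathbb E\|\mathbf d^t\|^2$ in Lemma~\ref{lem:lemma4} carries a first-order factor $\gamma_t/(1-\kappa\mu)$, not just $\gamma_t^2$. This term would only be $\mathcal O(1)$ unless the summed residual is $o(\sqrt T)$, so I will check carefully that Lemma~\ref{lem:lemma6} indeed gives a summed bound that is logarithmic in $T$ rather than an average bound of order one. I will also verify that the constant $B_h$ in Lemma~\ref{lem:lemma6} already absorbs the feedback of $\mathbf d^t$ into $\mathbf h^t$ via \eqref{eq:local_direction_moment}, so that no circular dependence remains. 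The remaining factors of $n$ and the exact constants are routine to track.
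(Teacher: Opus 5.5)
Your proposal is correct and follows the paper's proof essentially step for step: you telescope Lemma~\ref{lem:lemma4} using $\mathcal L_\mu\geq\mathcal L^\star$, normalize by $nT\gamma_t(1-\kappa\mu)/4$, bound the selection gap through the tower property and Assumption~\ref{ass:selection_accuracy}, and control $\mathbb E\|\mathbf d^t\|^2\leq nK_{\mathcal G}\mathbb E[\mathcal E^t]$ via Lemmas~\ref{lem:lemma5} and~\ref{lem:lemma6}. Your constant tracking reproduces every term of \eqref{eq:main_convergence_bound}, including the $8K_{\mathcal G}/(\chi_0\omega T(1-\kappa\mu))$ prefactor (your side worry about needing an $o(\sqrt T)$ summed residual is unnecessary: an $\mathcal O(\sqrt T)$ summed residual would already preserve the $T^{-1/2}$ rate, and Lemma~\ref{lem:lemma6} in fact gives $\mathcal O(\log T)$).
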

Consequently,
\begin{equation}
\frac{1}{T}
\sum_{t=0}^{T-1}
\mathbb E[\mathcal S_\mu^t]
=
\mathcal O(T^{-1/2})
+
\mathcal O\left(\frac{\log T}{T}\right).
\label{eq:main_rate}
\end{equation}
\begin{IEEEproof}
See Appendix~\ref{appendix:convergence}.
\end{IEEEproof}
\noindent Theorem~\ref{thm:main} establishes an
$\mathcal O(T^{-1/2})$ rate in expected squared
Moreau-envelope stationarity for the nonsmooth and 
nonconvex objective. The communication terms decay as
$\mathcal O((1+\log T)/T)$ and are therefore lower order.
The bound explicitly captures the effects of stochastic gradients,
empirical fairness selection, compression, and event-triggered
neighbor-model communication.

\vspace{-0.2cm}
\section{Simulation Results}
\label{sec:sim_res}
We evaluate \textsc{DMFL-SQ} in decentralized, heterogeneous, and
communication-constrained settings using CIFAR-10 and the real MUSMET EEG
dataset. For CIFAR-10, we use the standard 50k/10k train/test split and
partition the data across \(n=20\) clients using a Dirichlet distribution
with \(\alpha_{\rm Dir}=0.1\), inducing strong non-iid heterogeneity. Each
client trains a CNN with two convolutional layers followed by two fully
connected layers, and clients communicate over an undirected peer-to-peer
graph; a ring topology is used by default unless otherwise specified.

For MUSMET, we consider a 4-class EEG-based emotion recognition task
corresponding to aggressive, happy, relax, and sad states~\cite{musmet2025}.
The dataset contains synchronized EEG/audio recordings from \(20\)
musicians. Labels are obtained from the recording protocol, with each
\texttt{.xdf} file corresponding to one emotional condition. EEG signals
are bandpass-filtered over 1--40 Hz, segmented into 2-second
non-overlapping windows, and represented using Welch PSD features over
standard frequency bands. Each musician is treated as one client and trains
a 3-layer MLP with hidden dimensions $[256,128]$. For each musician and emotional-condition recording, the non-overlapping
2-s windows are split chronologically, with the first 80\% used for
training and the remaining 20\% for testing. Feature normalization is
computed from the training portion only and then applied to both sets.

All methods are implemented in PyTorch. DMFL-SQ uses mini-batch stochastic gradient descent with one local
update per communication round. For computational efficiency, the experiments use independent
fixed-size fairness-selection and gradient mini-batches, each of size
$32$. This fixed-batch implementation is used for finite-horizon empirical
evaluation and is not claimed to verify the asymptotic selection-gap condition in Assumption~\ref{ass:selection_accuracy}. The learning rate is $\gamma=0.1$. Unless otherwise stated, we use
$\alpha=0.4$, $\rho=0.5$, a sparsity ratio $k/P=0.01$, an
$8$-bit quantizer, and the event-trigger schedule
$\vartheta_t=0.5/\sqrt{t+1}$. The experimental compressor is
$\mathcal C=Q_8\circ\operatorname{Top}_k$, where $Q_8$ is
the support-preserving uniform 8-bit quantizer. Since
$\|Q_8(u)-u\|^2\leq
\frac{k}{2(2^8-1)^2}\|u\|^2$ and
$\|\operatorname{Top}_k(v)\|^2\geq(k/P)\|v\|^2$, 
the composite compressor satisfies Assumption~\ref{ass:compression} by taking
$\omega=\frac{k}{P}
\left(1-\frac{k}{2(2^8-1)^2}\right)>0$;
for CIFAR-10, this gives
$\omega\approx8.37\times10^{-3}$.

All communication-efficiency plots report
$\overline{B}_{\mathrm{client}}$, which accounts for sparse-quantized
model-copy messages, coordinate-index or mask overhead, quantizer scales,
message headers, graph-degree effects, and scalar fairness-consensus
messages; we use $b_q=8$, $b_{\mathrm{scale}}=32$,
$b_{\mathrm{loss}}=32$, $b_{\mathrm{hdr}}=32$, and
$b_{\mathrm{trig}}=0$ bits, with silence indicating an inactive trigger.

\begin{figure*}[t]
\begin{minipage}{0.33\linewidth}\centering
  \resizebox{0.95\columnwidth}{!}{%
    \includegraphics[scale=0.25]{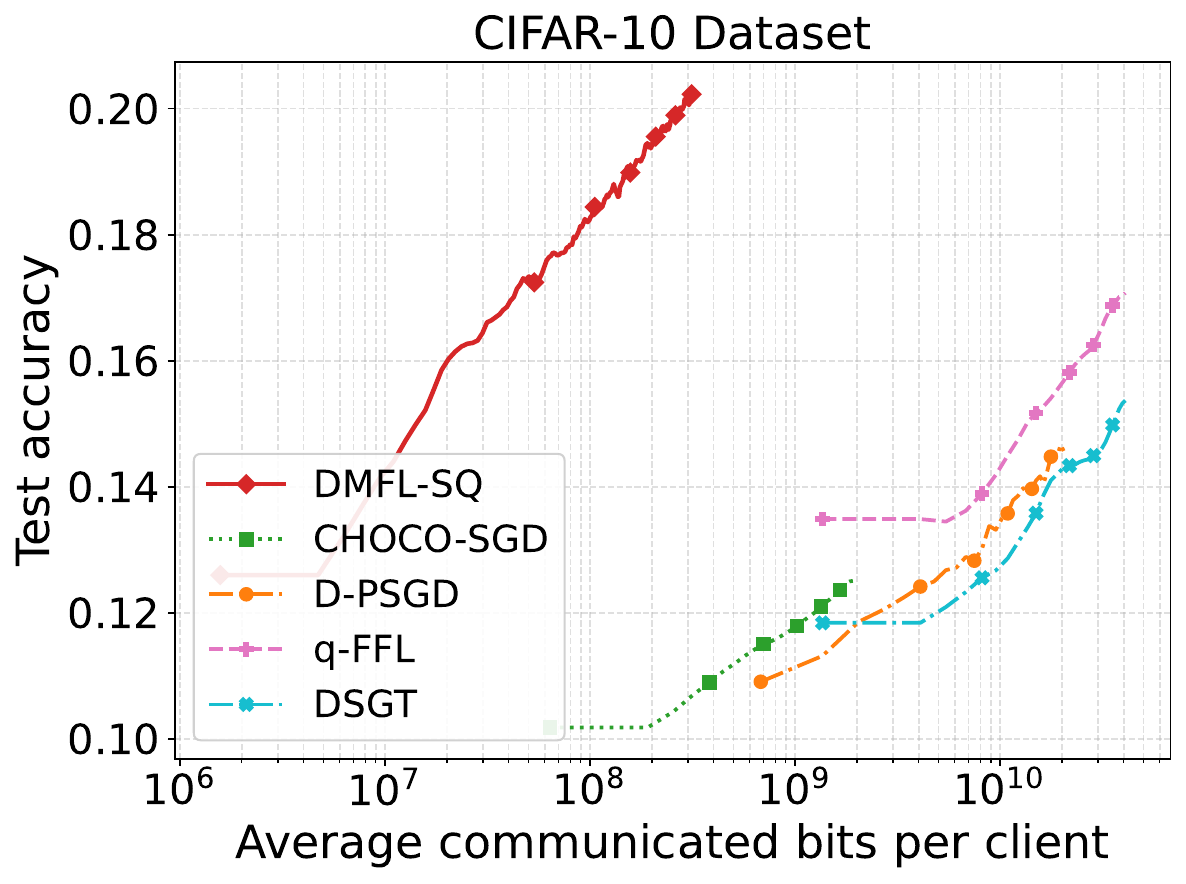}}
  \caption{{Test accuracy versus average communicated bits per client on the CIFAR-10 dataset.
}}
  \label{fig:comm_tradeoff}
\end{minipage}
\begin{minipage}{0.33\linewidth}\centering
  \resizebox{0.95\columnwidth}{!}{%
    \includegraphics[scale=0.25]{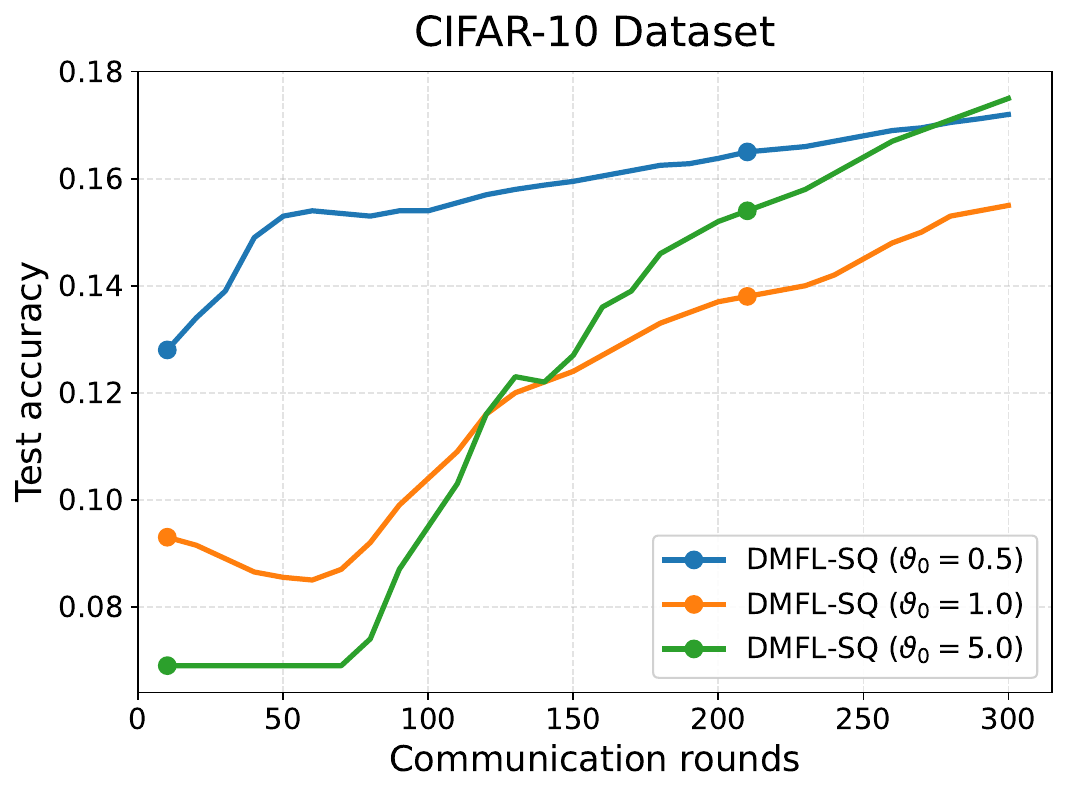}}
  \caption{{Ablation on the event-trigger threshold $\vartheta_0$ on the CIFAR-10 dataset.
}}
  \label{fig:ablation_eta} 
\end{minipage}
\begin{minipage}{0.32\linewidth}\centering
  \resizebox{0.95\columnwidth}{!}{%
    \includegraphics[scale=0.25]{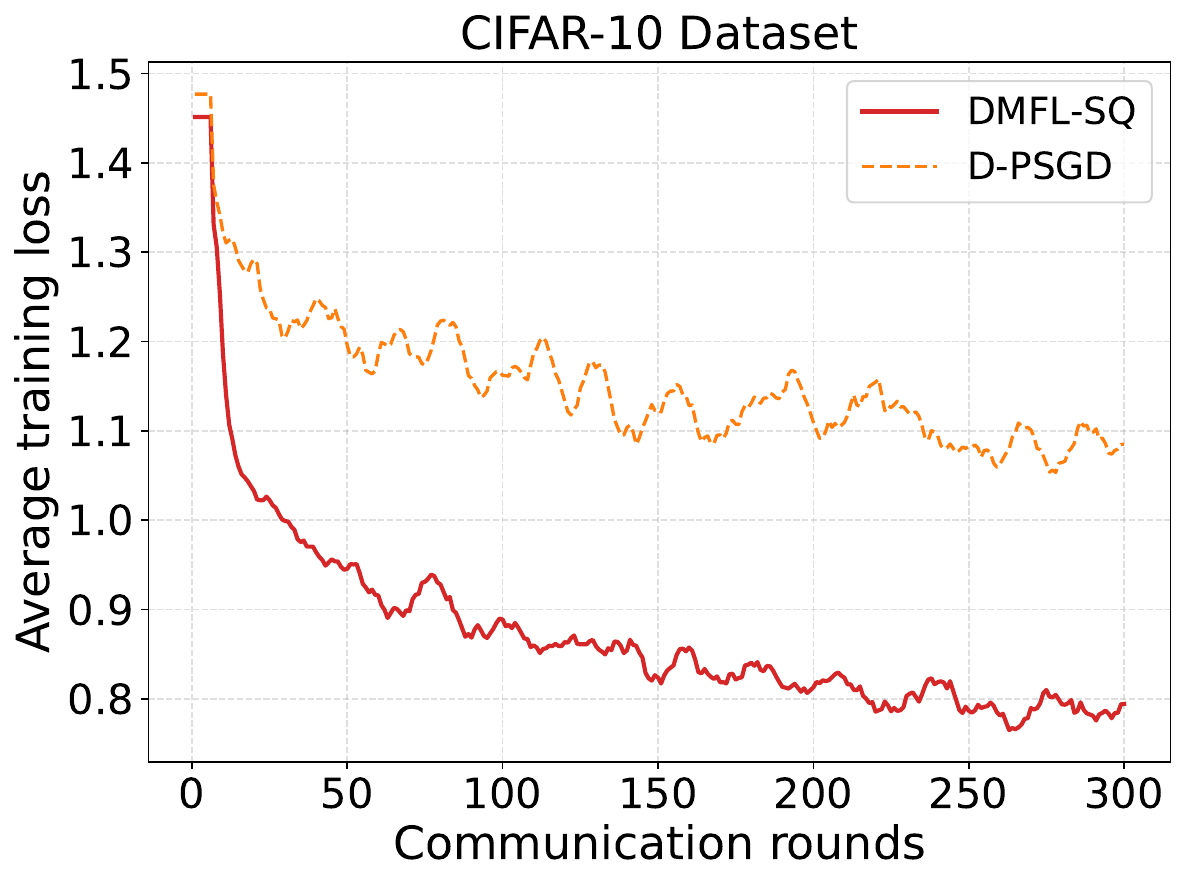}}
  \caption{{Average training loss versus communication rounds on the CIFAR-10 dataset.}}
  \label{fig:consensus}
\end{minipage}
\end{figure*}

\begin{figure*}[t]
\begin{minipage}{0.33\linewidth}\centering
  \resizebox{0.95\columnwidth}{!}{%
    \includegraphics[scale=0.23]{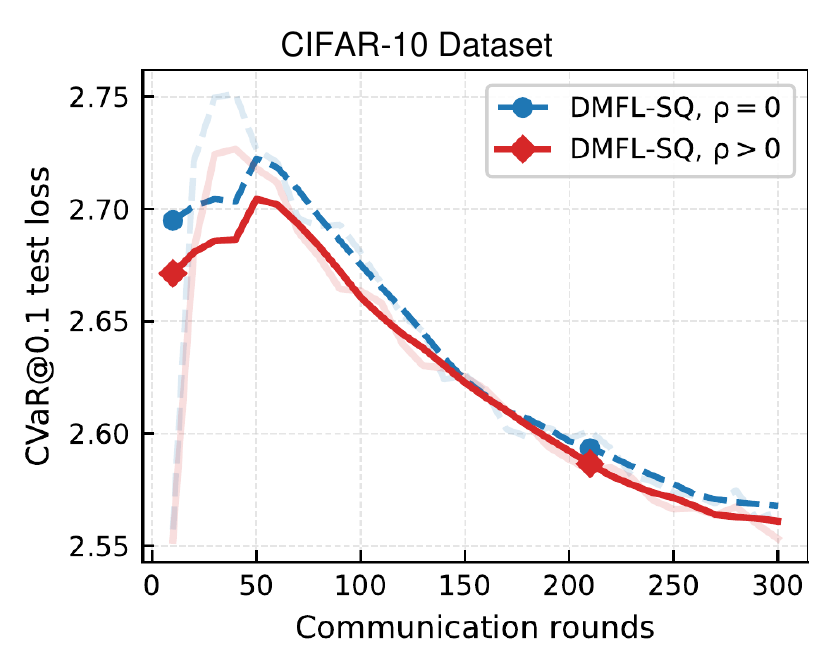}}
  \caption{{Effect of fairness regularization on CIFAR-10 tail risk (CVaR@0.1).
}}
  \label{fig:fair1}
\end{minipage}
\begin{minipage}{0.33\linewidth}\centering
  \resizebox{0.95\columnwidth}{!}{%
    \includegraphics[scale=0.25]{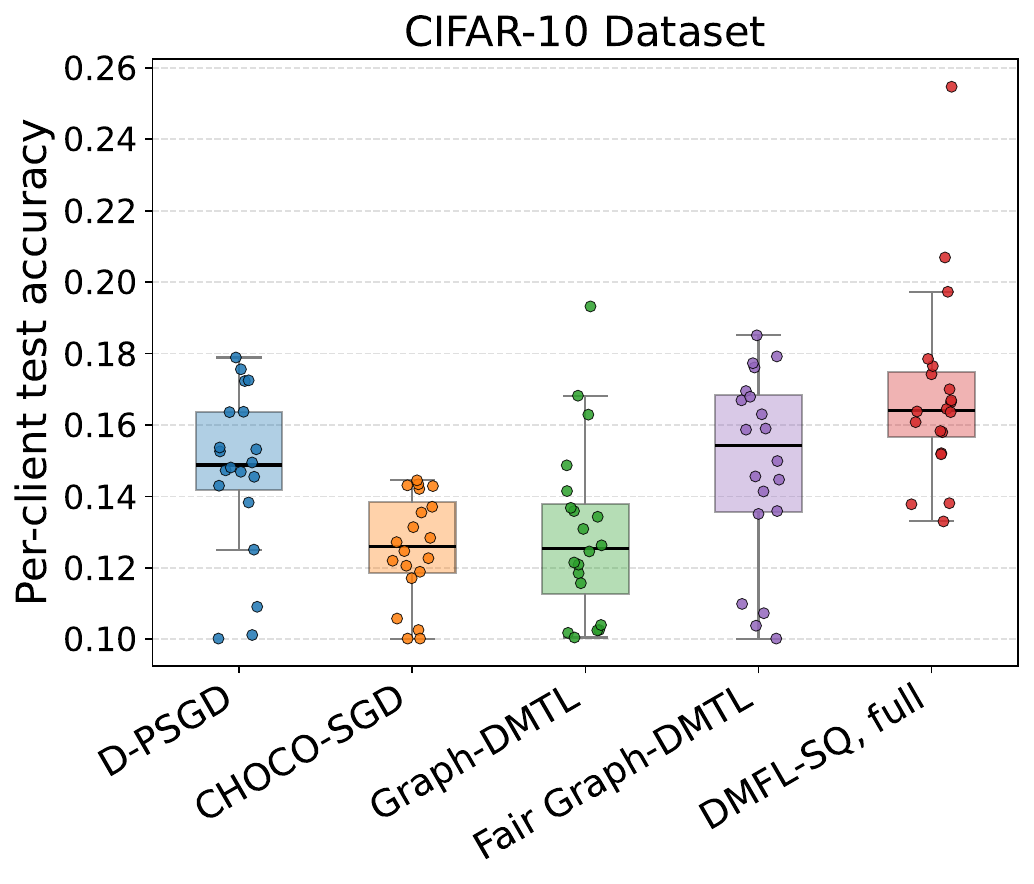}}
  \caption{{Client-level test-accuracy distribution on non-iid CIFAR-10.
}}
  \label{fig:client} 
\end{minipage}
\begin{minipage}{0.32\linewidth}\centering
  \resizebox{0.95\columnwidth}{!}{%
    \includegraphics[scale=0.25]{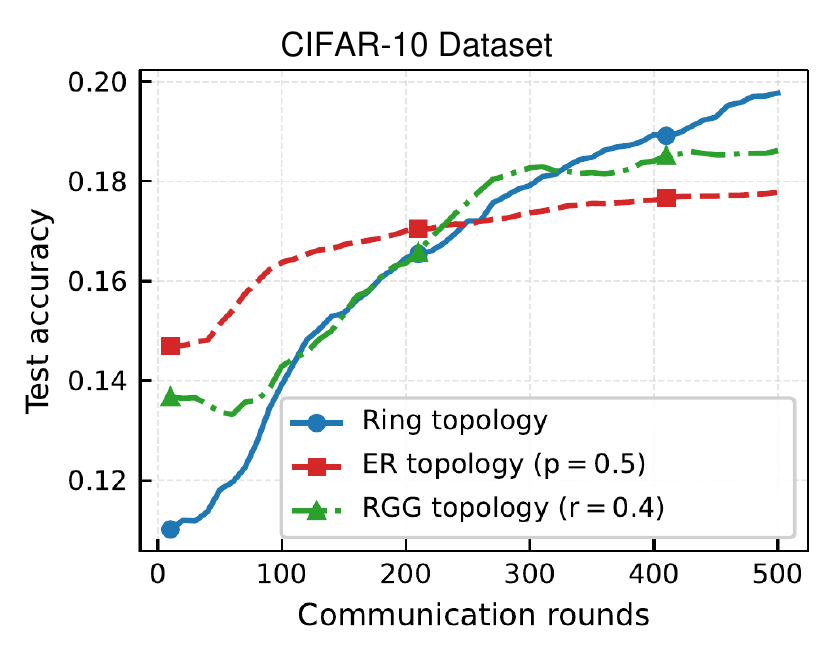}}
  \caption{{Effect of network topology on DMFL-SQ (CIFAR-10, non-iid).
}}
  \label{fig:topo_ring_er}
\end{minipage}
\end{figure*}

\begin{figure*}[t]
\begin{minipage}{0.33\linewidth}\centering
  \resizebox{0.95\columnwidth}{!}{%
    \includegraphics[scale=0.25]{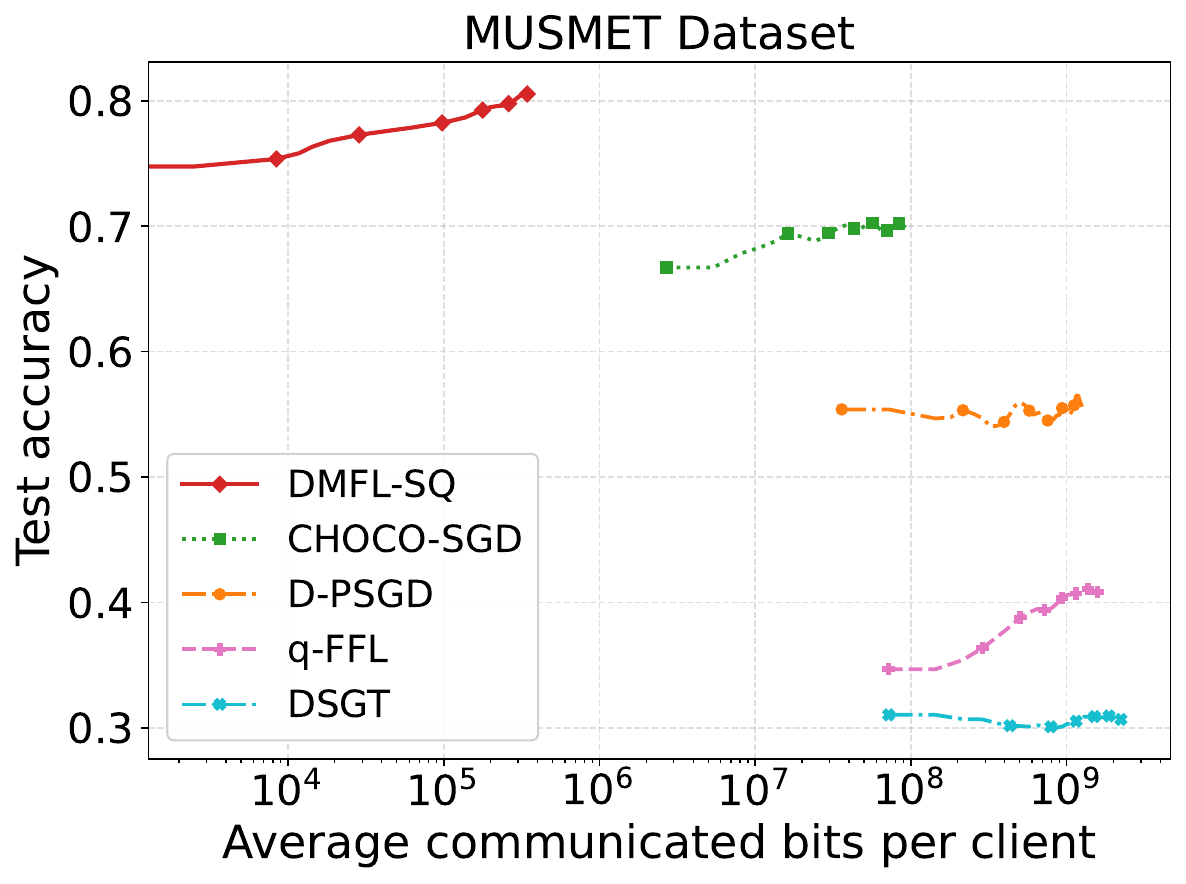}}
  \caption{{Test accuracy versus average communicated bits per client on the MUSMET dataset.
}}
  \label{fig:comm_tradeoff_m}
\end{minipage}
\begin{minipage}{0.33\linewidth}\centering
  \resizebox{0.95\columnwidth}{!}{%
    \includegraphics[scale=0.25]{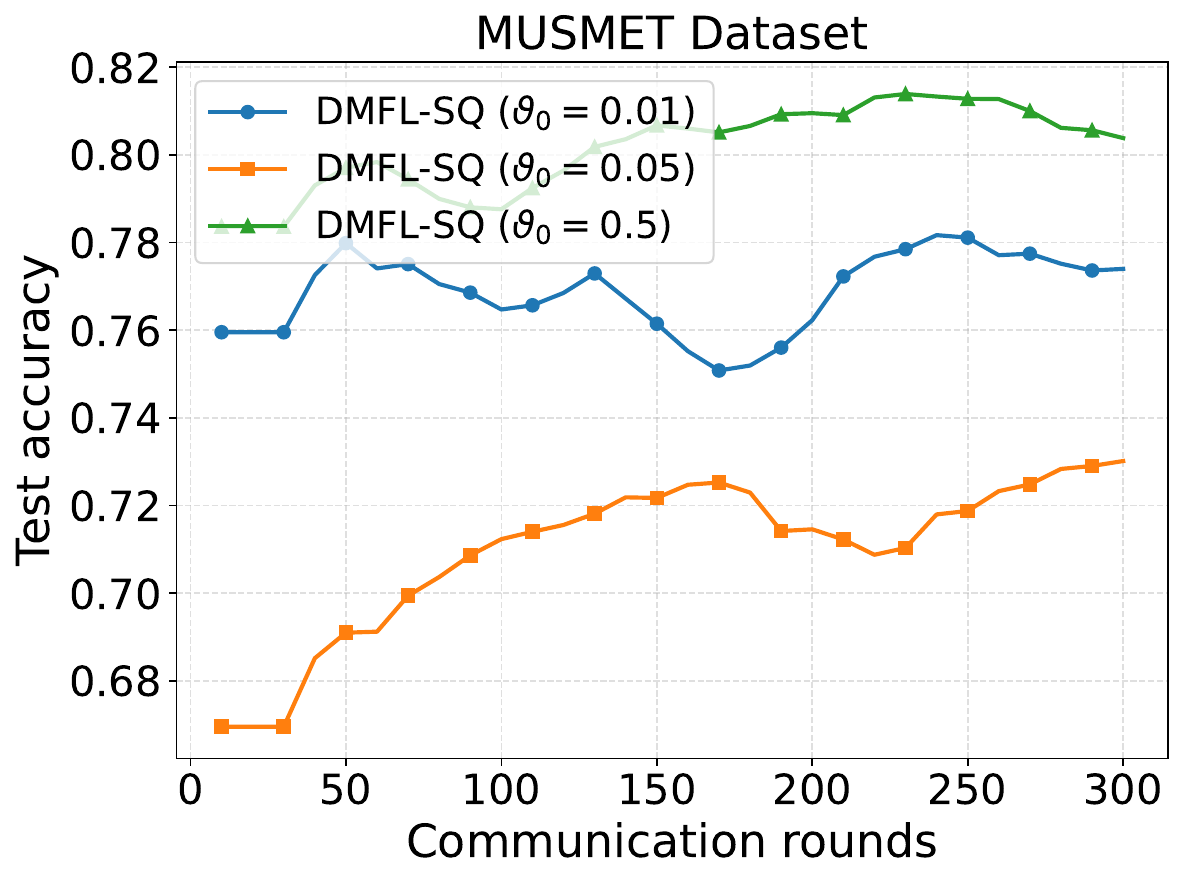}}
  \caption{{Ablation on the event-trigger threshold $\vartheta_0$ on the MUSMET dataset.
}}
  \label{fig:ablation_eta_m} 
\end{minipage}
\begin{minipage}{0.32\linewidth}\centering
  \resizebox{0.95\columnwidth}{!}{%
    \includegraphics[scale=0.25]{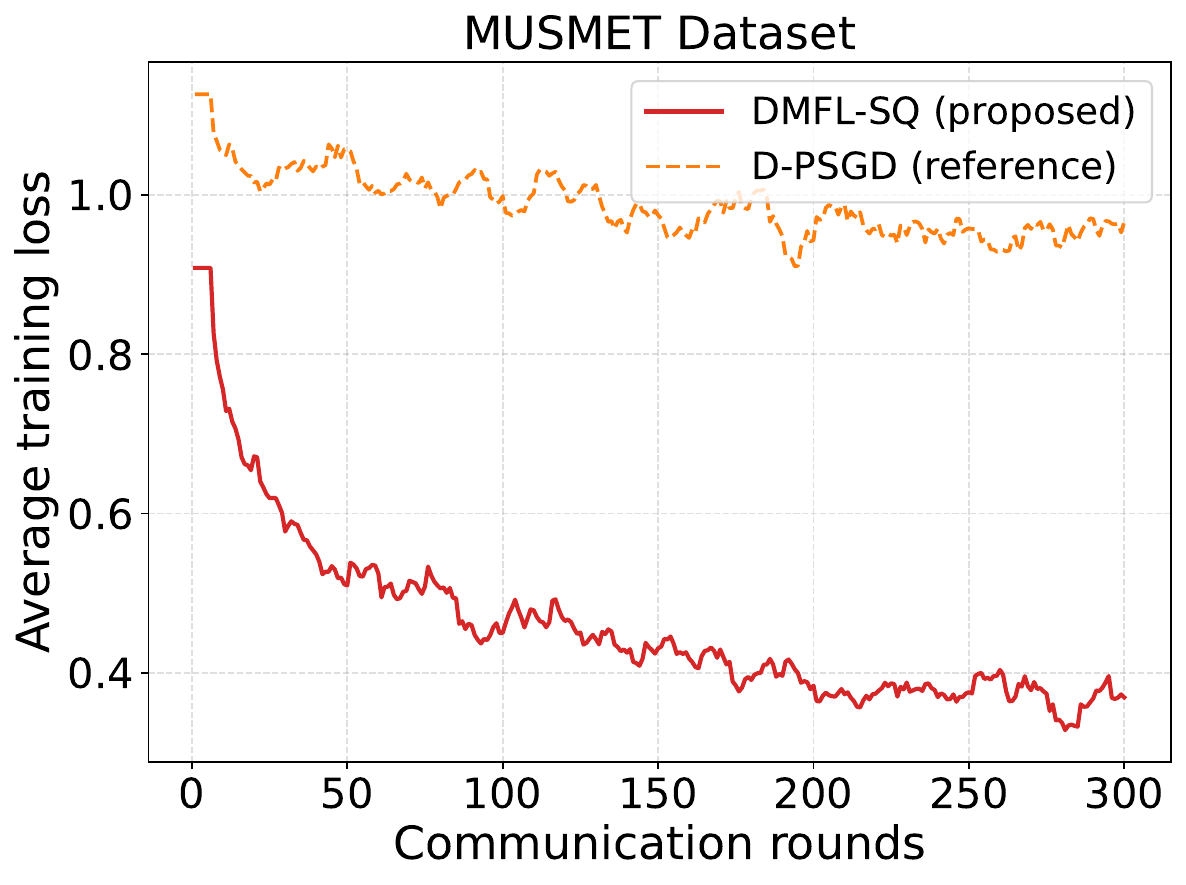}}
  \caption{{Average training loss versus communication rounds on the MUSMET dataset.}}
  \label{fig:consensus_m}
\end{minipage}
\end{figure*}

\begin{figure*}[t]
\begin{minipage}{0.33\linewidth}\centering
  \resizebox{0.95\columnwidth}{!}{%
    \includegraphics[scale=0.25]{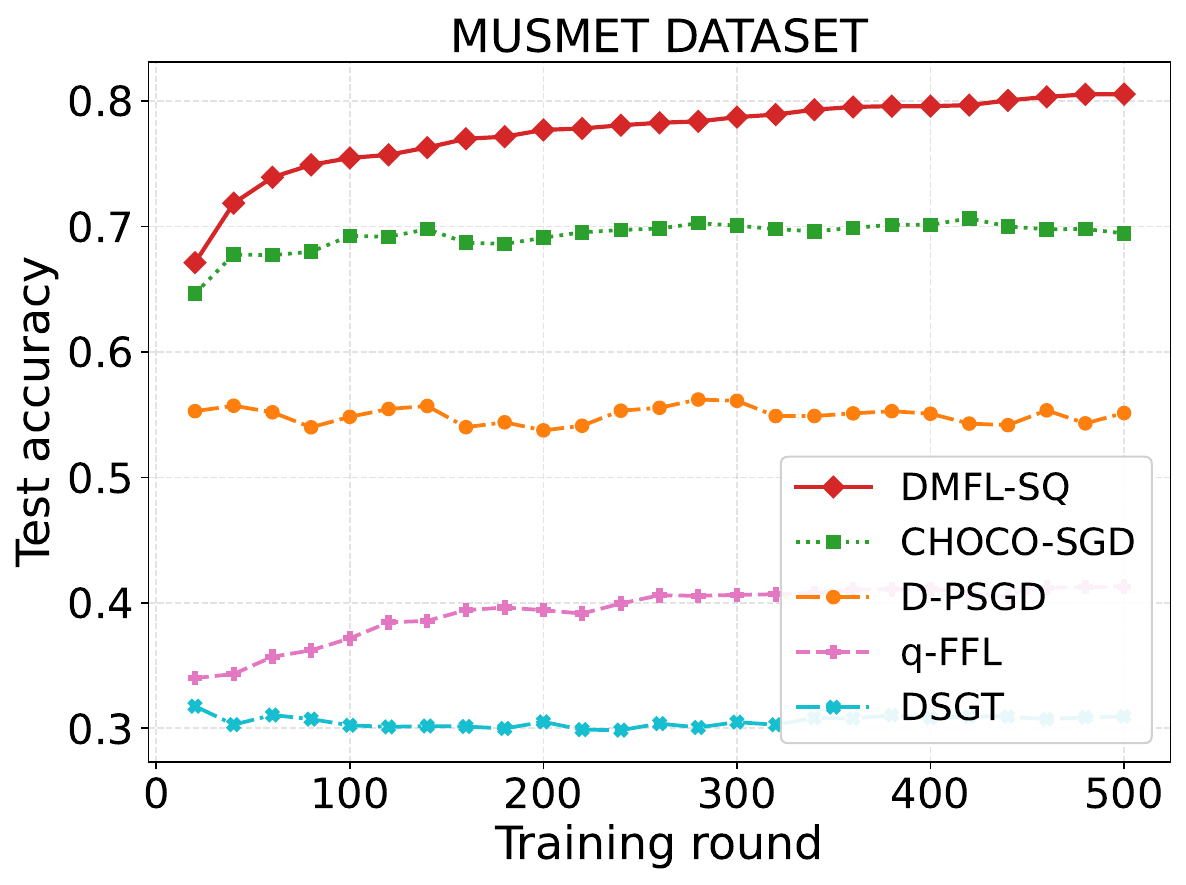}}
  \caption{{Test accuracy versus training rounds on the MUSMET dataset.
}}
  \label{fig:train_tradeoff_m2}
\end{minipage}
\begin{minipage}{0.33\linewidth}\centering
  \resizebox{0.95\columnwidth}{!}{%
    \includegraphics[scale=0.25]{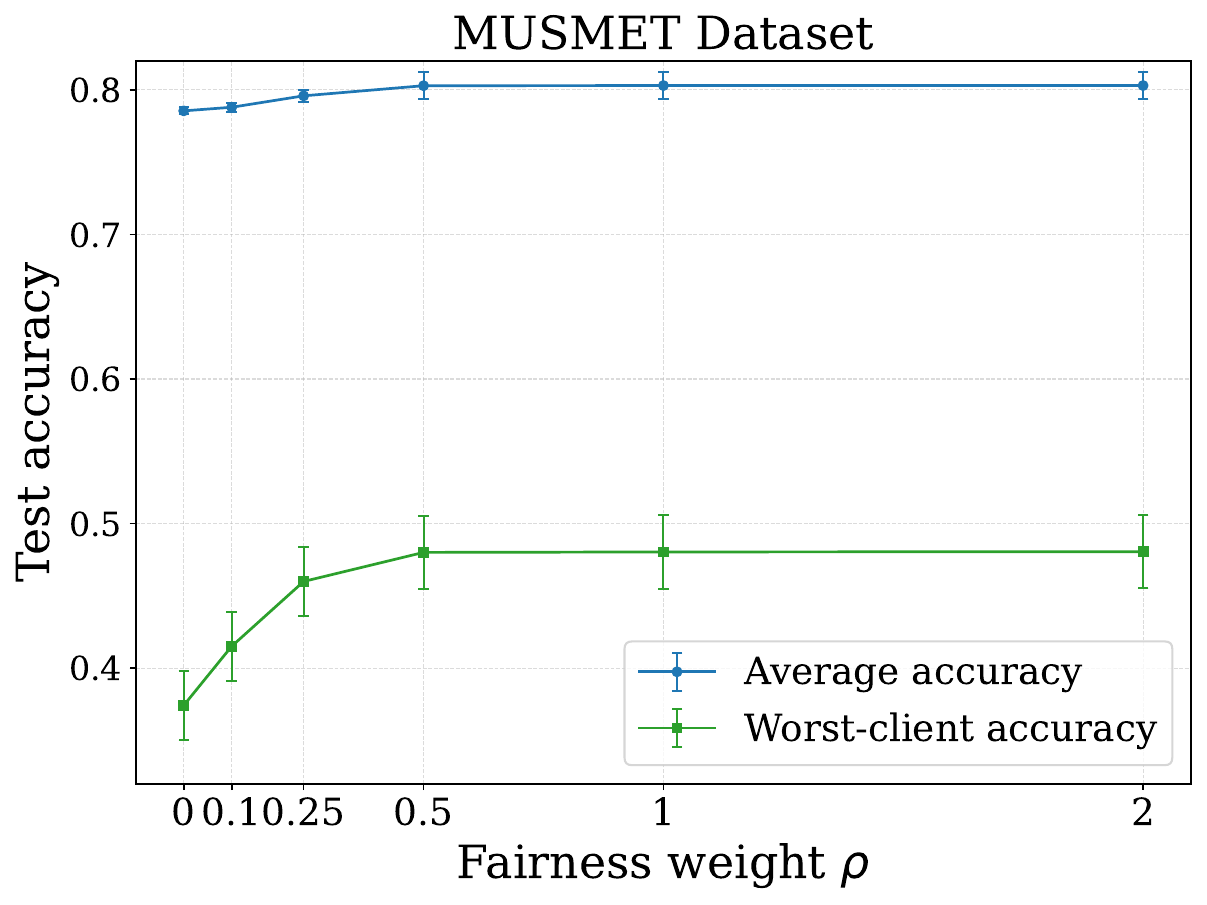}}
  \caption{Sensitivity of DMFL-SQ to the fairness weight $\rho$ on the MUSMET dataset. }
  \label{fig:sensitivity_rho_m} 
\end{minipage}
\begin{minipage}{0.32\linewidth}\centering
  \resizebox{0.95\columnwidth}{!}{%
    \includegraphics[scale=0.25]{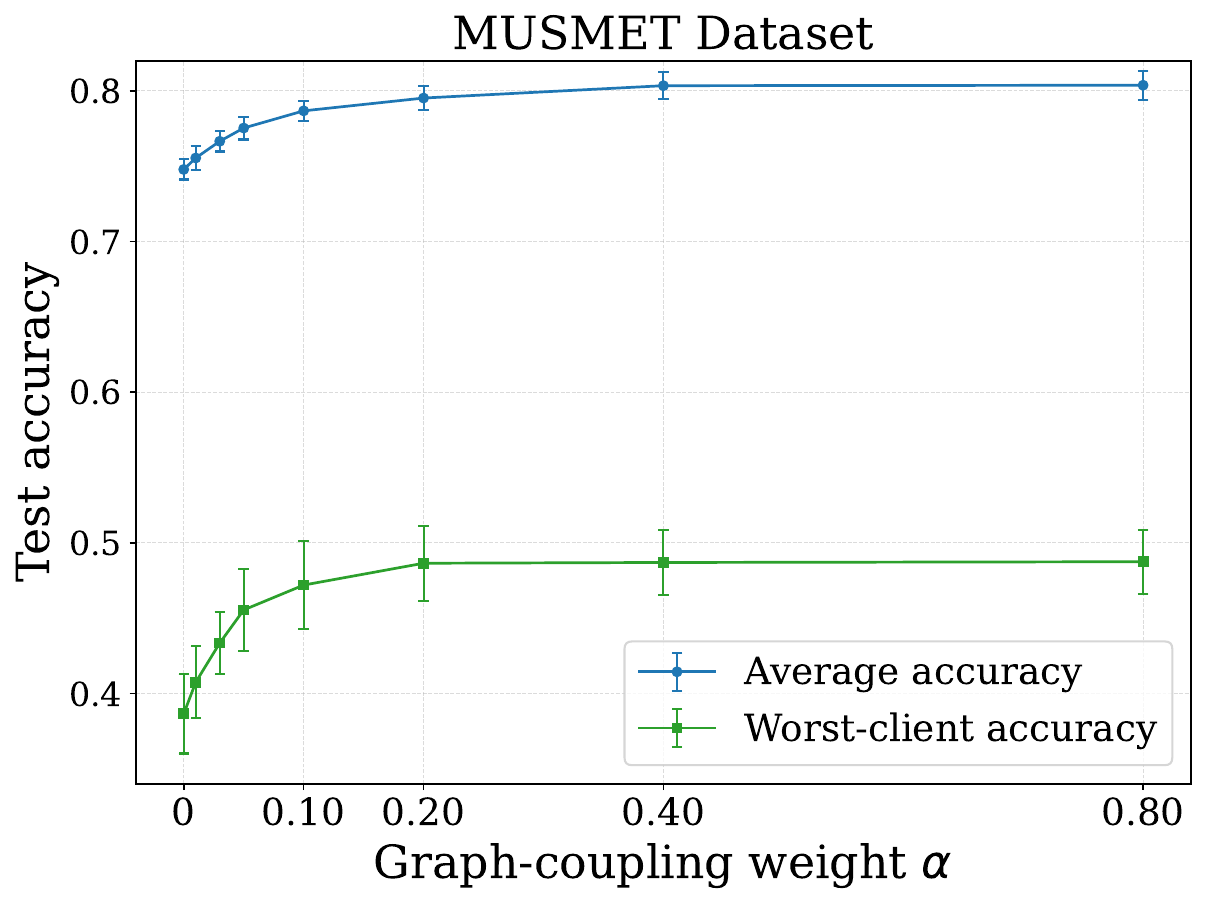}}
\caption{Sensitivity of DMFL-SQ to the graph-coupling weight $\alpha$ on the MUSMET dataset.  }
   \label{fig:sensitivity_alpha_m}
\end{minipage}
\end{figure*}

The proposed DMFL-SQ  algorithm is compared with the following benchmark algorithms:
\begin{itemize}
\item {D-PSGD} performs local stochastic-gradient updates followed by
dense model mixing among neighboring clients~\cite{lian2017can}.
\item \textsc{CHOCO-SGD} is a
communication-efficient decentralized method that replaces dense
neighbor exchanges with compressed gossip updates~\cite{pmlr-v97-koloskova19a}.
\item \textsc{DSGT}
augments decentralized optimization with an auxiliary gradient-tracking
variable to better estimate the network-wide descent direction~\cite{jiaqizhang19}.
\item {q-FFL} is server-based  and is included primarily as a fairness-aware
reference~\cite{li2019fair}. Its communication cost includes both uplink and downlink:
each client transmits and receives one full-precision $P$-dimensional
vector per round, giving $B_{\rm qFFL,client}=2TPb_{\rm fp}$ with
$b_{\rm fp}=32$.
\end{itemize}
For baseline parity, all methods use the same MUSMET client partitions,
model architecture, mini-batch size, and five random
seeds, while all decentralized methods use the same communication graph. The principal baseline settings are: D-PSGD with learning
rate $0.035$; CHOCO-SGD with learning rate $0.018$, consensus step $0.25$,
sparsity ratio $0.1$, 8-bit quantization, and weight decay $5\times10^{-5}$;
q-FFL with learning rate $0.18$, $q=1$, $L=2$, and gradient clipping at
$0.5$; and DSGT with stepsize $0.045/\sqrt{t+1}$ and tracker
clipping at $1.2$. All methods use the same fixed training horizon $T=500$ and perform one local optimization update per training round, while DSGT
requires one additional gradient evaluation per client for tracker
initialization.
For each realized graph, we set
$K_\Psi=\operatorname{diam}(\mathcal G)$ and use the realized
$|\mathcal E|$ in the communication accounting. With $n=20$ and
$b_{\rm loss}=32$, the fairness-consensus cost is
$B_{\Psi,t}=74K_\Psi|\mathcal E|$ bits per round. For the ring,
ER $(p=0.5)$, and RGG $(r=0.4)$ realizations,
$(|\mathcal E|,K_\Psi)$ is $(20,10)$, $(95,2)$, and $(60,4)$,
respectively, corresponding to $740$, $703$, and $888$
bits/client/round. The reported total communication uses the actual
per-round trigger indicators in $B_{\rm model,t}$.

\begin{table*}[t]
\centering
\caption{Accuracy-fairness-communication comparison on the MUSMET dataset. Performance metrics are reported as mean $\pm$ 95\% CI across $R = 5$ seeds.}
\label{tab:ablation}
\begin{adjustbox}{max width=\textwidth}
\begin{tabular}{lcccccc}
\hline
Method & Avg. Acc. $\uparrow$ & Worst Acc. $\uparrow$ & Bottom-10\% Acc. $\uparrow$ & Acc. Std $\downarrow$ & Jain's Index $\uparrow$ & Bits/Client $\downarrow$ \\
\hline
DSGT & $0.3132\pm 0.0541$ & $0.0501 \pm 0.0586$ & $0.0610 \pm 0.0592$ & $0.1981 \pm 0.0666$ & $0.7103 \pm 0.1719$ & $1998.3$ M \\
q-FFL & $0.4114\pm 0.0155$ & $0.0716 \pm 0.1173$ & $0.1091 \pm 0.0849	$ & $0.2025 \pm 0.0880	$ & $0.7983 \pm 0.1385$ & $1618.0$ M \\
D-PSGD & $		0.5716\pm 0.0556$ & $0.1957 \pm 0.1004$ & $0.2236 \pm 0.0932$ & $0.2156 \pm 0.0304$ & $0.8713 \pm 0.0520$ & $1438.6$ M \\
CHOCO-SGD & $0.6928\pm 0.0342$ & $0.1935\pm 0.1205$ & $0.2320 \pm 0.0665	$ & $	0.2163 \pm 0.0256$ & $	0.9074 \pm 0.0128	$ & $97.4$ M \\
\textbf{DMFL-SQ} & $0.8030\pm0.0096$ & $0.4810 \pm 0.0225$ & $0.5160 \pm 0.0260$ & $	0.1210 \pm 0.0058$ & $0.9779 \pm 0.0047$ & $0.6$ M \\
\hline
\end{tabular}
\end{adjustbox}
\end{table*}

\begin{table*}[t]
\centering
\caption{Component ablation of DMFL-SQ on MUSMET Dataset.
Performance metrics are reported as mean $\pm$ 95\% CI across $R = 5$ seeds.}
\label{tab:component_ablation}
\begin{tabular}{lcccccc}
\hline
Variant & Avg. Acc. $\uparrow$ & Worst Acc. $\uparrow$ & Bottom-10\% Acc. $\uparrow$ & Jain's Index $\uparrow$ & Bits/Client $\downarrow$ \\
\hline
No fairness ($\rho=0$) & $0.7869 \pm 0.0024$ & $0.3743\pm 0.0241$ & $0.4065 \pm 0.0241$ & $0.9521\pm 0.0041$ & 0.3 M \\
No graph coupling ($\alpha=0$) & $0.7474 \pm 0.0157$ & $0.3862\pm 0.0263$ & $0.4206\pm 0.0207$ & $0.9554\pm 0.0042$ & 0.2 M \\
No event-triggering & $0.8044 \pm 0.0211$ & $0.4397\pm0.0178$ & $0.4850\pm  0.0211$ & $0.9748 \pm 0.0046$ & 4.0 M\\
No sparsification & $0.8187 \pm 0.0169$ & $0.4504\pm 0.0218$ & $0.4905 \pm 0.0211$ & $0.9737 \pm 0.0043$ & 2.2 M\\
No quantization & $0.8074 \pm 0.0067$ & $0.4405 \pm 0.0236$ & $0.4820\pm 0.0210$ & $0.9746 \pm 0.0045$ & 5.3 M\\
\textbf{DMFL-SQ} & $0.8030 \pm 0.0096$ & $0.4810 \pm 0.0225 $ & $0.5160\pm 0.0260$ & $0.9779\pm 0.0047$ & $0.6$ M\\
\hline
\end{tabular}
\end{table*}

Fig.~\ref{fig:comm_tradeoff} shows the test accuracy versus average
communicated bits per client on non-iid CIFAR-10. DMFL-SQ reaches higher
accuracy with substantially fewer communicated bits than the dense decentralized baselines D-PSGD and DSGT. Compared with CHOCO-SGD, DMFL-SQ
achieves better final accuracy while remaining communication efficient,
illustrating the benefit of combining personalization, fairness, compression,
and event-triggered communication.

\noindent\textbf{Remark.} The modest CIFAR-10 accuracies reflect the challenging setting considered here: $20$ clients, a strongly non-iid Dirichlet split with $\alpha_{\mathrm{Dir}}=0.1$, peer-to-peer communication only, and no central server or global aggregation. Thus, the results should be interpreted as communication-fairness-accuracy trade-offs under severe decentralized heterogeneity rather than as centralized CIFAR-10 benchmarks. We further complement this setting with MUSMET experiments on naturally heterogeneous EEG data.

Fig.~\ref{fig:ablation_eta} evaluates the sensitivity of DMFL-SQ to the
initial event-trigger threshold $\vartheta_0$. Smaller thresholds trigger
more frequent communication and lead to faster early convergence, while
larger thresholds reduce communication but slow the initial learning
progress. The results show that DMFL-SQ remains stable across different
threshold choices, with larger thresholds eventually catching up when more
communication rounds are allowed.

Fig.~\ref{fig:consensus} shows the average training loss on non-iid
CIFAR-10. DMFL-SQ achieves a faster decrease and a lower final loss than
D-PSGD, suggesting improved optimization behavior under heterogeneous
client data. The curve remains stable despite the use of sparse-quantized
event-triggered communication.

Fig.~\ref{fig:fair1} shows the effect of fairness regularization on the
CVaR@0.1 test loss under non-iid CIFAR-10. After an initial transient, both
variants reduce tail risk as training progresses. The fairness-aware
variant with \(\rho>0\) remains slightly below the \(\rho=0\) variant for
most rounds, indicating that the agnostic mixture term helps reduce
upper-tail client loss.

Fig.~\ref{fig:client} reports the client-level test-accuracy
distribution on non-iid CIFAR-10. Each point corresponds to one client.
DMFL-SQ shifts the lower tail of the distribution upward compared with the
baselines.

\noindent\textbf{Remark:} The PAC-Bayes result is stated for bounded losses \(\ell_i(w_i;z)\in[0,1]\), as is standard in PAC-Bayes analysis. In the experiments, worst-client loss and CVaR@0.1 are computed using raw cross-entropy loss and are reported only as empirical diagnostic metrics; they are not used as the bounded loss in Theorem~\ref{thm:pacbayes}. To apply the theorem directly to cross-entropy, one may use the clipped and normalized loss $\ell_{i,B}(w_i;z)=B^{-1}\min\{\ell_i^{\rm CE}(w_i;z),B\}, $ which lies in \([0,1]\).

Fig.~\ref{fig:topo_ring_er} evaluates the sensitivity of DMFL-SQ to the
communication topology. Ring, Erdős--Rényi (ER), and random geometric graph
(RGG) topologies all show stable learning, indicating that DMFL-SQ is not
restricted to a specific graph structure. The different transient behaviors
reflect how graph connectivity affects neighbor-copy propagation and
graph-regularized coupling under non-iid data.

Fig.~\ref{fig:comm_tradeoff_m} shows the communication-accuracy
trade-off on the MUSMET dataset. Similar to the CIFAR-10 results in Fig.~\ref{fig:comm_tradeoff}, DMFL-SQ attains the highest accuracy
among the compared methods while requiring fewer communicated bits than dense decentralized baselines, showing a favorable accuracy-communication trade-off on this naturally
heterogeneous real-world dataset.

Fig.~\ref{fig:ablation_eta_m} shows the sensitivity of DMFL-SQ to the
initial event-trigger threshold $\vartheta_0$ on MUSMET. Across the tested
thresholds, the method exhibits stable learning behavior, showing that the
event-triggered mechanism remains effective on naturally heterogeneous EEG
data.

Fig.~\ref{fig:consensus_m} shows the average training loss on MUSMET.
DMFL-SQ converges faster and reaches a much lower loss than D-PSGD,
demonstrating stable optimization under naturally heterogeneous EEG data
despite sparse-quantized event-triggered communication.

Fig.~\ref{fig:train_tradeoff_m2} shows that DMFL-SQ consistently attains the highest test accuracy and converges stably compared with the considered baselines on the MUSMET dataset. 

Figs.~\ref{fig:sensitivity_rho_m} and~\ref{fig:sensitivity_alpha_m} further show that increasing the fairness weight $\rho$ mainly improves worst-client accuracy while preserving average accuracy, whereas stronger graph coupling $\alpha$ improves both metrics, with diminishing gains beyond moderate values. These results support the robustness of DMFL-SQ to its key hyperparameters.

Table~\ref{tab:ablation} shows that DMFL-SQ achieves the highest
average accuracy, worst-client accuracy, bottom-10\% accuracy, and Jain's
index and lowest standard deviation of client accuracies on the heterogeneous MUSMET dataset. Moreover, DMFL-SQ substantially
reduces the communication cost compared with both dense decentralized
baselines and compressed CHOCO-SGD, demonstrating a favorable
accuracy-fairness-communication trade-off. 

Table~\ref{tab:component_ablation} shows that each component contributes to
the overall performance of DMFL-SQ. The DMFL-SQ algorithm achieves the highest worst-client and bottom-10\%
accuracies, as well as the highest Jain's index, while requiring only
$0.6$ Mbits per client. Although removing sparsification or quantization
slightly improves the average accuracy, it substantially increases the
communication cost and reduces the tail-client performance. Similarly,
removing the fairness or graph-coupling terms degrades the worst-client
and bottom-10\% accuracies. These results demonstrate that the DMFL-SQ design provides the most favorable overall balance among average
accuracy, client fairness, and communication efficiency.

\vspace{-0.3cm}
\section{Conclusion}
\label{sec:concl}
We presented DMFL-SQ, a decentralized multi-task fair learning framework with sparsified, quantized, and event-triggered communication. The algorithm unifies personalization, fairness, and communication efficiency within a single decentralized optimization paradigm in the \emph{non-convex} regime. Our theoretical results establish an
$\mathcal{O}(T^{-1/2})$ rate in expected squared Moreau-envelope
stationarity, while explicitly accounting for compression, network topology, and triggering effects. Extensive experiments demonstrate that DMFL-SQ achieves a favorable
accuracy-fairness-communication trade-off under the considered
heterogeneous settings while reducing communication cost compared to existing baselines. 
\vspace{-0.3cm}
\appendices
\section{Proof of Theorem~\ref{thm:pacbayes} \& Corollary~\ref{cor:decentralization_compression}}
\label{app:pacbayes}

We first recall the population and empirical agnostic mixture risks: $\Psi(\mathbf w)
= \sup_{\lambda\in\Lambda}
\sum_{i=1}^n \lambda_i F_i(w_i)$, $\widehat\Psi(\mathbf w)
= \sup_{\lambda\in\Lambda}
\sum_{i=1}^n \lambda_i \widehat F_i(w_i)$. For a fixed mixture vector $\lambda\in\Lambda$, define
$ \Phi_\lambda(\mathbf w)
:= \sum_{i=1}^n \lambda_i F_i(w_i)$, $\widehat\Phi_\lambda(\mathbf w) :=
\sum_{i=1}^n \lambda_i \widehat F_i(w_i)$.

\paragraph{Case 1: $\Lambda=\Delta_n$}
Since $\lambda\mapsto \Phi_\lambda(\mathbf w)$ is linear over the simplex
$\Delta_n$, its supremum is attained at a vertex. Hence,
\[
\Psi(\mathbf w)
=
\max_{1\le i\le n} F_i(w_i),
\qquad
\widehat\Psi(\mathbf w)
=
\max_{1\le i\le n} \widehat F_i(w_i).
\]
For each $\mathbf w$, let $
i^\star(\mathbf w) \in \arg\max_{1\le i\le n} F_i(w_i)$
be a measurable selection of a worst-case client. Define the augmented
posterior and prior over pairs $(\mathbf w,i)$ by $\widetilde Q(d\mathbf w,di)
= Q(d\mathbf w)\delta_{i^\star(\mathbf w)}(di)$,
$\widetilde \Pi(d\mathbf w,di)
= \Pi(d\mathbf w)\pi(di)$, where $\pi(i)=1/n$ is the uniform distribution over clients. Then
\[
\mathbb E_{(\mathbf w,i)\sim\widetilde Q}
\!\left[F_i(w_i)\right]
=
\mathbb E_{\mathbf w\sim Q}
\!\left[\Psi(\mathbf w)\right].
\]
Moreover, since
$\widehat F_{i^\star(\mathbf w)}
\!\left(w_{i^\star(\mathbf w)}\right)
\le
\max_{1\le i\le n}\widehat F_i(w_i)
=
\widehat\Psi(\mathbf w)$,
we have
\[
\mathbb E_{(\mathbf w,i)\sim\widetilde Q}
\!\left[\widehat F_i(w_i)\right]
\le
\mathbb E_{\mathbf w\sim Q}
\!\left[\widehat\Psi(\mathbf w)\right].
\]
The KL divergence of the augmented posterior satisfies
\[
\mathrm{KL}(\widetilde Q\|\widetilde \Pi)
=
\mathrm{KL}(Q\|\Pi)+\ln n.
\] 
For every fixed pair $(\mathbf{w},i)$,
Hoeffding's lemma gives
\begin{equation}
\mathbb{E}_{\mathcal{D}}
\exp\left(
\tau_{\mathrm{PB}}
\left[
F_i(w_i)-\widehat{F}_i(w_i)
\right]
\right)
\leq
\exp\left(
\frac{\tau_{\mathrm{PB}}^2}{8m_i}
\right)
\leq
\exp\left(
\frac{\tau_{\mathrm{PB}}^2}{8m_{\min}}
\right).\nonumber
\end{equation}
Integrating with respect to the data-independent augmented prior
$\widetilde{\Pi}$ and applying the PAC--Bayes change-of-measure
inequality together with Markov's inequality yields, with probability
at least $1-\delta$, simultaneously for all augmented posteriors
$\widetilde{Q}$,
]
\vspace{-0.2cm}
\begin{align}
\mathbb{E}_{(\mathbf w,i)\sim\widetilde Q}
\!\left[F_i(w_i)\right]
&\leq
\mathbb{E}_{(\mathbf w,i)\sim\widetilde Q}
\!\left[\widehat F_i(w_i)\right]
+
\frac{
\mathrm{KL}(\widetilde Q\Vert\widetilde\Pi)
+\ln(1/\delta)
}{
\tau_{\mathrm{PB}}
}
\nonumber\\
&+
\frac{\tau_{\mathrm{PB}}}{8m_{\min}}.
\end{align}
This proves the first claim.
\paragraph{Case 2: General compact $\Lambda\subseteq\Delta_n$} Let $\mathcal N_\epsilon(\Lambda,\|\cdot\|_1)\subseteq\Lambda$
be a fixed deterministic $\epsilon$-net of $\Lambda$ in the
$\ell_1$ norm, constructed independently of the observed datasets. Since
$\ell_i(w_i;z)\in[0,1]$, we have
$F_i(w_i),\widehat F_i(w_i)\in[0,1]$. Therefore, for any fixed
$\mathbf w$, both maps $\lambda\mapsto \Phi_\lambda(\mathbf w)$, $\lambda\mapsto \widehat \Phi_\lambda(\mathbf w)$ are $1$-Lipschitz with respect to $\|\cdot\|_1$. Indeed, for any
$\lambda,\lambda'\in\Lambda$,
\[
|\Phi_\lambda(\mathbf w)-\Phi_{\lambda'}(\mathbf w)|
\le
\sum_{i=1}^n|\lambda_i-\lambda_i'|F_i(w_i)
\le
\|\lambda-\lambda'\|_1,
\]
and similarly $|\widehat \Phi_\lambda(\mathbf w)-\widehat \Phi_{\lambda'}(\mathbf w)| \le \|\lambda-\lambda'\|_1$.

For each $\mathbf w$, let
$\lambda^\star(\mathbf w)\in
\arg\max_{\lambda\in\Lambda}\Phi_\lambda(\mathbf w)$
be a measurable selection. Using a fixed deterministic
tie-breaking rule, choose
$v_\epsilon(\mathbf w)\in
\mathcal N_\epsilon(\Lambda,\|\cdot\|_1)$ such that $
\|\lambda^\star(\mathbf w)-v_\epsilon(\mathbf w)\|_1
\leq \epsilon$. By the Lipschitz property,
\[
\Psi(\mathbf w)
=
\Phi_{\lambda^\star(\mathbf w)}(\mathbf w)
\le
\Phi_{v_\varepsilon(\mathbf w)}(\mathbf w)+\varepsilon.
\]
Also, $\widehat \Phi_{v_\varepsilon(\mathbf w)}(\mathbf w)
\le \widehat\Psi(\mathbf w)$. Define the augmented posterior and prior over pairs $(\mathbf w,v)$ by
$ \widetilde Q_\varepsilon(d\mathbf w,dv)
= Q(d\mathbf w)\delta_{v_\varepsilon(\mathbf w)}(dv)$,
$\widetilde \Pi_\varepsilon(d\mathbf w,dv)
= \Pi(d\mathbf w)\pi_\varepsilon(dv)$, where $\pi_\varepsilon$ is the uniform distribution over
$\mathcal N_\varepsilon(\Lambda,\|\cdot\|_1)$. Then
\[
\mathrm{KL}(\widetilde Q_\varepsilon\|\widetilde \Pi_\varepsilon)
=
\mathrm{KL}(Q\|\Pi)
+
\ln|\mathcal N_\varepsilon(\Lambda,\|\cdot\|_1)|.
\]
For each
$v\in\mathcal N_\varepsilon(\Lambda,\|\cdot\|_1)$, define
$m_v:=m_\lambda|_{\lambda=v}$. Since the $\varepsilon$-net is contained
in $\Lambda$, the definition of $m_{\mathrm{eff}}$ gives
$m_v\ge m_{\mathrm{eff}}$.

For every fixed pair $(\mathbf{w},v)$, where
$v\in\mathcal{N}_{\epsilon}(\Lambda,\|\cdot\|_1)$,
\begin{align}
\Phi_v(\mathbf{w})-\widehat{\Phi}_v(\mathbf{w})
=
\sum_{i=1}^{n}\sum_{k=1}^{m_i}
\frac{v_i}{m_i}
\left[
F_i(w_i)-\ell_i(w_i;z_{i,k})
\right].\nonumber
\end{align}
Since the losses lie in $[0,1]$ and the client samples are mutually
independent, Hoeffding's lemma gives
\begin{align}
&\mathbb{E}_{\mathcal{D}}
\exp\left(
\tau_{\mathrm{PB}}
\left[
\Phi_v(\mathbf{w})-\widehat{\Phi}_v(\mathbf{w})
\right]
\right)
\nonumber\\
&\qquad\leq
\exp\left(
\frac{\tau_{\mathrm{PB}}^2}{8}
\sum_{i=1}^{n}\frac{v_i^2}{m_i}
\right)
=
\exp\left(
\frac{\tau_{\mathrm{PB}}^2}{8m_v}
\right),
\end{align}
where $m_v=
\left(
\sum_{i=1}^{n}\frac{v_i^2}{m_i}
\right)^{-1}$.
Since $m_v\geq m_{\mathrm{eff}}$, integrating with respect to
$\widetilde{\Pi}_{\epsilon}$ and applying the PAC-Bayes
change-of-measure inequality and Markov's inequality yields, with
probability at least $1-\delta$, simultaneously for all augmented
posteriors $\widetilde{Q}_{\epsilon}$,
\begin{align}
\mathbb{E}_{(\mathbf w,v)\sim\widetilde Q_\epsilon}
\!\left[\Phi_v(\mathbf w)\right]
\leq {}&
\mathbb{E}_{(\mathbf w,v)\sim\widetilde Q_\epsilon}
\!\left[\widehat\Phi_v(\mathbf w)\right]
+
\frac{\tau_{\mathrm{PB}}}{8m_{\mathrm{eff}}} \nonumber\\
&+
\frac{
\mathrm{KL}(\widetilde Q_\epsilon
\Vert\widetilde\Pi_\epsilon)
+\ln(1/\delta)
}{
\tau_{\mathrm{PB}}
}.
\end{align}

This proves the second claim.
\vspace{-0.2cm}
\subsection{Proof of Corollary~\ref{cor:decentralization_compression}}
\label{app:cor_decentralization_compression}

The PAC-Bayes bound in Theorem~\ref{thm:pacbayes} is
algorithm-agnostic. We now quantify the one-step perturbation of the
empirical agnostic mixture risk caused by evaluating the graph
regularizer using stale and compressed neighbor copies.

First, we establish a Lipschitz property of the empirical fairness
envelope. For any two stacked model collections
$\mathbf w=(w_1,\ldots,w_n)$ and
$\mathbf v=(v_1,\ldots,v_n)$, we have
\begin{align}
\left|
\widehat\Psi(\mathbf w)-\widehat\Psi(\mathbf v)
\right|
&=
\left|
\sup_{\lambda\in\Lambda}
\sum_{i=1}^n\lambda_i\widehat F_i(w_i)
-
\sup_{\lambda\in\Lambda}
\sum_{i=1}^n\lambda_i\widehat F_i(v_i)
\right|
\nonumber\\
&\le
\sup_{\lambda\in\Lambda}
\left|
\sum_{i=1}^n
\lambda_i
\left(
\widehat F_i(w_i)-\widehat F_i(v_i)
\right)
\right|
\nonumber\\
&\le
G_{\mathrm{lip}}
\sup_{\lambda\in\Lambda}
\sum_{i=1}^n
\lambda_i\|w_i-v_i\|.
\label{eq:empirical_envelope_lipschitz_1}
\end{align}
Since $\Lambda\subseteq\Delta_n$,
\begin{align}
\sup_{\lambda\in\Lambda}
\sum_{i=1}^n
\lambda_i\|w_i-v_i\|
&\le
\max_{i\in[n]}\|w_i-v_i\|
\le
\|\mathbf w-\mathbf v\|.
\end{align}
Consequently,
\begin{align}
\left|
\widehat\Psi(\mathbf w)-\widehat\Psi(\mathbf v)
\right|
\le
G_{\mathrm{lip}}\|\mathbf w-\mathbf v\|.
\label{eq:empirical_envelope_lipschitz}
\end{align}

Recall that the implemented direction can be decomposed as
$\mathbf h^t
=
\mathbf V^t+\mathbf q^t+\mathbf d^t$,
where $\mathbf d^t
=
\mathbf r^t-\nabla\mathcal R_{\mathcal G}(\mathbf w^t)$ is the graph-coupling approximation error. Define the exact-neighbor
shadow update
\begin{align}
\mathbf w_{\mathrm{ex}}^{t+1}
:=
\mathbf w^t-\gamma_t
\left(
\mathbf h^t-\mathbf d^t
\right).
\label{eq:exact_neighbor_shadow_update}
\end{align}
Thus, $\mathbf w_{\mathrm{ex}}^{t+1}$ starts from the same current
iterate $\mathbf w^t$ and uses the same stochastic gradients and
fairness selection as DMFL-SQ, but evaluates the graph-coupling
direction using the exact current neighbor models.

Since the DMFL-SQ update is $\mathbf w^{t+1}
=
\mathbf w^t-\gamma_t\mathbf h^t$,
we obtain
\begin{align}
\mathbf w^{t+1}
-
\mathbf w_{\mathrm{ex}}^{t+1}
=
-\gamma_t\mathbf d^t.
\label{eq:shadow_update_difference}
\end{align}
Applying~\eqref{eq:empirical_envelope_lipschitz} gives
\begin{align}
\left|
\widehat\Psi(\mathbf w^{t+1})
-
\widehat\Psi(\mathbf w_{\mathrm{ex}}^{t+1})
\right|
&\le
G_{\mathrm{lip}}\gamma_t\|\mathbf d^t\|.
\label{eq:one_step_empirical_risk_difference}
\end{align}
Taking expectations and using Jensen's inequality yields
\begin{align}
\left|
\mathbb E\!\left[
\widehat\Psi(\mathbf w^{t+1})
\right]
-
\mathbb E\!\left[
\widehat\Psi(\mathbf w_{\mathrm{ex}}^{t+1})
\right]
\right|
&\le
G_{\mathrm{lip}}\gamma_t
\mathbb E\|\mathbf d^t\|
\nonumber\\
&\le
G_{\mathrm{lip}}\gamma_t
\left(
\mathbb E\|\mathbf d^t\|^2
\right)^{1/2}.
\label{eq:expected_one_step_empirical_risk}
\end{align}

The graph-direction approximation bound gives $\|\mathbf d^t\|^2
\le
nK_{\mathcal G}\mathcal E^t$,
where $\mathcal E^t
=
\frac{1}{n}
\sum_{i=1}^n
\sum_{j\in\mathcal N_i}
a_{ij}
\left\|
w_j^t-\widehat w_{j\to i}^t
\right\|^2$.
Therefore,
\begin{align}
&\left|
\mathbb E\!\left[
\widehat\Psi(\mathbf w^{t+1})
\right]
-
\mathbb E\!\left[
\widehat\Psi(\mathbf w_{\mathrm{ex}}^{t+1})
\right]
\right|
\le
G_{\mathrm{lip}}\gamma_t
\sqrt{
nK_{\mathcal G}\,
\mathbb E[\mathcal E^t]
}.
\label{eq:one_step_residual_risk_bound}
\end{align}

Let $\tau\sim\mathrm{Unif}\{0,\ldots,T-1\}$, independently
of the algorithmic randomness. Averaging
\eqref{eq:one_step_residual_risk_bound} over
$t=0,\ldots,T-1$ gives
\begin{align}
&\left|
\mathbb E\!\left[
\widehat\Psi(\mathbf w^{\tau+1})
\right]
-
\mathbb E\!\left[
\widehat\Psi(\mathbf w_{\mathrm{ex}}^{\tau+1})
\right]
\right|
\le
\frac{
G_{\mathrm{lip}}\sqrt{nK_{\mathcal G}}
}{T}
\sum_{t=0}^{T-1}
\gamma_t
\sqrt{\mathbb E[\mathcal E^t]}.
\label{eq:uniform_shadow_average}
\end{align}

Under Assumption~\ref{ass:trigger},
$\gamma_t=\gamma/\sqrt{T}$. Hence, by the Cauchy-Schwarz
inequality,
\begin{align}
\label{eq:uniform_shadow_residual}
\left|
\mathbb E\!\left[
\widehat\Psi(\mathbf w^{\tau+1})
\right]
-
\mathbb E\!\left[
\widehat\Psi(\mathbf w_{\mathrm{ex}}^{\tau+1})
\right]
\right|
\le
\frac{
G_{\mathrm{lip}}\gamma\sqrt{nK_{\mathcal G}}
}{\sqrt{T}}
\left(
\frac{1}{T}
\sum_{t=0}^{T-1}
\mathbb E[\mathcal E^t]
\right)^{1/2}.
\end{align}

By Lemma~\ref{lem:lemma6},
\begin{align}
\frac{1}{T}
\sum_{t=0}^{T-1}
\mathbb E[\mathcal E^t]
\le
\frac{2}{\chi_0\omega T}
\left(
\mathbb E[\mathcal E^0]
+
\chi_1B_h\gamma^2
+
\chi_2\vartheta_0^2(1+\log T)
\right).
\label{eq:cor_current_residual_bound}
\end{align}
Substituting~\eqref{eq:cor_current_residual_bound} into
\eqref{eq:uniform_shadow_residual} yields
\begin{align}
&\left|
\mathbb E\!\left[
\widehat\Psi(\mathbf w^{\tau+1})
\right]
-
\mathbb E\!\left[
\widehat\Psi(\mathbf w_{\mathrm{ex}}^{\tau+1})
\right]
\right|
\nonumber\\
&\qquad\le
\frac{
G_{\mathrm{lip}}\gamma
\sqrt{2nK_{\mathcal G}}
}{
\sqrt{\chi_0\omega}\,T
}
\left(
\mathbb E[\mathcal E^0]
+
\chi_1B_h\gamma^2
+
\chi_2\vartheta_0^2(1+\log T)
\right)^{1/2}.
\label{eq:cor_final_shadow_bound}
\end{align}
Therefore, the one-step empirical agnostic mixture-risk perturbation
caused by stale, sparse, quantized, and event-triggered neighbor-model
exchange satisfies
\[
\left|
\mathbb E\!\left[
\widehat\Psi(\mathbf w^{\tau+1})
\right]
-
\mathbb E\!\left[
\widehat\Psi(\mathbf w_{\mathrm{ex}}^{\tau+1})
\right]
\right|
=
\mathcal O\!\left(
\frac{\sqrt{\log T}}{T}
\right),
\]
and hence vanishes asymptotically as $T\to\infty$.
This proves the corollary.
\hfill$\square$
\vspace{-0.2cm}
\section{Proof of Theorem~\ref{thm:main} (Convergence Analysis)}
\label{appendix:convergence}
We prove Theorem~\ref{thm:main} using the notation introduced in Section~\ref{sec:convergence}. For each $t$, define the proximal point and the associated
Moreau-envelope gradient as
$
\widetilde{\mathbf w}^{t}
:=
\operatorname{prox}_{\mu\mathcal L}(\mathbf w^t)$, $
\mathbf s^t
:=
\frac{1}{\mu}
\left(
\mathbf w^t-\widetilde{\mathbf w}^{t}
\right)
=
\nabla\mathcal L_{\mu}(\mathbf w^t)$.
\vspace{-0.2cm}
\subsection{Proof of Lemma~\ref{lem:lemma4}}
\label{app:lem4}
Combining Lemma~\ref{lem:approx_fairness_direction} with the
$L$-weak convexity of $\sum_{i=1}^n F_i(w_i)$ and the convexity
of $\mathcal R_{\mathcal G}$ gives, for every
$\mathbf u\in\mathbb R^{nd}$,
\begin{align}
\mathcal L(\mathbf u)
&\geq
\mathcal L(\mathbf w^t)
+
\left\langle
\mathbf V^t,\mathbf u-\mathbf w^t
\right\rangle
-
\frac{\kappa}{2}
\|\mathbf u-\mathbf w^t\|^2
-
\rho\varepsilon_{\Psi,t},
\label{eq:objective_approx_direction}
\end{align}
where $\kappa=(1+\rho)L$ and $\mathbf V^t$ is defined in
Assumption~\ref{ass:bounded_direction}.
Applying~\eqref{eq:objective_approx_direction} with
$\mathbf u=\widetilde{\mathbf w}^{\,t}$ gives
\begin{align}
\mathcal L(\widetilde{\mathbf w}^{\,t})
&\geq
\mathcal L(\mathbf w^t)
+
\left\langle
\mathbf V^t,
\widetilde{\mathbf w}^{\,t}-\mathbf w^t
\right\rangle
\nonumber\\
&\quad
-
\frac{\kappa}{2}
\left\|
\widetilde{\mathbf w}^{\,t}-\mathbf w^t
\right\|^2
-
\rho\varepsilon_{\Psi,t}.
\label{eq:proof_approx_weak}
\end{align}
On the other hand, by the definition of the proximal point and by
using $\mathbf w^t$ as a feasible candidate,
\begin{equation}
\mathcal L(\widetilde{\mathbf w}^{\,t})
+
\frac{1}{2\mu}
\left\|
\widetilde{\mathbf w}^{\,t}-\mathbf w^t
\right\|^2
\leq
\mathcal L(\mathbf w^t).
\label{eq:proof_prox_optimality}
\end{equation}
Therefore,
\begin{equation}
\mathcal L(\mathbf w^t)
-
\mathcal L(\widetilde{\mathbf w}^{\,t})
\geq
\frac{1}{2\mu}
\left\|
\widetilde{\mathbf w}^{\,t}-\mathbf w^t
\right\|^2.
\label{eq:proof_prox_decrease}
\end{equation}

Rearranging \eqref{eq:proof_approx_weak} and using
\eqref{eq:proof_prox_decrease}, we obtain
\begin{align}
\left\langle
\mathbf V^t,
\mathbf w^t-\widetilde{\mathbf w}^{\,t}
\right\rangle
&\geq
\left(
\frac{1}{2\mu}
-
\frac{\kappa}{2}
\right)
\left\|
\mathbf w^t-\widetilde{\mathbf w}^{\,t}
\right\|^2
-
\rho\varepsilon_{\Psi,t}.
\end{align}
Substituting
$\mathbf w^t-\widetilde{\mathbf w}^{\,t}
=\mu\mathbf s^t$ yields
\begin{equation}
\left\langle
\mathbf V^t,\mathbf s^t
\right\rangle
\geq
\frac{1-\kappa\mu}{2}
\|\mathbf s^t\|^2
-
\frac{\rho}{\mu}
\varepsilon_{\Psi,t}.
\label{eq:proof_direction_stationarity}
\end{equation}

The vector form of the algorithmic update is
\begin{equation}
\mathbf w^{t+1}
=
\mathbf w^t-\gamma_t\mathbf h^t,
\qquad
\mathbf h^t
=
\mathbf V^t+\mathbf q^t+\mathbf d^t.
\label{eq:proof_vector_update}
\end{equation}
Using $\widetilde{\mathbf w}^{\,t}$ as a feasible candidate in the
definition of $\mathcal L_\mu(\mathbf w^{t+1})$, we have
\begin{align}
\mathcal L_\mu(\mathbf w^{t+1})
&\leq
\mathcal L(\widetilde{\mathbf w}^{\,t})
+
\frac{1}{2\mu}
\left\|
\widetilde{\mathbf w}^{\,t}
-
\mathbf w^{t+1}
\right\|^2
\nonumber\\
&=
\mathcal L(\widetilde{\mathbf w}^{\,t})
+
\frac{1}{2\mu}
\left\|
-\mu\mathbf s^t+\gamma_t\mathbf h^t
\right\|^2
\nonumber\\
&=
\mathcal L_\mu(\mathbf w^t)
-
\gamma_t
\left\langle
\mathbf s^t,\mathbf h^t
\right\rangle
+
\frac{\gamma_t^2}{2\mu}
\|\mathbf h^t\|^2.
\label{eq:proof_envelope_upper}
\end{align}

Recall that $\mathcal H_t$ contains the history
$\mathcal F_t$ and the fairness-selection mini-batches at round
$t$. Conditional on $\mathcal H_t$, the quantities
$\mathbf V^t$, $\mathbf d^t$, $\mathbf s^t$, and
$\varepsilon_{\Psi,t}$ are measurable, while
$\mathbb E
\left[
\mathbf q^t
\,\middle|\,
\mathcal H_t
\right]
=
\mathbf 0.$
Consequently,
$
\mathbb E
\left[
\left\langle
\mathbf s^t,\mathbf q^t
\right\rangle
\,\middle|\,
\mathcal H_t
\right]
=
0$. Taking the conditional expectation of
\eqref{eq:proof_envelope_upper} and using
\eqref{eq:proof_direction_stationarity} gives
\begin{align}
\mathbb E
\left[
\mathcal L_\mu(\mathbf w^{t+1})
\,\middle|\,
\mathcal H_t
\right]
&\leq
\mathcal L_\mu(\mathbf w^t)
-
\frac{1-\kappa\mu}{2}
\gamma_t
\|\mathbf s^t\|^2
+
\frac{\rho\gamma_t}{\mu}
\varepsilon_{\Psi,t}
\nonumber\\
&
-
\gamma_t
\left\langle
\mathbf s^t,\mathbf d^t
\right\rangle
+
\frac{\gamma_t^2}{2\mu}
\mathbb E
\left[
\|\mathbf h^t\|^2
\,\middle|\,
\mathcal H_t
\right].
\label{eq:proof_conditional_descent}
\end{align}

By Young's inequality,
\begin{equation}
-
\left\langle
\mathbf s^t,\mathbf d^t
\right\rangle
\leq
\frac{1-\kappa\mu}{4}
\|\mathbf s^t\|^2
+
\frac{1}{1-\kappa\mu}
\|\mathbf d^t\|^2.
\label{eq:proof_young_stale}
\end{equation}
Combining this with \eqref{eq:proof_conditional_descent} yields
\begin{align}
\mathbb E
\left[
\mathcal L_\mu(\mathbf w^{t+1})
\,\middle|\,
\mathcal H_t
\right]
&\leq
\mathcal L_\mu(\mathbf w^t)
-
\frac{1-\kappa\mu}{4}
\gamma_t
\|\mathbf s^t\|^2
+
\frac{\rho\gamma_t}{\mu}
\varepsilon_{\Psi,t}
\nonumber\\
&\quad
+
\frac{\gamma_t}{1-\kappa\mu}
\|\mathbf d^t\|^2
+
\frac{\gamma_t^2}{2\mu}
\mathbb E
\left[
\|\mathbf h^t\|^2
\,\middle|\,
\mathcal H_t
\right].
\label{eq:proof_after_young}
\end{align}

Using
$\mathbf h^t=\mathbf V^t+\mathbf d^t+\mathbf q^t$ and the
conditional zero-mean property of $\mathbf q^t$, we obtain
\begin{align}
\mathbb E
\left[
\|\mathbf h^t\|^2
\,\middle|\,
\mathcal H_t
\right]
&=
\|\mathbf V^t+\mathbf d^t\|^2
+
\mathbb E
\left[
\|\mathbf q^t\|^2
\,\middle|\,
\mathcal H_t
\right]
\nonumber\\
&\leq
2\|\mathbf V^t\|^2
+
2\|\mathbf d^t\|^2
+
\mathbb E
\left[
\|\mathbf q^t\|^2
\,\middle|\,
\mathcal H_t
\right].
\label{eq:proof_direction_second_moment}
\end{align}
Furthermore, by Assumption~\ref{ass:variance},
\begin{align}
\mathbb E
\left[
\|\mathbf q^t\|^2
\,\middle|\,
\mathcal H_t
\right]
&=
\sum_{i=1}^n
\left(1+\rho\widehat{\lambda}_i^t\right)^2
\mathbb E
\left[
\left\|
g_i^t-\nabla F_i(w_i^t)
\right\|^2
\,\middle|\,
\mathcal H_t
\right]
\nonumber\\
&\leq
(1+\rho)^2
\sum_{i=1}^n\sigma_i^2
\nonumber\\
&=
n(1+\rho)^2\bar{\sigma}^2.
\label{eq:proof_noise_variance}
\end{align}

Substituting \eqref{eq:proof_direction_second_moment} and
\eqref{eq:proof_noise_variance} into
\eqref{eq:proof_after_young} gives
\begin{align}
\mathbb E
\left[
\mathcal L_\mu(\mathbf w^{t+1})
\,\middle|\,
\mathcal H_t
\right]
&\leq
\mathcal L_\mu(\mathbf w^t)
-
\frac{1-\kappa\mu}{4}
\gamma_t
\|\mathbf s^t\|^2
+
\frac{\rho\gamma_t}{\mu}
\varepsilon_{\Psi,t}
\nonumber\\
&\quad
+
\left(
\frac{\gamma_t}{1-\kappa\mu}
+
\frac{\gamma_t^2}{\mu}
\right)
\|\mathbf d^t\|^2
+
\frac{\gamma_t^2}{\mu}
\|\mathbf V^t\|^2
\nonumber\\
&\quad
+
\frac{n(1+\rho)^2\gamma_t^2}{2\mu}
\bar{\sigma}^2.
\label{eq:proof_before_total_expectation}
\end{align}

Finally, taking the total expectation, using $\mathbb E\|\mathbf V^t\|^2\leq H^2$, and recalling that
$\mathbf s^t=\nabla\mathcal L_\mu(\mathbf w^t)$ proves
\eqref{eq:moreau_one_step_descent}.
\vspace{-0.3cm}
\subsection{Proof of Lemma~\ref{lem:lemma5}}
\label{app:lem5}
Let $
\bar d_{\mathcal G}
:=
\max_{i}
\sum_{j\in\mathcal N_i}a_{ij} $
denote the maximum weighted degree. Using the communication-copy
error $e_i^t=w_i^t-\widehat w_i^t$, the common-copy property of the
broadcast communication mechanism gives
\vspace{-0.1cm}
\begin{equation}
\mathcal E^t
=
\frac{1}{n}
\sum_{i=1}^n
d_i^{\mathcal G}\|e_i^t\|^2,
\qquad
d_i^{\mathcal G}
:=
\sum_{j:i\in\mathcal N_j}a_{ji}
\leq
\bar d_{\mathcal G}.
\label{eq:residual_weighted_copy_error}
\end{equation}

After the local update, the innovation relative to the most recently
reconstructed copy is
\begin{equation}
s_i^{t+1}
=
w_i^{t+1}-\widehat w_i^t
=
e_i^t-\gamma_t h_i^t.
\label{eq:proof_copy_innovation}
\end{equation}

Suppose first that the event-trigger condition is satisfied. Client
$i$ transmits
$c_i^{t}
=
\mathcal C(s_i^{t+1}),
$
and hence
$e_i^{t+1}
=
s_i^{t+1}-\mathcal C(s_i^{t+1})$.
Assumption~\ref{ass:compression} therefore gives
\begin{equation}
\mathbb E
\left[
\|e_i^{t+1}\|^2
\,\middle|\,
s_i^{t+1}
\right]
\leq
(1-\omega)\|s_i^{t+1}\|^2.
\label{eq:proof_compression_contraction}
\end{equation}
For $0<\omega<1$, applying Young's inequality with
$\eta=\omega/[2(1-\omega)]$ yields
\begin{align}
(1-\omega)
\|e_i^t-\gamma_t h_i^t\|^2
&\leq
\left(1-\frac{\omega}{2}\right)
\|e_i^t\|^2
+
\frac{2\gamma_t^2}{\omega}
\|h_i^t\|^2.
\label{eq:proof_triggered_residual}
\end{align}
The same inequality holds trivially when $\omega=1$. Consequently,
in the triggered case,
\begin{align}
\mathbb E\|e_i^{t+1}\|^2
&\leq
\left(1-\frac{\omega}{2}\right)
\mathbb E\|e_i^t\|^2
+
\frac{2\gamma_t^2}{\omega}
\mathbb E\|h_i^t\|^2.
\label{eq:proof_triggered_case}
\end{align}

If the event-trigger condition is not satisfied, no message is sent,
so that
$e_i^{t+1}=s_i^{t+1}$.
The trigger rule implies $\|e_i^{t+1}\|^2
=
\|s_i^{t+1}\|^2
<
\vartheta_t^2$. Since the remaining terms below are nonnegative, this further gives
\begin{align}
\|e_i^{t+1}\|^2
&\leq
\left(1-\frac{\omega}{2}\right)
\|e_i^t\|^2
+
\frac{2\gamma_t^2}{\omega}
\|h_i^t\|^2
+
\vartheta_t^2.
\label{eq:proof_nontriggered_case}
\end{align}

Combining the triggered and non-triggered cases, multiplying by
$d_i^{\mathcal G}$, summing over the clients, and using
$d_i^{\mathcal G}\leq\bar d_{\mathcal G}$ gives
\begin{align}
\mathbb E[\mathcal E^{t+1}]
&\leq
\left(1-\frac{\omega}{2}\right)
\mathbb E[\mathcal E^t]
+
\frac{2\bar d_{\mathcal G}}{\omega}
\gamma_t^2
\frac{1}{n}
\mathbb E\|\mathbf h^t\|^2
+
\bar d_{\mathcal G}\vartheta_t^2.
\label{eq:communication_residual_raw}
\end{align}
Thus, \eqref{eq:communication_residual1} holds with
$
\chi_0=\frac{1}{2},
\quad
\chi_1=\frac{2\bar d_{\mathcal G}}{\omega},
\quad
\chi_2=\bar d_{\mathcal G}$.

It remains to relate the graph-direction approximation error to the
communication residual. From
\eqref{eq:graph_direction_error_expanded} and the weighted
Cauchy-Schwarz inequality,
\begin{align}
\|d_i^t\|^2
&=
\alpha^2
\left\|
\sum_{j\in\mathcal N_i}
a_{ij}
\left(
w_j^t-\widehat w_{j\to i}^t
\right)
\right\|^2
\nonumber\\
&\leq
\alpha^2
\left(
\sum_{j\in\mathcal N_i}a_{ij}
\right)
\sum_{j\in\mathcal N_i}
a_{ij}
\left\|
w_j^t-\widehat w_{j\to i}^t
\right\|^2
\nonumber\\
&\leq
\alpha^2\bar d_{\mathcal G}
\sum_{j\in\mathcal N_i}
a_{ij}
\left\|
w_j^t-\widehat w_{j\to i}^t
\right\|^2.
\end{align}
Summing over $i$ yields
$\|\mathbf d^t\|^2
\leq
n\alpha^2\bar d_{\mathcal G}\mathcal E^t$.
Therefore, \eqref{eq:graph_error_residual_relation} holds with $
K_{\mathcal G} =
\alpha^2\bar d_{\mathcal G}$.
This completes the proof.
\hfill\(\square\)

\subsection{Proof of Lemma~\ref{lem:lemma6}}
\label{app:lem6}

From \eqref{eq:closed_residual_recursion},
\begin{align}
\mathbb E[\mathcal E^{t+1}]
&\leq
\left(
1-\chi_0\omega
+
2\chi_1K_{\mathcal G}\gamma_t^2
\right)
\mathbb E[\mathcal E^t]
+
\chi_1B_h\gamma_t^2
+
\chi_2\vartheta_t^2.\nonumber
\label{eq:proof_closed_residual}
\end{align}
Since $\gamma_t^2=\gamma^2/T\leq\gamma^2$ and
$2\chi_1K_{\mathcal G}\gamma^2\leq\chi_0\omega/2$, we have
\[
1-\chi_0\omega
+
2\chi_1K_{\mathcal G}\gamma_t^2
\leq
1-\frac{\chi_0\omega}{2}.
\]
Therefore,
\begin{align}
\mathbb E[\mathcal E^{t+1}]
&\leq
\left(
1-\frac{\chi_0\omega}{2}
\right)
\mathbb E[\mathcal E^t]
+
\chi_1B_h\gamma_t^2
+
\chi_2\vartheta_t^2.
\label{eq:proof_contractive_residual}
\end{align}

Rearranging and summing over $t=0,\ldots,T-1$ gives
\begin{align}
\frac{\chi_0\omega}{2}
\sum_{t=0}^{T-1}
\mathbb E[\mathcal E^t]
&\leq
\mathbb E[\mathcal E^0]
-
\mathbb E[\mathcal E^T]
+
\chi_1B_h
\sum_{t=0}^{T-1}\gamma_t^2
+
\chi_2
\sum_{t=0}^{T-1}\vartheta_t^2
\nonumber\\
&\leq
\mathbb E[\mathcal E^0]
+
\chi_1B_h\gamma^2
+
\chi_2\vartheta_0^2(1+\log T),
\label{eq:proof_residual_sum}
\end{align}
where we used
$\sum_{t=0}^{T-1}\gamma_t^2=\gamma^2$ and
$\sum_{t=0}^{T-1}(t+1)^{-1}\leq1+\log T$.
Dividing \eqref{eq:proof_residual_sum} by
$\chi_0\omega T/2$ yields
\begin{align}
\frac{1}{T}
\sum_{t=0}^{T-1}
\mathbb E[\mathcal E^t]
&\leq
\frac{2}{\chi_0\omega T}
\left[
\mathbb E[\mathcal E^0]
+
\chi_1B_h\gamma^2
+
\chi_2\vartheta_0^2(1+\log T)
\right],
\end{align}
which proves \eqref{eq:communication_residual_average}.
\hfill\(\square\)
\vspace{-0.1cm}
\subsection{Proof of Theorem~\ref{thm:main}}
\label{app:proof_main}
Summing the one-step inequality
\eqref{eq:moreau_one_step_descent} over
$t=0,\ldots,T-1$ and using
$\mathcal L_\mu(\mathbf w^T)\geq\mathcal L^\star$ gives
\begin{align}
\frac{1-\kappa\mu}{4}
\sum_{t=0}^{T-1}
\gamma_t
\mathbb E
\left[
\left\|
\nabla\mathcal L_\mu(\mathbf w^t)
\right\|^2
\right]
&\leq
\Delta_\mu^0
+
\frac{\rho}{\mu}
\sum_{t=0}^{T-1}
\gamma_t\mathbb E[\varepsilon_{\Psi,t}]
\nonumber\\
&\quad
+
\sum_{t=0}^{T-1}
\left(
\frac{\gamma_t}{1-\kappa\mu}
+
\frac{\gamma_t^2}{\mu}
\right)
\mathbb E\|\mathbf d^t\|^2\nonumber\\
&\quad +
\frac{n(1+\rho)^2\bar{\sigma}^2}{2\mu}
\sum_{t=0}^{T-1}\gamma_t^2
\nonumber\\
&\quad
+
\frac{H^2}{\mu}
\sum_{t=0}^{T-1}\gamma_t^2,
\label{eq:proof_main_summed}
\end{align}
where
$\Delta_\mu^0
=\mathcal L_\mu(\mathbf w^0)-\mathcal L^\star$.

Since $\gamma_t=\gamma/\sqrt T$, dividing
\eqref{eq:proof_main_summed} by
$nT\gamma_t(1-\kappa\mu)/4$ and using
$\mathcal S_\mu^t
=n^{-1}\|\nabla\mathcal L_\mu(\mathbf w^t)\|^2$
yields
\begin{align}
\frac{1}{T}
\sum_{t=0}^{T-1}
\mathbb E[\mathcal S_\mu^t]
&\leq
\frac{4\Delta_\mu^0}
{n(1-\kappa\mu)\gamma\sqrt T}
+
\frac{4\rho}
{n\mu(1-\kappa\mu)}
\frac{1}{T}
\sum_{t=0}^{T-1}
\mathbb E[\varepsilon_{\Psi,t}]
\nonumber\\
&\quad
+
\frac{4}
{n(1-\kappa\mu)}
\left(
\frac{1}{1-\kappa\mu}
+
\frac{\gamma}{\mu\sqrt T}
\right)
\frac{1}{T}
\sum_{t=0}^{T-1}
\mathbb E\|\mathbf d^t\|^2
\nonumber\\
&\quad
+
\frac{4\gamma H^2}
{n\mu(1-\kappa\mu)\sqrt T}
+
\frac{2\gamma(1+\rho)^2\bar{\sigma}^2}
{\mu(1-\kappa\mu)\sqrt T}.
\label{eq:proof_main_pre_bounds}
\end{align}

By the tower property and
Assumption~\ref{ass:selection_accuracy},
\begin{equation}
\frac{1}{T}
\sum_{t=0}^{T-1}
\mathbb E[\varepsilon_{\Psi,t}]
\leq
\frac{B_\Psi}{\sqrt T}.
\label{eq:proof_fairness_average}
\end{equation}
Moreover, Lemma~\ref{lem:lemma5} gives
$\|\mathbf d^t\|^2
\leq nK_{\mathcal G}\mathcal E^t$, and hence
\begin{equation}
\frac{1}{T}
\sum_{t=0}^{T-1}
\mathbb E\|\mathbf d^t\|^2
\leq
nK_{\mathcal G}
\frac{1}{T}
\sum_{t=0}^{T-1}
\mathbb E[\mathcal E^t].
\label{eq:proof_graph_error_average}
\end{equation}

Substituting \eqref{eq:proof_fairness_average} and
\eqref{eq:proof_graph_error_average} into
\eqref{eq:proof_main_pre_bounds}, and then applying
Lemma~\ref{lem:lemma6}, yields
\begin{align}
\frac{1}{T}
\sum_{t=0}^{T-1}
\mathbb E[\mathcal S_\mu^t]
&\leq
\frac{4\Delta_\mu^0}
{n(1-\kappa\mu)\gamma\sqrt T}
+
\frac{4\rho B_\Psi}
{n\mu(1-\kappa\mu)\sqrt T}
\nonumber\\
&\quad
+
\frac{4\gamma H^2}
{n\mu(1-\kappa\mu)\sqrt T}
+
\frac{2\gamma(1+\rho)^2\bar{\sigma}^2}
{\mu(1-\kappa\mu)\sqrt T}
\nonumber\\
&\quad
+
\frac{8K_{\mathcal G}}
{\chi_0\omega T(1-\kappa\mu)}
\left(
\frac{1}{1-\kappa\mu}
+
\frac{\gamma}{\mu\sqrt T}
\right)
\nonumber\\
&\qquad\qquad\times
\left[
\mathbb E[\mathcal E^0]
+
\chi_1B_h\gamma^2
+
\chi_2\vartheta_0^2(1+\log T)
\right].\nonumber
\end{align}
This is precisely \eqref{eq:main_convergence_bound}. Since all
problem-dependent constants are independent of $T$, the first four
terms are $\mathcal O(T^{-1/2})$, while the communication term is
$\mathcal O((1+\log T)/T)$. Therefore,
\[
\frac{1}{T}
\sum_{t=0}^{T-1}
\mathbb E[\mathcal S_\mu^t]
=
\mathcal O(T^{-1/2})
+
\mathcal O\left(\frac{\log T}{T}\right).
\]
This completes the proof.
\hfill\(\square\)
\bibliographystyle{IEEEtran}
\bibliography{strings}
\end{document}